\documentclass[lettersize,journal]{IEEEtran}
\usepackage{amsmath,amsfonts}
\usepackage{algorithmic}
\usepackage{algorithm}
\usepackage{array}
\usepackage{textcomp}
\usepackage{stfloats}
\usepackage{url}
\usepackage{verbatim}
\usepackage{graphicx}
\usepackage{cite}
\hyphenation{op-tical net-works semi-conduc-tor IEEE-Xplore}
\usepackage{amssymb,amsthm}
\usepackage{epsfig,subfigure}
\usepackage{epsfig} 
\usepackage{amsmath} 
\usepackage{bm} 
\usepackage{amsthm} 
\usepackage{siunitx}  
\usepackage{booktabs}
\usepackage{amscd} 
\usepackage{mathtools} 
\usepackage{mathrsfs}
\usepackage{psfrag}
\usepackage{makecell}
\usepackage{soul}
\usepackage{color}
\usepackage{etoolbox} 
\usepackage{xcolor}

\def\tsc#1{\csdef{#1}{\textsc{\lowercase{#1}}\xspace}}
\tsc{WGM}
\tsc{QE}
\newtheorem{definition}{Definition}
\newtheorem{assumption}{Assumption}
\newcommand{\rr}{\mathbb{R}}

\newtheorem{theorem}{Theorem} 
\newtheorem{lemma}{Lemma}
\newtheorem{remark}{Remark}
\newtheorem{corollary}{Corollary}  
\makeatletter
\pretocmd\@bibitem{\color{black}\csname keycolor#1\endcsname}{}{\fail}
\newcommand\citecolor[1]{\@namedef{keycolor#1}{\color{blue}}}

\def\0{{\bf 0}}

\def\E{\mathbb{R}}

\begin{document}
\bstctlcite{IEEEexample:BSTcontrol}

\title{Safe and Nonconservative Contingency Planning for Autonomous Vehicles via Online Learning-Based Reachable Set Barriers}
\author{Rui Yang, Lei Zheng, Shuzhi Sam Ge, \textit{Fellow, IEEE,} and Jun Ma, \textit{Senior Member, IEEE} 
  \thanks{
    Rui Yang and Jun Ma are with The Hong Kong University of Science and Technology, China (e-mail: ryang253@connect.hkust-gz.edu.cn; jun.ma@ust.hk).}
\thanks{Lei Zheng and Shuzhi Sam Ge are with the Department of Electrical and Computer Engineering, National University of Singapore, Singapore 117576 (e-mail: zack.zheng@nus.edu.sg; samge@nus.edu.sg).}
} 



\maketitle

\begin{abstract}

Autonomous vehicles must navigate dynamically uncertain environments while balancing safety and efficiency.
This challenge is exacerbated by unpredictable human-driven vehicle (HV) behaviors and perception inaccuracies, necessitating planners that adapt to evolving uncertainties while maintaining safe trajectories. 
Overly conservative planning degrades driving efficiency, while deterministic methods risk failure in unexpected scenarios. 
To address these issues, we propose a real-time contingency trajectory optimization framework. 
Our method employs event-triggered online learning of HV control-intent sets to dynamically quantify multimodal HV uncertainties and incrementally refine their forward reachable sets (FRSs). 
Crucially, we enforce invariant safety through FRS-based barrier constraints that ensure safety without reliance on accurate trajectory prediction.
These constraints are seamlessly embedded in contingency trajectory optimization and solved efficiently through consensus alternating direction method of multipliers (ADMM). 
The system continuously adapts to HV behavioral uncertainties, preserving feasibility and safety without excessive conservatism. 
High-fidelity simulations on highway and urban scenarios, along with a series of real-world experiments, demonstrate significant improvements in driving efficiency and passenger comfort while maintaining safety under uncertainty.
The project page is available at \url{https://pathetiue.github.io/frscp.github.io/}.

\end{abstract}

\begin{IEEEkeywords}
Autonomous vehicles, trajectory optimization, reachability analysis, contingency planning.
\end{IEEEkeywords}

\vspace{-2mm}
\section{Introduction}
\IEEEPARstart{E}{nsuring} the safety of autonomous vehicles in dynamic, uncertain environments is paramount. This requires strict safety guarantees without unnecessarily compromising driving efficiency~\cite{chen2022milestones, zheng2024barrier}. A core challenge lies in accurately modeling and adapting to the uncertain, time-varying behaviors of surrounding human-driven vehicles (HVs), compounded by perception inaccuracies~\cite{benciolini2024active, mustafa2024racp}. Unanticipated contingency events, such as abrupt lane changes or sudden acceleration/deceleration, can force disruptive maneuvers and undermine planning feasibility, posing significant risks~\cite{zhang2025automated, zhang2024interaction, zheng2024}. Like expert drivers who adapt their caution based on perceived intent (e.g., giving more space to aggressive vehicles), autonomous vehicles must interpret and respond to evolving uncertainties online to adaptively maintain safety and efficiency. Yet achieving such a flexible balance remains elusive in practice: planners that ignore uncertainty can underestimate risks, whereas worst-case formulations become so conservative that efficiency suffers \cite{althoff2016set, gharavi2024proactive, ge2005queues}. Consequently, ensuring the persistent existence of safe trajectories while maintaining high driving efficiency under uncertainty remains an unresolved problem, particularly when adaptive responsiveness to uncertainties is essential.

Stochastic optimization methods, such as chance constrained approaches, address this problem by permitting small, predefined collision probabilities~\cite{zhu2019chance, nair2024predictive}. However, exact chance constraint evaluation is computationally intractable~\cite{de2023scenario}, leading to common approximations with Gaussian uncertainty models~\cite{zhang2024interaction, brudigam2021stochastic}. Recent advances, such as chance-constrained nonlinear model predictive control (NMPC) for multimodal obstacle behaviors~\cite{ren2024recursively}, ensure recursive feasibility and safety across the entire planning
horizon. Nevertheless, these methods still face fundamental limitations: Gaussian approximations struggle to capture the inherently non-Gaussian nature of real-world driving behaviors~\cite{wang2020non}, and even minor constraint violations can lead to hazardous outcomes in unexpected situations. Scenario optimization offers a sampling-based alternative~\cite{de2021scenario}, but demands large sample sets to address low-probability, high-risk events, which hinder its practical application in the real world.

To overcome these limitations, robust optimization enforces strict safety by requiring constraint satisfaction under all possible uncertainty realizations within bounded sets \cite{ben1998robust, carrizosa2024safe, rahimi2022robust}. This is typically achieved using forward reachability analysis in motion planning~\cite{seo2022real, manzinger2020using}. For example, prior works \cite{althoff2016set, koschi2020set} compute the forward reachable sets (FRSs) for traffic participants while accounting for road network constraints and worst-case control capabilities. Similarly, a robust NMPC framework is designed using HV FRS to guarantee safety under motion uncertainties of the HV~\cite{batkovic2023experimental}. To ensure recursive feasibility, methods such as~\cite{skibik2023mpc, pek2018computationally} maintain fail-safe trajectories over infinite horizons, allowing the ego vehicle (EV) to fall back to emergency maneuvers when needed\cite{pek2020fail}. In \cite{benciolini2024safe}, a safe backup trajectory is computed using FRS and enforced if the primary stochastic NMPC optimization becomes infeasible. However, sudden transitions to backup trajectories may degrade passenger comfort.

While reachability-based methods offer robust safety guarantees, control barrier function (CBF) provides an alternative approach through forward invariance~\cite{ejaz2024trust}. The forward invariance of safety sets is closely related to the recursive feasibility for safe trajectories~\cite{fang2022recursive, kerrigan2000invariant}, which can be naturally encoded via barrier function constraints \cite{zeng2021safety}. Recent advances integrate reachability analysis with CBFs to certify forward-invariant safety despite uncertainty \cite{choi2021robust, kumar2023fast}. For instance, FRS predictions are employed to construct CBFs to handle uncertainties in HV behaviors~\cite{kim2024safety}. While these CBF-based methods provide strong safety guarantees, they often remain overly conservative, especially in dense traffic scenarios, where worst-case assumptions lead to infeasible or inefficient motion plans.

To reduce conservatism without sacrificing safety, recent research explores learning-based approaches to refine uncertainty estimates. Uncertainty sets are constructed as subsets of worst-case forward reachable space to enable safer yet less conservative planning~\cite{gao2021risk, wang2024reachability}. When sufficient prior data is available, FRS predictions can be refined through online updates~\cite{bansal2020hamilton, driggs2018robust}. Learning techniques such as self-supervised neural networks\cite{xiang2024convex}, Gaussian processes~\cite{cao2021estimating, devonport2020data}, and active learning~\cite{chakrabarty2020active} show promise, yet often rely on extensive offline training or face challenges in real-time adaptation for multiple HVs. For online learning efficiency, polytopic control-intent sets are learned to predict FRSs of dynamic obstacles~\cite{zhou2025robusta}, while ellipsoidal representations are employed to accelerate computation~\cite{zhou2025robustb}. However, most of these works do not directly address the recursive feasibility problem under uncertainty. Furthermore, planning solely based on the FRSs of HVs can still be conservative, as it requires a single trajectory to perform well under all possible behaviors, which inherently couples safety with efficiency and leads to excessive conservatism.

Complementing these approaches, contingency planning offers a balanced approach by optimizing nominal behavior while maintaining flexibility through backup plans~\cite{hardy2013contingency, chen2022interactive}. These methods jointly optimize over a primary horizon for performance and a parallel horizon to handle contingencies~\cite{zhan2016non, mustafa2024racp}. For example, contingency planning is applied to account for model uncertainties and reduce the conservatism of worst-case FRS prediction in~\cite{chen2023invariant}. Nevertheless, the non-convex nature of safety requirements in multiple horizons and inherent nonholonomic kinematic constraints pose significant computational challenges for real-time implementation~\cite{ma2022alternating}

Recent advances in trajectory optimization for autonomous driving have employed distributed optimization techniques to address the computational bottlenecks. Specifically, the alternating direction method of multipliers (ADMM) is employed to maintain feasibility under constraints while achieving real-time performance\cite{adajania2022multi, ma2022alternating}. Particularly effective for separable biconvex optimization problems, ADMM-based approaches leverage parallel computing capabilities to facilitate high computational efficiency, enabling rapid response to changing environments~\cite{zheng2024barrier, zheng2025occlusion}.

\begin{figure}[tb]
\begin{center}
\includegraphics[width=0.85\columnwidth]{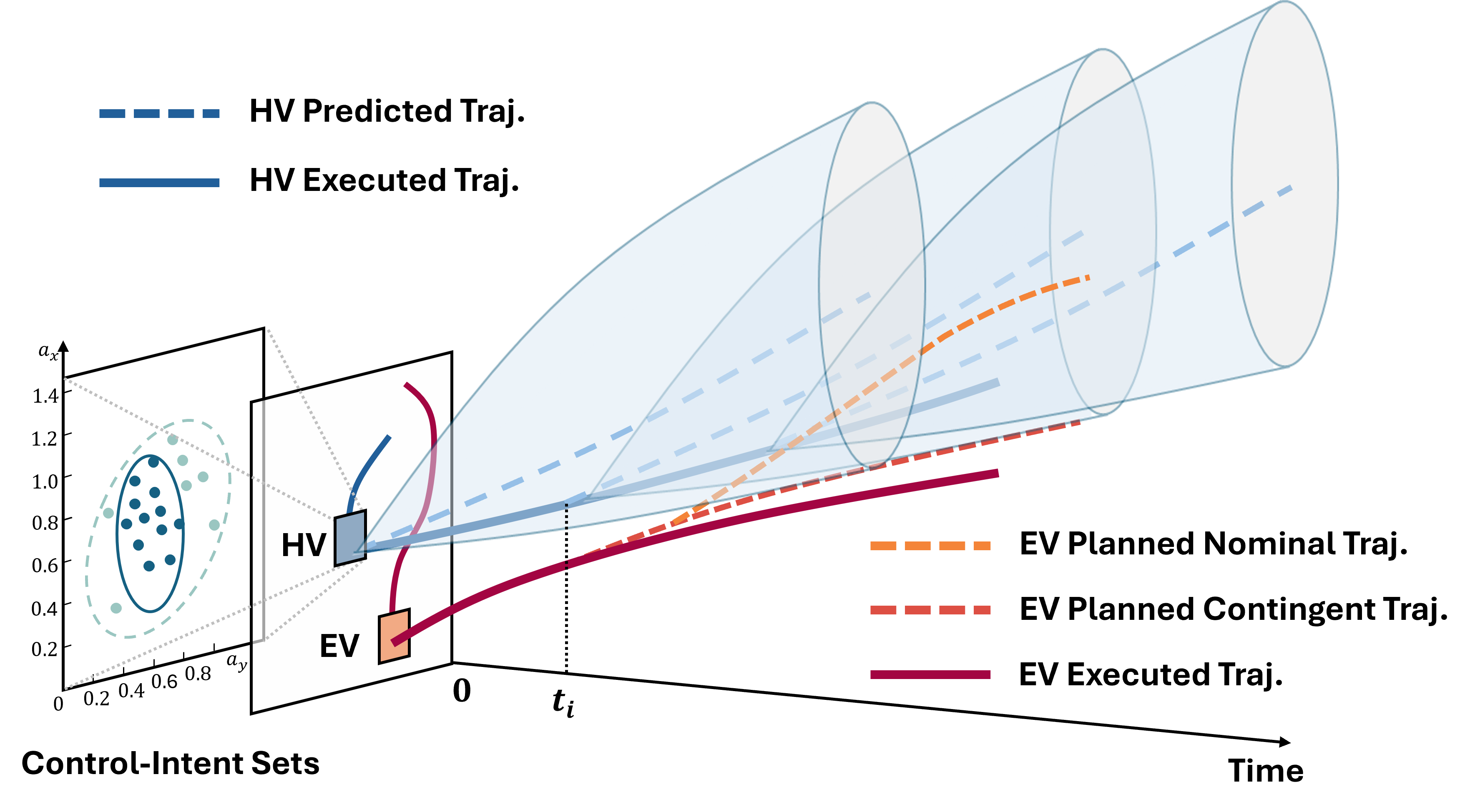}    \vspace{-3mm}
\caption{  
Overview of the proposed safe contingency trajectory planning framework. The method incrementally refines the control-intent sets of HVs through event-triggered online learning, enabling more accurate FRS predictions (light blue tubes) that adapt to evolving uncertainties in HV behaviors. At each time step $t_i$, the planner jointly optimizes a nominal performance-driven trajectory and a contingency trajectory enforced by FRS-based barrier constraints, ensuring safety while maintaining comfort and balanced efficiency.
} \vspace{-7mm}
\label{fig:architecture}
\end{center}
\end{figure} 

In this study, we propose a real-time contingency trajectory optimization framework that features an algorithmic co-design of constraint formulation, planning architecture, and numerical solving.
The overview of the proposed framework is shown in Fig.~\ref{fig:architecture}. 
This approach ensures feasible, safe trajectories while adaptively managing uncertainty to avoid excessive conservatism. Specifically, the main contributions of this work are summarized as follows:
\begin{itemize}
        
        \item Propose a novel safety-preserving framework that integrates FRS with discrete barrier functions to ensure robust forward invariance of the safety set. This framework dynamically refines safety constraints via event-triggered online learning of HV control-intent sets, thereby addressing behavioral uncertainty and ensuring collision-free maneuvers without reliance on accurate trajectory predictions.
        
         \item Develop a contingency trajectory optimization scheme that ensures safe trajectories with recursive feasibility. This framework co-optimizes a nominal performance-driven trajectory and a contingency trajectory with FRS-based barrier constraints, which ensures planning outcomes toward regions where safe maneuvers exist, enabling defensive driving while maintaining efficiency.
    
        \item Exploit the specific bi-convex structure of the barrier constraints to decompose the non-convex contingency planning problem into tractable QP subproblems via consensus ADMM. This algorithmic co-design enables efficient computation of feasible trajectories, facilitating rapid response to uncertain driving conditions.
        
        \item Validate the proposed approach through high-fidelity simulations on synthetic and real-world traffic datasets and hardware experiments. The results demonstrate its ability to balance safety and efficiency while improving passenger comfort, with hardware tests further verifying its real-time deployability in dynamic traffic environments.
\end{itemize}

The remainder of this paper is structured as follows: Section~\ref{sec:problem} introduces the problem statement. Section~\ref{sec:Learning_Reachable_Set} discusses event-triggered online learning of HV control-intent sets and FRS prediction, followed by FRS-based safety barrier design. Section~\ref{sec:optimization_problem} describes the contingency planning framework and optimization scheme with consensus ADMM, with supporting theoretical analysis. The proposed trajectory planning approach is evaluated in Section~\ref{sec:Results}, and a conclusion is drawn in Section~\ref{sec:Conclusions}.

\vspace{-5mm}
\section{Problem Statement}
\label{sec:problem} 
\subsection{System Modeling}
\label{subsec:modeling} 
We model the motion of the $h$-th HV's  geometric center with a discrete-time integrator for  $h\in\mathcal{I}_0^{M-1}$, where $M$ is the total number of HVs in the considered scenario. This formulation facilitates computational tractability while maintaining sufficient accuracy for reachability analysis. With sampling time $\delta T$, the dynamics model of the $h$-th HV at time instant $t_i$ is given by:
\begin{equation}
\label{eq:hv_model}
{z}^h_{i+1}={A}{z}^h_{i}+{B}{u}_i^h,
\end{equation}
where $A$ and $B$ are the state and control matrices, respectively; the state vector ${z}^h_i = [{{p}^h_i}^T~ {{v}^h_i}^T]^T \in \E^4$ is composed of position ${p}^h_i = [p^h_{x,i}~p^h_{y,i}]^T$ and velocity ${v}^h_i = [v^h_{x,i}~ v^h_{y,i}]^T$, with control input vector ${u}^h_i = [a^h_{x,i}~ a^h_{y,i}]^T \in \mathcal{U}^h$ comprising longitudinal and lateral accelerations. 


To account for perception inaccuracy, we model the HV state measurement at time \(t_i\) as
\begin{equation}
\tilde z_i^h = z_i^h + \omega_i^h, \quad \|\omega^h_i\|_{{\Sigma_{\omega}^{h}}^{-1}} \leq 1,
\label{eq:measurement_noise}
\end{equation}
where the noise $\omega_i^h$ is bounded with known, positive-definite covariance $\Sigma_{\omega}^h$.

For the \(h\)-th HV, the control capability is defined by two distinct sets: \textbf{admissible control set} \(\mathcal{U}^h\), which encompasses all physically feasible control inputs under worst-case constraints, and \textbf{control-intent set} \(\hat{\mathcal{U}}^h \subseteq \mathcal{U}^h\), encompassing the control inputs the HV is likely to execute within a finite horizon. The admissible set \(\mathcal{U}^h\) is assumed known, while the intended set \(\hat{\mathcal{U}}^h\) remains unknown \textit{a priori} and should be estimated online through iterative adjustments based on observed behaviors.

To enable smooth trajectory optimization, the state vector \(x_i\in \mathcal{X}\) is augmented to include control inputs and their derivatives, defined as:
\begin{equation} \vspace{-1mm}
\notag
x_i = \left[p_{x,i}\quad p_{y,i}\quad\theta_{i}\quad\dot{\theta}_i\quad v_i\quad a_{x,i}\quad a_{y,i}\quad j_{x,i}\quad j_{y,i} \right]^T.
\vspace{-0.5mm} \end{equation}
Here, $p_{x,i}$ and $p_{y,i}$ denote the longitudinal and lateral positions in the global frame, respectively; $v_i$ is the speed; and $\theta_i$ is the heading angle. Additionally, $a_{x,i}$ and $a_{y,i}$ are the longitudinal and lateral accelerations, while $j_{x,i}$ and $j_{y,i}$ represent the longitudinal and lateral jerks.
Using the yaw rate $\dot{\theta}_i$ and acceleration $a_i$ as control input variables, the EV is modeled as a Dubins car with nonholonomic constraints:
    \begin{eqnarray}
        \begin{aligned}
          \dot{p}_{x,i} -& v_i \cos (\theta_i) = 0,\\
           \dot{p}_{y,i}- &v_i\sin (\theta_i) = 0.
        \end{aligned}
    \label{eq:nonholonomic_cons}
    \end{eqnarray}
Based on these constraints, we can further derive the following constraint of $\theta_i$ and closed-form solution of $v_i$~\cite{chen2023interactive}:
\begin{equation} \vspace{-1mm}
        \theta_i - \arctan (\dot{p}_{y,i}/{\dot{p}_{x,i}}) = 0,
        \label{eq:polar_nonholonomic_cons} 
\end{equation} 
\vspace{-4mm}
\begin{equation} \vspace{-1mm}
        v_i = \sqrt{(\dot{p}_{x,i} )^2 +  (\dot{p}_{y,i} )^2}.
        \label{eq:polar_vel} 
        \vspace{-1mm}
\end{equation}

\subsection{Problem Statement}
\label{subsec:problem} 
This study investigates trajectory optimization for autonomous vehicles in safety-critical environments where $M$ HVs exhibit uncertain behaviors compounded by perception inaccuracies. Complex driving patterns influenced by diverse driving styles and traffic conditions make their trajectories difficult to predict. To robustly handle the uncertainties, we aim to generate collision-free trajectories by avoiding the reachable regions of HVs, characterized via FRSs over a given time horizon.

Conventional FRSs derived from admissible control sets $\mathcal{U}^{h}$ tend to be overly conservative, as they account for worst-case actuator limitations. Conversely, using fixed bounds of control inputs may lead to either excessive caution or dangerous underestimation. In reality, the near-future behaviors of an HV are better characterized by the empirical bounds of historical trajectory data, which captures both driving style and traffic conditions \cite{zhou2025robusta}. 

Accordingly, we aim to develop a computationally efficient trajectory optimization framework that enables the EV to accommodate HV uncertainties while ensuring the existence of safe trajectories. Given the EV and HV models in Section \ref{subsec:modeling}, the problem is decomposed into three key subproblems: 

(i) efficiently updating the FRS of HVs \eqref{eq:hv_model} by online learning the unknown control-intent sets $\hat{\mathcal{U}}$; 

(ii) formulating barrier constraints using the FRS to enable safe and computationally efficient trajectory optimization; 

(iii) ensuring the feasibility of safe trajectories while balancing safety, efficiency, and passenger comfort.

\section{FRS-Based Safety Barrier} 
\label{sec:Learning_Reachable_Set} 
This section introduces the FRS-based discrete barrier function method, addressing the subproblems (i) and (ii) in Section \ref{subsec:problem}. Section \ref{subsec:Online_Learning} designs the event trigger mechanism for online learning of HV control-intent sets and FRS propagation. Section \ref{subsec:Reachable_Barrier} then formulates the FRS-based safety barrier constraint.
\vspace{-2mm}

\subsection{FRS Prediction}
\label{subsec:Online_Learning} 

The FRS rigorously characterizes potential future states for an HV, defined as follows:

\begin{definition}[Forward Reachable Set]
The FRS of the $h$-th HV \eqref{eq:hv_model} at the predicted time step $k$, $k \geq 1$, from an initial state vector $z_i$ is defined as:
\begin{equation}
\mathcal{R}_{k|i}^h = \left\{ A^k z_i^h + \sum_{j=0}^{k-1} A^{k-1-j} B u_j^h \,\Bigg|\,\forall u_j^h \in \hat{\mathcal{U}}^h,  j \in \mathcal{I}_0^{k-1}\right\}
\end{equation}
where the state matrix $A$ and control matrix $B$ are defined in \eqref{eq:hv_model}, and $\hat{\mathcal{U}}^h$ denotes the control-intent set of the $h$-th HV.
\end{definition}

Following \cite{zhou2025robustb}, we employ ellipsoidal representations for both the control-intent set and the associated FRS. 
This ellipsoidal representation supports efficient online learning while rendering FRS propagation tractable. Crucially, it facilitates a co-design with the optimization framework, as the resulting safety barrier constraints preserve the biconvex structure essential for the efficient ADMM-based solution elaborated in Section \ref{subsec:admm}.


\subsubsection{Event-Triggered Online Learning of Control-Intent Set}
The control-intent set is initialized according to the EV’s prior knowledge or assumptions on the HV,  and subsequently refined through online learning.

Specifically, at each time step $t_i$, we obtain the control input vector $u_{i-1}^h$ from the observed state vectors. Then, a control input dataset $\mathcal{C}_i^h=\mathcal{C}_{i-1}^h \cup \{u_{i-1}^h\}$ is maintained for the $h$-th HV, initialized with $\mathcal{C}_0^h$, a user-specified estimate of potential HV control inputs. 

From an estimation perspective, the elements in $\mathcal{C}_i^h$ can be viewed as the samples drawn from an unknown distribution of $\hat{\mathcal{U}}_i^h$. It is over-approximated using a minimum-volume enclosing ellipsoid of $\mathcal{C}_i^h$, denoted as:
\begin{equation}
    \hat{\mathcal{U}}_i^h = \{ u \in \mathbb{R}^2 \mid (u - \mu_{u,i}^h)^T (\Sigma_{u,i}^h)^{-1} (u - \mu_{u,i}^h) \leq 1 \},
    \label{eq:ellipsoid_set}
\end{equation}
with an equivalent form $\|P_{u,i}^h u + q_{u,i}^h\|_2 \leq 1$. The parameters $P_{u,i}^h = (\Sigma_{u,i}^h)^{-1/2}$ and $q_{u,i}^h = -P_{u,i}^h \mu_{u,i}^h$ are obtained by solving the following semi-definite programming (SDP) problem \cite{boyd2004convex}: 
\begin{equation}
\label{eq:full_sdp}
\begin{aligned}
\{P_{u,i}^h, q_{u,i}^h\} &= \underset{\{P,\,q\}}{\text{argmin}} \quad  \log \det(P^{-1}) \\
\text{s.t.} \quad & \|Pu + q\|_2^2 \leq 1, \quad \forall u \in \mathcal{C}_i^h, \\
& P \in \mathbb{S}_{++}^2.
\end{aligned}
\end{equation}
Since ellipsoids are convex, $\hat{\mathcal{U}}_i^h$ naturally contains the convex hull of $\mathcal{C}_i^h$. This ensures that even if the observed control behaviors exhibit non-convex patterns, the learned ellipsoid serves as a valid superset that conservatively envelopes all observed maneuvers.


For large datasets, Khachiyan's algorithm \cite{todd2007khachiyan} provides efficient computation via first-order approximation. 
Nevertheless, continuously updating $\mathcal{C}_i^h$ with every incoming sample introduces two critical issues. 
First, the increasing dataset size imposes an unnecessary computational that can compromise real-time efficiency. 
Second, and more critically, frequent variations in the learned set geometry can cause oscillations in the feasibility domain, leading to control chattering and degraded ride comfort.
To mitigate these issues, we design the following condition to trigger the updates.

\textbf{Triggering Condition}:\quad The updates occur when
\begin{equation}
t_{m^\prime} = \inf_{i\delta T > t_m} i\delta T
\label{eq:event_trigger_time}
\end{equation}
such that
\begin{equation}
\|P^h_{u, m}u_{i-1}^h + q_{u, m}^h\|_2^2 \geq 1,
\label{eq:trigger_condition}
\end{equation}
where $m,\, m^\prime \in \mathbb{N}$ denote successive update time instants.
Intuitively, when a new control input sample $u_{i-1}^h$ falls outside the current estimated set $\hat{\mathcal{U}}_m^h$, trigger an incremental update such that  $\mathcal{C}_i^h \subseteq \hat{\mathcal{U}}_{m}^h\cup \{u_{i-1}\}\subseteq \hat{\mathcal{U}}_{m^\prime}^h$.

Once the triggering condition \eqref{eq:trigger_condition} is satisfied, the parameters $ P_{u,m}^h $ and $ q_{u,m}^h $ can be incrementally obtained by solving an SDP problem \eqref{eq:adaptive_ellipsoid_update} with fixed computational complexity, as formalized in the following lemma:

\begin{lemma}\cite{zhou2025robustb}
Given the current ellipsoid $\hat{\mathcal{U}}_{m}^h$ and a new sample $u_{i-1}^h$, the updated parameters $\{P_{u,m^\prime}^h, q_{u,m^\prime}^h\}$ of the  minimum-volume enclosing ellipsoid $\hat{\mathcal{U}}_{m^\prime}^h$ are computed to satisfy $\hat{\mathcal{U}}_{m}^h\cup \{u_{i-1}\}\subseteq \hat{\mathcal{U}}_{m^\prime}^h$ via:
\begin{align}
&\underset{S, c, \lambda_j}{\operatorname*{min}} \quad \log \det(S^{-1}) \nonumber \\
&\operatorname*{s.t.} \quad 
\begin{bmatrix}
S - \lambda_j H_j & c - \lambda_j \zeta_j & 0 \\
(c - \lambda_j \zeta_j)^T & -1 - \lambda_j n_j & c^T \\
0 & c & -S
\end{bmatrix} \preceq 0, \quad j \in \{1,2\},
\label{eq:adaptive_ellipsoid_update}
\end{align}
where $H_1 = (P_{m}^h)^T P_{m}^h$, $\zeta_1 = (P_{m}^h)^T q_{m}^h$, $n_1 = \|q_{m}^h\|_2^2 - 1$; and $H_2 = I_2$, $\zeta_2 = -u_{m}^h$, $n_2 = \|u_{m}^h\|_2^2 - \epsilon$. Here, $I_2\in\mathbb{R}^{2\times2}$ is the identity matrix, and $\epsilon>0$. The optimal solution yields the updated ellipsoid parameters: 
\begin{align}
\label{eq:new_params}
P_{u,m^\prime}^h = (S^*)^{-1/2}, \quad q_{u,m^\prime}^h = (S^*)^{1/2}c^*.
\end{align}
\end{lemma}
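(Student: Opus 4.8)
The plan is to recognize the claim as an instance of the minimum-volume enclosing (L\"owner--John) ellipsoid problem for the union of two ellipsoidal sets, and to derive the SDP \eqref{eq:adaptive_ellipsoid_update} by combining the S-procedure with a Schur-complement reformulation. First I would express every relevant set as the sublevel set of a quadratic form $g(u) = u^\T H u + 2\zeta^\T u + n \le 0$. Expanding the current intent ellipsoid $\hat{\mathcal{U}}_m^h = \{u : \|P_m^h u + q_m^h\|_2 \le 1\}$ yields exactly $H_1 = (P_m^h)^\T P_m^h$, $\zeta_1 = (P_m^h)^\T q_m^h$, and $n_1 = \|q_m^h\|_2^2 - 1$, while expanding the $\epsilon$-ball $\{u : \|u - u_m^h\|_2^2 \le \epsilon\}$ that captures the triggering sample gives $H_2 = I_2$, $\zeta_2 = -u_m^h$, and $n_2 = \|u_m^h\|_2^2 - \epsilon$. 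The candidate enclosing ellipsoid $\hat{\mathcal{U}}_{m'}^h$ is parametrized through $(S, c)$ as the sublevel set with $H = S$, $\zeta = c$, and $n = c^\T S^{-1} c - 1$, i.e.\ the ellipsoid $\{u : (u - \mu)^\T S (u - \mu) \le 1\}$ centered at $\mu = -S^{-1} c$ with shape matrix $S \succ 0$.

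Next I would encode the two inclusions $\mathcal{E}_j \subseteq \hat{\mathcal{U}}_{m'}^h$ of the current ellipsoid ($j=1$) and the sample ball ($j=2$). Each inclusion is the implication $g_j(u) \le 0 \Rightarrow g_{\mathrm{tgt}}(u) \le 0$, which by the S-procedure holds iff there exists a multiplier $\lambda_j \ge 0$ with $g_{\mathrm{tgt}}(u) \le \lambda_j g_j(u)$ for all $u$, equivalently the matrix inequality $\begin{bmatrix} S & c \\ c^\T & c^\T S^{-1} c - 1 \end{bmatrix} - \lambda_j \begin{bmatrix} H_j & \zeta_j \\ \zeta_j^\T & n_j \end{bmatrix} \preceq 0$. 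Sufficiency is immediate (for $u \in \mathcal{E}_j$, $g_{\mathrm{tgt}}(u) \le \lambda_j g_j(u) \le 0$); the converse losslessness is the S-lemma, which requires each inner set to have nonempty interior -- this is precisely why $\epsilon > 0$ is imposed, so that $\mathcal{E}_2$ satisfies Slater's condition. Feasibility of both constraints therefore certifies $\hat{\mathcal{U}}_m^h \cup \{u_{i-1}\} \subseteq \hat{\mathcal{U}}_{m'}^h$, which, combined with the fact that the trigger fired only on $u_{i-1}$ (so all earlier samples already lie in $\hat{\mathcal{U}}_m^h$), preserves the over-approximation $\mathcal{C}_i^h \subseteq \hat{\mathcal{U}}_{m'}^h$.

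The crux is the third step: the term $c^\T S^{-1} c$ in the $(2,2)$ entry makes the inequality above nonconvex in $(S, c)$, and I would remove it via a Schur complement. Appending a block $-S \prec 0$ together with the coupling column $c$ produces the $5\times5$ matrix of \eqref{eq:adaptive_ellipsoid_update}; since $S \succ 0$, the Schur-complement lemma applied to its $(3,3)$ block $-S$ reproduces exactly the $2\times2$ inequality above, because the correction $-\,c^\T(-S)^{-1} c = c^\T S^{-1} c$ restores the missing quadratic term. This establishes the equivalence between the block LMI \eqref{eq:adaptive_ellipsoid_update} and the containment conditions, and renders all constraints jointly linear in the decision variables $(S, c, \lambda_1, \lambda_2)$.

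Finally, since the volume of $\hat{\mathcal{U}}_{m'}^h$ is a monotone function of $\det S$, minimizing $\log\det(S^{-1})$ (equivalently maximizing the concave function $\log\det S$) over the LMI feasible set is a convex determinant-maximization program whose unique minimizer is the smallest ellipsoid enclosing $\hat{\mathcal{U}}_m^h \cup \{u_{i-1}\}$; recovering $(P_{u,m'}^h, q_{u,m'}^h)$ through \eqref{eq:new_params} completes the construction. I expect the Schur-complement equivalence of the third step, together with verifying the losslessness hypotheses of the S-lemma, to be the main obstacle. The remaining observation -- that only the two blocks $j \in \{1,2\}$ enter the SDP, independently of $|\mathcal{C}_i^h|$ -- immediately yields the claimed fixed per-update computational complexity.
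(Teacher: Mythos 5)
Your proposal is correct, but note that the paper does not actually prove this lemma: it is imported by citation from \cite{zhou2025robustb}, and the construction is the classical outer L\"owner--John ellipsoid for a union of ellipsoids (Boyd--Vandenberghe, \S 8.4.1). Your reconstruction --- writing each set as a quadratic sublevel set, applying the lossless S-procedure to each containment $\mathcal{E}_j \subseteq \hat{\mathcal{U}}_{m'}^h$ (with the $\epsilon$-ball around the new sample supplying the Slater point that the singleton $\{u_{i-1}^h\}$ would lack, which is exactly why $\epsilon>0$ appears), and eliminating the nonconvex term $c^T S^{-1} c$ by a Schur complement against the appended $-S$ block --- is precisely the standard derivation of \eqref{eq:adaptive_ellipsoid_update}, so there is no alternative argument on the paper's side to compare against.

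One point you should have flagged rather than deferred to the statement: your own parametrization yields the recovery $P_{u,m'}^h = (S^*)^{1/2}$ and $q_{u,m'}^h = (S^*)^{-1/2}c^*$, not the exponents printed in \eqref{eq:new_params}. With $\hat{\mathcal{U}}_{m'}^h = \{u : \|Pu+q\|_2 \le 1\}$ one has $H = P^T P$, so matching the block $S - \lambda_j H_j$ forces $P^T P = S$, i.e.\ $P = S^{1/2}$ and $q = P^{-T}c = S^{-1/2}c$; moreover the objective $\min\,\log\det(S^{-1})$ is volume minimization only under this identification (volume $\propto \det(S)^{-1/2}$), whereas $P=(S^*)^{-1/2}$ would make the stated objective maximize the volume. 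So \eqref{eq:new_params} as printed carries a swapped exponent that your derivation actually exposes. A second, smaller caveat: the SDP returns the minimum-volume ellipsoid covering $\hat{\mathcal{U}}_m^h$ together with the $\epsilon$-ball, which is an enclosing (not exactly minimal) ellipsoid for $\hat{\mathcal{U}}_m^h \cup \{u_{i-1}^h\}$ itself; this suffices for the containment claim the lemma is used for. Otherwise the argument is sound, including the closing observation that the SDP size is independent of $|\mathcal{C}_i^h|$, which is what delivers the fixed per-update cost.
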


\begin{corollary}
\label{coroll:update_time}
For all $t_i \in [t_m, t_{m'})$, the control input vector satisfies $u_i^h \in \hat{\mathcal{U}}_m^h$.
\end{corollary}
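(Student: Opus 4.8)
The plan is to treat the claim as a near-immediate consequence of how the update instant $t_{m'}$ is defined in \eqref{eq:event_trigger_time}--\eqref{eq:trigger_condition}, so the argument is one of minimality of an infimum rather than any computation. The key structural fact I would record first is that between two successive triggers the parameters $(P_{u,m}^h,q_{u,m}^h)$ are frozen; hence the ellipsoid $\hat{\mathcal{U}}_m^h$ defined in \eqref{eq:ellipsoid_set} is a fixed set throughout the interval $[t_m,t_{m'})$, and the corollary merely asserts that no observed control input escapes this fixed set before the next trigger fires.

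I would then argue by contradiction. Suppose some $t_i\in[t_m,t_{m'})$ admits $u_i^h\notin\hat{\mathcal{U}}_m^h$. By the ellipsoidal description \eqref{eq:ellipsoid_set}, membership fails exactly when $\|P_{u,m}^h u_i^h+q_{u,m}^h\|_2^2>1$, which a fortiori satisfies the triggering inequality \eqref{eq:trigger_condition}. Since this sample is examined one step after it is applied, the trigger would fire at an instant $t_{i+1}$ with $t_{i+1}>t_m$ (because $t_i\ge t_m$) and $t_{i+1}\le t_{m'}$ (because $t_i<t_{m'}$). A triggering instant strictly earlier than $t_{m'}$ contradicts $t_{m'}=\inf_{i\delta T>t_m} i\delta T$ subject to \eqref{eq:trigger_condition}; therefore no violation can occur strictly inside the interval, which yields $u_i^h\in\hat{\mathcal{U}}_m^h$ for the corresponding samples. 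For the left endpoint $t_m$ I would invoke Lemma~1, whose construction guarantees that the update producing $\hat{\mathcal{U}}_m^h$ contains both its predecessor set and the sample that triggered it, so the containment is consistent immediately after the update.

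I expect the main obstacle to be bookkeeping rather than analysis. There is a one-step delay built into the model: \eqref{eq:measurement_noise} provides $\tilde z_i^h$ at $t_i$, from which only $u_{i-1}^h$ can be reconstructed, so the index appearing in the trigger \eqref{eq:trigger_condition} is shifted relative to the index $u_i^h$ in the claim. I would fix one convention (the sample $u_{i-1}^h$ is tested at $t_i$) and carry it consistently through the interval. A second delicate point is the closed-versus-open boundary: the trigger uses $\ge 1$ while membership in \eqref{eq:ellipsoid_set} uses $\le 1$, so a borderline sample lying exactly on $\partial\hat{\mathcal{U}}_m^h$ still belongs to the closed set, and combined with the half-open interval $[t_m,t_{m'})$ in the statement this cleanly disposes of the endpoint case at $t_{m'}$. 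Beyond these conventions, no quantitative estimate and no property of the SDP \eqref{eq:adaptive_ellipsoid_update} is required---only the observation that the set is held fixed between triggers.
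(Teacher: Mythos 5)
The paper states this corollary without proof, treating it as immediate from the definition of the triggering time, and your minimality-of-the-infimum argument is exactly the formalization that is intended: between triggers the parameters $(P_{u,m}^h,q_{u,m}^h)$ are frozen, and an escaping sample would have fired the trigger earlier. For every index $i$ with $i\le m'-2$ your contradiction goes through cleanly, since the hypothetical trigger at $t_{i+1}$ would then be \emph{strictly} earlier than $t_{m'}$.

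There is, however, a concrete hole at the index $i=m'-1$, and your own bookkeeping exposes it without closing it. Your argument only yields $t_{i+1}\le t_{m'}$, and when $i=m'-1$ the trigger fires exactly at $t_{m'}$ --- which is perfectly consistent with the definition $t_{m'}=\inf_{i\delta T>t_m} i\delta T$ and produces no contradiction. Yet $t_{m'-1}$ does lie in the half-open interval $[t_m,t_{m'})$ of the statement, and $u_{m'-1}^h$ is precisely the sample that fires the update: whenever $\|P_{u,m}^h u_{m'-1}^h+q_{u,m}^h\|_2^2>1$ strictly, that control input is \emph{not} in $\hat{\mathcal{U}}_m^h$. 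Your closing remark that the half-open interval ``cleanly disposes of the endpoint case at $t_{m'}$'' addresses the wrong index --- the problem is at $t_{m'-1}$, not $t_{m'}$. A complete write-up must either restrict the claim to $t_i\in[t_m,t_{m'-1})$, or handle the triggering sample separately by noting that Lemma~1 guarantees $u_{m'-1}^h\in\hat{\mathcal{U}}_{m'}^h$ (the updated set, not $\hat{\mathcal{U}}_m^h$), with equality-on-the-boundary ($=1$) being the only case where the triggering sample still belongs to the closed set $\hat{\mathcal{U}}_m^h$. This off-by-one arguably originates in the corollary as stated rather than in your reasoning, but a blind proof cannot simply assert the containment at that index. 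Your invocation of Lemma~1 for ``the left endpoint $t_m$'' concerns $u_{m-1}^h$, which is outside the stated interval and thus not needed; the genuinely delicate sample is the one at the right end.
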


\begin{remark}
\label{remark:incremental update}
The online learning mechanism \eqref{eq:event_trigger_time}-\eqref{eq:new_params} ensures compatibility with worst-case analysis by requiring that the initial control dataset $\mathcal{C}_0^h$ includes the worst-case control inputs for the $h$-th HV. 
\end{remark}


\subsubsection{FRS Propagation}
Using the learned control-intent set $\hat{\mathcal{U}}_{m}^h$, we propagate the FRSs over the prediction horizon $N$, starting at time step $t_i\geq t_m$.


Considering perception noise, the initial FRS is given by:
\begin{align}
    \mathcal{R}_{0|i}^{h,z} = \{ z \in \mathbb{R}^4 \mid (z - \tilde{z}_i^h)^\top (\Sigma_{z,0|i}^h)^{-1} (z - \tilde{z}_i^h) \leq 1 \},
    \label{eq:init_reachable_set}
\end{align}
where $\Sigma_{z,0|i}^h=\Sigma_{\omega}^h$.

Then, the FRS $\mathcal{R}_{k|i}^{h,z}$ at predicted time step $k$ starting from $t_{i}$ is computed recursively:
\begin{align}
    \mathcal{R}_{k|i}^{h,z} &= A \mathcal{R}_{k-1|i}^{h,z} \oplus B \hat{\mathcal{U}}_m^h, \quad k \in \mathcal{I}_1^N. 
    \label{eq:reachable_recursion}
\end{align}
 
Although $\mathcal{R}_{0|i}^{h,z} $ and $ \hat{\mathcal{U}}_m^h $ are ellipsoids,  their Minkowski sum $\mathcal{R}_{k|i}^{h,z} $ is typically not an ellipsoid \cite{berkenkamp2017safe}. We therefore construct an efficient ellipsoidal over-approximation:
$$
\hat{\mathcal{R}}_{k|i}^{h,z} = \left\{ z \in \mathbb{R}^4 \, | \, (z - \mu_{z,k|i}^{h})^{T}(\hat \Sigma_{z,k|i}^{h})^{-1}(z - \mu_{z,k|i}^{h}) \leq 1 \right\},
$$
with parameters $(\mu_{z,k|i}^{h}, \hat{\Sigma}_{z,k|i}^{h})$ given by:
\begin{align}
    \mu_{z,k|i}^{h} &= A \mu_{z,k-1|i}^{h} + B \mu_{u,m}^h, \label{eq:mean_propagation} \\
    \hat{\Sigma}_{z,k|i}^{h} &= \kappa ( {\tilde{\Sigma}_{z,k-1|i}^{h}}/{\rho_{z,k-1|i}} + {\tilde{\Sigma}_{u,m}^h}/{\rho_{u,m}} ), \label{eq:covariance_scaling}
\end{align}
where $\tilde{\Sigma}_{z,k-1|i}^{h} = A \hat\Sigma_{z,k-1|i}^{h} A^T$; $\tilde{\Sigma}_{u,m}^h = B \Sigma_{u,m}^h B^T + \epsilon_2 I_4$; and directional scaling factors $\rho_{z,k-1|i} = \sqrt{l^T \hat{\Sigma}_{z,k-1|i}^{h} l}$, $\rho_{u,m} = \sqrt{l^T \tilde{\Sigma}_{u,m}^h l}$, $\kappa = \rho_{z,k-1|i} + \rho_{u,m}$ are computed along direction $l$.

The FRS propagation in (6) and (14) omits explicit state constraints (e.g., velocity bounds) to maintain a recursive ellipsoidal form and computational tractability.  

\subsubsection{Occupancy Prediction} 
Based on the FRS prediction, the positional occupancy $\mathcal{R}_{k|i}^{h} = \text{Proj}_{\text{position}}(\hat{\mathcal{R}}_{k|i}^{h,z})$ is obtained via projection:
\begin{equation}
    {\mathcal{R}}_{k|i}^{h} = \left\{ p \in \mathbb{R}^2  \, | \,  (p - \mu_{p,k|i}^{h})^{T}(\hat\Sigma_{p,k|i}^{h})^{-1}(p - \mu_{p,k|i}^{h}) \leq 1 \right\}.
    \label{eq:occupancy_projection}
\end{equation}
However, considering only the HV center while neglecting the geometry is insufficient for safety assurance. Therefore, we employ an ellipsoidal geometry set to explicitly account for the full shape characteristics of both the HV and EV, which is represented by $\mathcal{D}_{safe} = \left\{ p \in \mathbb{R}^2 \, |\, p^{T}(\Sigma^h_{safe})^{-1}p \leq 1 \right\}$. 
Its lengths of major and minor axes are denoted as $l^h_{x}$ and $l^h_{y}$, respectively.

Thus, the complete HV FRS occupancy can be derived as an external ellipsoidal approximation of $\mathcal{R}_{k|i}^{h}\oplus\mathcal{D}_{safe}$, denoted by:
\begin{equation}
    \hat{\mathcal{R}}_{k|i}^{h} = \left\{ p \in \mathbb{R}^2 \,|\, (p - \mu_{p,k|i}^{h})^{T}(\hat\Sigma_{p,k|i}^{h})^{-1}(p - \mu_{p,k|i}^{h}) \leq 1 \right\},
    \label{eq:actual_occupancy_projection}
\end{equation}
where $\mu_{p,k|i}^{h} = [\mu_{x,k|i}^{h}~\mu_{y,k|i}^{h}]^T$ denotes the center of the positional occupancy $\hat{\mathcal{R}}_{k|i}^{h}$. 
The ellipsoidal approximation can be computed similarly to \eqref{eq:mean_propagation} and \eqref{eq:covariance_scaling}. We denote the lengths of major and minor axes of the ellipsoid $\hat{\mathcal{R}}_{k|i}^{h}$ as $l^h_{x,k|i}$ and $l^h_{y,k|i}$, respectively.


\begin{remark}
\label{remark:perception_inaccuracies}
In practice, the bound of noise in \eqref{eq:measurement_noise} can be estimated via a variety of filtering techniques (e.g., Kalman filter). The estimation covariance quantifies perception uncertainties, which can be integrated into the initial FRS. By conservatively tuning the filter parameters and confidence level, the resulting FRS robustly contains the true state, enabling reliable collision checking.
\end{remark}

\begin{remark}
\label{remark:implicit_robustness}  
The initial control-intent set encodes the EV's a priori knowledge of HV behaviors and implicitly determines the basic conservatism of the planning framework. The system adapts dynamically through event-triggered learning, refining these sets based on observed behaviors, thus improving prediction accuracy over time. 
\end{remark}  


Let $\hat{\mathcal{O}}^h_{k|i} = z^h_{p,{k|i}} \oplus \mathcal{D}_{\text{safe}}$ denote the predicted positional occupancy at predicted time step $k$ from $t_i$, with $z^{h}_{p,{k|i}} = [z^{h}_{x,{k|i}}\, z^{h}_{y,{k|i}}]^T$ denoting the position vector, while ${\mathcal{O}}^h_{k|i}$ represents the actual positional occupancy at time instant $t_{i+k}$.

\begin{lemma}
\label{lemma:merged_reach_containment}
Consider control input vectors of the $h$-th HV during planning horizon $N$ satisfying
$
u_{k|i}^h \in \hat{\mathcal{U}}_m^h,  \forall k \in \mathcal{I}_0^{N-1},   
$ for all $t_i \in [t_m, t_{m'-N})$,
then the following containment relations hold: 
\begin{equation*}
\mathcal{O}^h_{k|i} \subseteq \hat{\mathcal{R}}_{k|i}^h,  \forall k \in \mathcal{I}_0^{N} \quad \text{and} \quad \hat{\mathcal{R}}_{k|i+1}^h \subseteq \hat{\mathcal{R}}_{k+1|i}^h ,  \forall k \in \mathcal{I}_0^{N-1}.
\end{equation*}
\begin{proof}
The initial FRS $\mathcal{R}_{0|i}^{h,z}$ in \eqref{eq:init_reachable_set} is an ellipsoid centered at the estimated HV state vector $\hat z_i^h$ with covariance $\Sigma_{z,0}^h$, which ensures $z_{0|i}^{h}=z_i^h \in \mathcal{R}_{0|i}^{h,z}$. Since $u_{k|i}^h \in \hat{\mathcal{U}}_m^h$ for all $k \in \mathcal{I}_0^{N-1}$, the recursive propagation in \eqref{eq:reachable_recursion} ensure containment of the true state trajectory within the FRSs as well as their over-approximations: $z_{k|i}^{h}\in\mathcal{R}_{k|i}^{h,z}\subseteq \hat{\mathcal{R}}_{k|i}^{h,z}$.

Projecting $\hat{\mathcal{R}}_{k|i}^{h,z}$ to positional space via \eqref{eq:occupancy_projection} preserves this containment, yielding $z_{p,k|i}^{h} \in \mathcal{R}_{k|i}^{h}$ for all $k \in \mathcal{I}_0^N$.

Incorporating the vehicle geometry $\mathcal{D}_{safe}$ through Minkowski sum and its ellipsoidal approximation ensures $z_{p,k|i}^{h} \oplus \mathcal{D}_{safe} \subseteq \mathcal{R}_{k|i}^{h} \oplus \mathcal{D}_{safe}$. Following the over-approximation property of the external ellipsoid, $\mathcal{O}^h_{k|i} \subseteq \hat{\mathcal{R}}_{k|i}^{h}$  holds for all $k \in \mathcal{I}_0^{N}$.

With the initial condition $z^h_{p,0|i+1}\in \hat{\mathcal{R}}_{0|i+1}^h\subseteq \hat{\mathcal{R}}_{1|i}^h$, the ellipsoidal approximation preserves the containment relationship $\hat{\mathcal{R}}_{k|i+1}^h \subseteq \hat{\mathcal{R}}_{k+1|i}^h, \forall k \in \mathcal{I}_0^{N-1}$. 
\end{proof}
\end{lemma}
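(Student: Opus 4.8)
The plan is to establish the two containment relations separately: the first is a \emph{forward-propagation} statement asserting that the true occupancy at each predicted step lies inside the over-approximated reachable occupancy, while the second is a \emph{recursive-nesting} statement guaranteeing that predictions recomputed one step later stay inside the predictions made from the current step. Both rest on the monotonicity of the reachable-set recursion \eqref{eq:reachable_recursion} and on the over-approximation property of the ellipsoidal hulls built in Section~\ref{subsec:Online_Learning}. A preliminary observation is that the horizon condition $t_i\in[t_m,t_{m'-N})$ guarantees, via Corollary~\ref{coroll:update_time}, that the \emph{same} control-intent set $\hat{\mathcal{U}}_m^h$ governs every step of the predictions launched from both $t_i$ and $t_{i+1}$, so the two recursions share one propagation operator.

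For the first relation I would induct on the predicted index $k$. The base case follows from the measurement model \eqref{eq:measurement_noise}: since $z_i^h-\tilde z_i^h=-\omega_i^h$ and $\|\omega_i^h\|_{(\Sigma_\omega^h)^{-1}}\le 1$, the true initial state obeys $z_{0|i}^h\in\mathcal{R}_{0|i}^{h,z}$ by \eqref{eq:init_reachable_set} with $\Sigma_{z,0|i}^h=\Sigma_\omega^h$. For the inductive step, the hypothesis $z_{k-1|i}^h\in\mathcal{R}_{k-1|i}^{h,z}$ together with $u_{k-1|i}^h\in\hat{\mathcal{U}}_m^h$ gives $z_{k|i}^h=A z_{k-1|i}^h+B u_{k-1|i}^h\in A\mathcal{R}_{k-1|i}^{h,z}\oplus B\hat{\mathcal{U}}_m^h=\mathcal{R}_{k|i}^{h,z}\subseteq\hat{\mathcal{R}}_{k|i}^{h,z}$, the last inclusion being the ellipsoidal over-approximation. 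Projecting onto position space via \eqref{eq:occupancy_projection} preserves inclusion because the projection is linear, yielding $z_{p,k|i}^h\in\mathcal{R}_{k|i}^h$; taking the Minkowski sum with the fixed geometry set $\mathcal{D}_{\text{safe}}$ and passing to its external ellipsoidal hull then delivers $\mathcal{O}_{k|i}^h=z_{p,k|i}^h\oplus\mathcal{D}_{\text{safe}}\subseteq\mathcal{R}_{k|i}^h\oplus\mathcal{D}_{\text{safe}}\subseteq\hat{\mathcal{R}}_{k|i}^h$ for all $k\in\mathcal{I}_0^N$.

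For the second relation I would again induct on $k$, now driven by the monotonicity of the one-step map $\Phi(\cdot)=A(\cdot)\oplus B\hat{\mathcal{U}}_m^h$: $X\subseteq Y$ implies $AX\oplus B\hat{\mathcal{U}}_m^h\subseteq AY\oplus B\hat{\mathcal{U}}_m^h$. Since both sequences $\hat{\mathcal{R}}_{k|i+1}^h$ and $\hat{\mathcal{R}}_{k+1|i}^h$ are generated by iterating (the over-approximated version of) $\Phi$ from their respective initial sets, any base-case inclusion should propagate: granted $\hat{\mathcal{R}}_{0|i+1}^h\subseteq\hat{\mathcal{R}}_{1|i}^h$, applying the propagation-and-reapproximation step at each stage gives $\hat{\mathcal{R}}_{k|i+1}^h\subseteq\hat{\mathcal{R}}_{k+1|i}^h$ for every $k\in\mathcal{I}_0^{N-1}$.

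The main obstacle is precisely the base case $\hat{\mathcal{R}}_{0|i+1}^h\subseteq\hat{\mathcal{R}}_{1|i}^h$, coupled with the requirement that the ellipsoidal over-approximation step be \emph{itself} inclusion-preserving. The base case demands that the fresh perception ellipsoid around $\tilde z_{i+1}^h$ with covariance $\Sigma_\omega^h$ be contained in the one-step-propagated set $A\mathcal{R}_{0|i}^{h,z}\oplus B\hat{\mathcal{U}}_m^h$ that also represents time $t_{i+1}$; this is not automatic and must be secured through the covariance-scaling construction \eqref{eq:mean_propagation}--\eqref{eq:covariance_scaling}, exploiting that the perception noise $\Sigma_\omega^h$ is re-injected at the base of every prediction and that the control-intent term $B\hat{\mathcal{U}}_m^h$ only enlarges the set. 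I would then verify that the specific external-ellipsoid operator in \eqref{eq:covariance_scaling} preserves inclusion, so that re-approximating at each recursion does not break the nesting. This monotonicity of the over-approximation map, rather than the abstract monotonicity of exact Minkowski sums, is the delicate point I expect to require the most care.
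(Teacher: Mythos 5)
Your proposal follows essentially the same route as the paper's proof: forward induction on the predicted index for the first containment (base case from the measurement model, inductive step via the recursion \eqref{eq:reachable_recursion}, then projection and Minkowski sum with $\mathcal{D}_{\text{safe}}$), and induction driven by monotonicity of the one-step propagation for the nesting relation. The two delicate points you flag --- the base case $\hat{\mathcal{R}}_{0|i+1}^h \subseteq \hat{\mathcal{R}}_{1|i}^h$ and the inclusion-preservation of the external-ellipsoid operator \eqref{eq:covariance_scaling} --- are exactly the steps the paper's own proof asserts without further justification, so your identification of them as the parts requiring care is apt rather than a deviation.
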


\subsection{FRS-Based Safety Barrier}
\label{subsec:Reachable_Barrier} 

The safe set can be  characterized by a sublevel set of a continuously differentiable function $\mathscr{B}$:
\begin{equation}
\mathcal{F}(\mathcal{O}^{h}) =\{x_i\in\mathcal{X} \mid \mathscr{B}(x_i, \mathcal{O}^{h})\geq 0\} \ \label{eq:safe_set} 
\end{equation}

\begin{definition}[Forward Invariance]\label{def:forward_invariance}
The set $\mathcal{F}(\mathcal{O}^{h})$ is \emph{forward invariant} if for any initial state vector $x_0 \in \mathcal{F}(\mathcal{O}^{h})$, the condition $\mathscr{B}(x_i, \mathcal{O}^{h}) \geq 0$ holds for all $i\geq0$.
\end{definition}

\begin{lemma}\cite[Proposition 4]{agrawal2017discrete}
\label{thm:forward_invariance} 
The safe set $\mathcal{F}(\mathcal{O}^{h})$ is forward invariant, if $\mathscr{B}$ with initial condition $\mathscr{B}(x_{0}, \mathcal{O}^{h})\geq0$ satisfies:
\begin{equation} 
\label{eq:spatiotemporal_barrier_cons}
\Delta  \mathscr{B}(x_i, \mathcal{O}^{h}) + \alpha \mathscr{B}(x_{i}, \mathcal{O}^{h}) \geq0 ,
 \end{equation} 
 where $\Delta  \mathscr{B}(x_{i}, \mathcal{O}^{h})=\mathscr{B}(x_{i+1}, \mathcal{O}^{h})-  \mathscr{B}(x_{i}, \mathcal{O}^{h})$, $0< \alpha  \leq 1$, and $\mathscr{B}$ is said to be a discrete-time barrier function.
\end{lemma}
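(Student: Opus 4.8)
The plan is to establish forward invariance of $\mathcal{F}(\mathcal{O}^{h})$ by induction on the time index $i$, exploiting the fact that the barrier difference inequality \eqref{eq:spatiotemporal_barrier_cons} can be rewritten as a one-step lower bound on $\mathscr{B}$ with a nonnegative multiplier. This reduces the claim to a simple sign-preservation argument propagated forward in time.

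First I would rearrange \eqref{eq:spatiotemporal_barrier_cons}. Substituting the definition $\Delta\mathscr{B}(x_i,\mathcal{O}^{h}) = \mathscr{B}(x_{i+1},\mathcal{O}^{h}) - \mathscr{B}(x_i,\mathcal{O}^{h})$ and collecting terms yields
\[
\mathscr{B}(x_{i+1},\mathcal{O}^{h}) \geq (1-\alpha)\,\mathscr{B}(x_i,\mathcal{O}^{h}).
\]
The crucial observation is that, since $0 < \alpha \leq 1$, the coefficient satisfies $0 \leq 1-\alpha < 1$; in particular it is nonnegative, so the right-hand side preserves the sign of $\mathscr{B}(x_i,\mathcal{O}^{h})$ whenever the latter is nonnegative. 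This single structural fact is what drives the entire proof.

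The induction then proceeds straightforwardly. The base case is the assumed initial condition $\mathscr{B}(x_0,\mathcal{O}^{h}) \geq 0$, which places $x_0 \in \mathcal{F}(\mathcal{O}^{h})$. For the inductive step, I would suppose $\mathscr{B}(x_i,\mathcal{O}^{h}) \geq 0$ for some $i \geq 0$; combining this hypothesis with $1-\alpha \geq 0$ gives $(1-\alpha)\,\mathscr{B}(x_i,\mathcal{O}^{h}) \geq 0$, and the rearranged inequality above then forces $\mathscr{B}(x_{i+1},\mathcal{O}^{h}) \geq 0$, i.e. $x_{i+1} \in \mathcal{F}(\mathcal{O}^{h})$. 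By induction, $\mathscr{B}(x_i,\mathcal{O}^{h}) \geq 0$ for all $i \geq 0$, which is precisely forward invariance in the sense of Definition~\ref{def:forward_invariance}.

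Because the argument collapses to sign preservation under a nonnegative contraction factor, there is no genuine analytical obstacle here; the only point requiring care is the endpoint $\alpha = 1$, where $1-\alpha = 0$ and the lower bound degenerates to $\mathscr{B}(x_{i+1},\mathcal{O}^{h}) \geq 0$ directly, still consistent with invariance. I would also emphasize that the stated range $0 < \alpha \leq 1$ is exactly what guarantees $1-\alpha \geq 0$: any $\alpha > 1$ would make the multiplier negative and flip the sign in the inductive step, breaking the propagation, which is why the hypothesis on $\alpha$ cannot be relaxed.
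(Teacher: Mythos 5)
Your proof is correct: the rearrangement $\mathscr{B}(x_{i+1},\mathcal{O}^{h}) \geq (1-\alpha)\mathscr{B}(x_i,\mathcal{O}^{h})$ with $0 \leq 1-\alpha < 1$, propagated by induction from $\mathscr{B}(x_0,\mathcal{O}^{h})\geq 0$, is the standard argument for this result. The paper itself does not prove the lemma (it cites \cite[Proposition 4]{agrawal2017discrete}), but the one-step sign-preservation step you use is exactly the reasoning the paper deploys inline in the proof of Lemma~\ref{lem:robust_forward_invariance}, so your approach matches the paper's in substance.
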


Following the methodology in \cite{zheng2024barrier}, we employ polar coordinate transformations to formulate the nominal \textbf{deterministic safety barrier constraints} for collision avoidance as: 
\begin{eqnarray}
\left\{
    \begin{aligned}
       p_{x,{k|i}}& = z^{h}_{x,{k|i}} + l^{h}_{x} d^{h}_{k|i} \cos(\omega^{h}_{k|i}), \\  
      p_{y,{k|i}}& = z^{h}_{y,{k|i}} + l^{h}_{y} d^{h}_{k|i} \sin(\omega^{h}_{k|i}), \\  
   \Delta  \mathscr{B}&{(x_{k|i}, \hat{\mathcal{O}}^{h}_{k|i}) + \alpha  \mathscr{B}(x_{k|i}, \hat{\mathcal{O}}^{h}_{k|i})  \geq 0},\, k\in\mathcal{I}_0^{N-1}.   
    \end{aligned}
\right.
\label{eq:polar_prediction_safety}
\end{eqnarray} 
Here, the parameters $l^{h}_{x}$ and $l^{h}_{y}$ represent the semi-axes of the ellipsoidal safety boundary, and the barrier function $\mathscr{B}$ is explicitly defined as:
\begin{equation}
\mathscr{B}(x_{k|i}, \hat{\mathcal{O}}_{k|i}^{h})=d_{k|i}^{h}(x_{k|i}, \hat{\mathcal{O}}_{k|i}^{h})-1,
\end{equation}
where $d_{k|i}^{h}$ is the Mahalanobis distance from $x_{k|i}$ to the ellipsoid $\hat{\mathcal{O}}_{k|i}^{h}$. The condition $\mathscr{B} \ge 0$ ensures that the EV state remains outside the ellipsoidal obstacle region. Intuitively, when a trajectory point $[p_{x,{k|i}}\, p_{y,{k|i}}]^T$ of the EV penetrates the obstacle region centered at $[z^{h}_{x,{k|i}}\, z^{h}_{y,{k|i}}]^T$, it is pushed outward along an optimal repulsion direction $\omega^{h}_{k|i}$:
\begin{equation}
\label{eq:obtain_omega}
\omega^{h}_{k|i} = \arctan\left(\frac{l^{h}_{x}(p_{y,{k|i}} - z^{h}_{y,{k|i}})}{l^{h}_{y}(p_{x,{k|i}} - z^{h}_{x,{k|i}})}\right).
\end{equation} 
Note that the constraint \eqref{eq:polar_prediction_safety} characterizes the collision avoidance by maintaining a minimum safety margin throughout the planning horizon, with an equivalent polar coordinate formulation:
   \begin{equation} \vspace{-1mm} 
\label{eq:barrier_cons_polar}
 d^{h}_{k+1|i} -1  - (1- \alpha)  ( d^{h}_{k|i} -1) \geq 0. 
  \vspace{-1mm} \end{equation}  

\begin{figure}[tb]
\begin{center}
\includegraphics[width=.75\columnwidth]{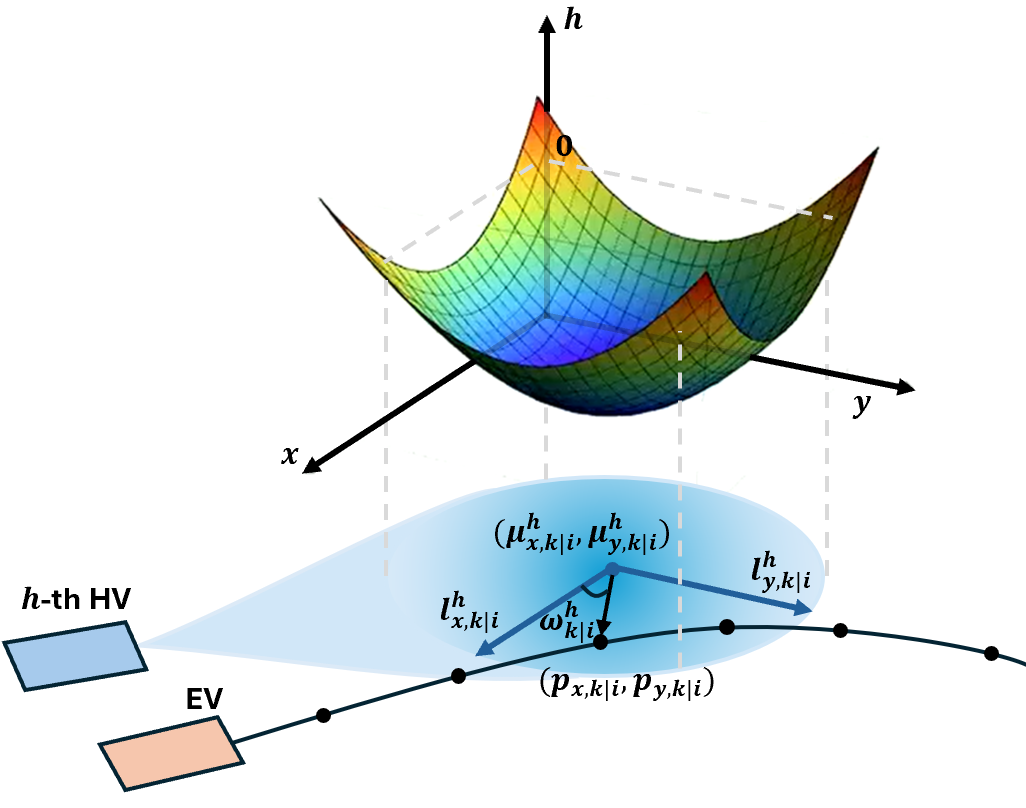}    \vspace{-3mm}
\caption{  
Illustration of the FRS-based safety barrier. At predicted time step $k$ starting from $ t_{i} $, the trajectory point $p_{k|i}$ of the EV is pushed away from the FRS positional occupancy prediction of the HV, along the direction $ \omega_{k|i}^{h} $. The safe set $ F(\hat{\mathcal{R}}_{k|i}^{h})$ (outside the ellipsoid) is encoded using discrete barrier function $\mathscr{B} \geq 0$.
} \vspace{-8mm}
\label{fig:barrier}
\end{center}
\end{figure} 

However, since accurate trajectory prediction of HVs are typically unavailable, we utilize the obtained ellipsoid FRS occupancy $\hat{\mathcal{R}}_{k|i}^{h}$ to construct the \textbf{FRS-based safety barrier constraints} as shown in Fig. \ref{fig:barrier}, which is derived as:
\begin{eqnarray}
\left\{
    \begin{aligned}
       p_{x,{k|i}}& = \mu^{h}_{x,{k|i}} + l^{h}_{x,{k|i}} d^{h}_{k|i} \cos(\omega^{h}_{k|i}), \\  
      p_{y,{k|i}}& = \mu^{h}_{y,{k|i}} + l^{h}_{y,{k|i}} d^{h}_{k|i} \sin(\omega^{h}_{k|i}), \\  
   \Delta  \mathscr{B}&{(x_{k|i}, \hat{\mathcal{R}}^{h}_{k|i}) + \alpha  \mathscr{B}(x_{k|i}, \hat{\mathcal{R}}^{h}_{k|i})  \geq 0},\, k\in\mathcal{I}_0^{N-1}.   
    \end{aligned}
\right.
\label{eq:polar_reachable_safety}
\end{eqnarray}

The barrier function introduces a fundamental duality principle that a larger obstacle region induces a smaller safe set.

\begin{lemma}
\label{lem:safe-set-duality}
Let $\hat{\mathcal{R}}_A$ and $\hat{\mathcal{R}}_B$ be two ellipsoidal sets in $\mathbb{R}^2$ with centers $z_A, z_B$ and positive definite shape matrices $Q_A, Q_B$, respectively. The following containment relationship holds: 
\[
\hat{\mathcal{R}}_A \subseteq \hat{\mathcal{R}}_B  \Longrightarrow  \mathcal{F}\left(\hat{\mathcal{R}}_B\right)\subseteq\mathcal{F}\left(\hat{\mathcal{R}}_A\right),
\]
\end{lemma}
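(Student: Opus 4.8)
The plan is to reduce the statement to a purely set-theoretic fact about complements, by first unfolding the definition of the safe set through the barrier function. Recall from \eqref{eq:safe_set} and the definition $\mathscr{B}(x,\hat{\mathcal{R}}) = d(x,\hat{\mathcal{R}}) - 1$ that $x$ is safe exactly when the Mahalanobis distance satisfies $d \geq 1$, i.e. when the EV position $p$ obeys $(p - z)^{T} Q^{-1}(p - z) \geq 1$; geometrically this means $p$ lies on the boundary of, or strictly outside, the ellipsoid. The crucial first step is therefore to rewrite the safe set as the complement of the open ellipsoid interior, $\mathcal{F}(\hat{\mathcal{R}}) = \{\, x \in \mathcal{X} : p \notin \mathrm{int}(\hat{\mathcal{R}}) \,\}$. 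This reformulation converts the barrier inequality into a clean membership statement and makes the required monotonicity transparent.

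Next I would invoke the monotonicity of the interior operator. Since $\hat{\mathcal{R}}_A \subseteq \hat{\mathcal{R}}_B$ as subsets of $\mathbb{R}^2$, and the interior of a set is its largest open subset, we obtain $\mathrm{int}(\hat{\mathcal{R}}_A) \subseteq \mathrm{int}(\hat{\mathcal{R}}_B)$. Taking complements reverses the inclusion and yields $\mathcal{F}(\hat{\mathcal{R}}_B) \subseteq \mathcal{F}(\hat{\mathcal{R}}_A)$, which is exactly the claim. Equivalently, one can argue by contraposition: if a point $x$ fails to be safe for the smaller obstacle $\hat{\mathcal{R}}_A$, i.e. $p \in \mathrm{int}(\hat{\mathcal{R}}_A)$, then containment forces $p \in \mathrm{int}(\hat{\mathcal{R}}_B)$, so $x$ is unsafe for $\hat{\mathcal{R}}_B$ as well; the contrapositive gives the stated inclusion. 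Either route is short, and no appeal to the explicit shape matrices $Q_A, Q_B$ or centers $z_A, z_B$ is needed beyond the fact that each defines a genuine ellipsoid.

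The main obstacle is a subtle temptation rather than a hard computation. It is natural to try to prove the pointwise inequality $\mathscr{B}(x,\hat{\mathcal{R}}_A) \geq \mathscr{B}(x,\hat{\mathcal{R}}_B)$ for every $x$ and then conclude immediately; however, this stronger statement is \emph{false} when the centers differ, $z_A \neq z_B$. A point sitting near $z_A$ but far from $z_B$ can have small $d_A$ and large $d_B$, so that $\mathscr{B}(x,\hat{\mathcal{R}}_A) < \mathscr{B}(x,\hat{\mathcal{R}}_B)$ even though $\hat{\mathcal{R}}_A \subseteq \hat{\mathcal{R}}_B$. Such points lie inside both ellipsoids, hence in neither safe set, so they never threaten the inclusion; the set-level argument sidesteps the issue entirely. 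The only genuine care needed is at the boundary $\{d = 1\}$: because points on the ellipsoid boundary satisfy $\mathscr{B} = 0$ and are therefore \emph{safe}, I work with the open interior rather than a naive set complement, ensuring these boundary points are retained in $\mathcal{F}$ and the inclusion is stated with the correct open/closed bookkeeping.
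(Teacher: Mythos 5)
Your proof is correct and follows essentially the same route as the paper's: both arguments identify $\mathcal{F}(\hat{\mathcal{R}})$ with the complement of the open ellipsoid interior and then use the fact that set containment of the obstacles reverses under complementation, rather than attempting any pointwise comparison of the barrier values. Your explicit remark that the pointwise inequality $\mathscr{B}(x,\hat{\mathcal{R}}_A)\geq\mathscr{B}(x,\hat{\mathcal{R}}_B)$ fails when $z_A\neq z_B$ is a valid and worthwhile caution, but it does not change the substance of the argument.
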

\begin{proof}
For any $x \in \mathcal{F}(\hat{\mathcal{R}}_B)$, the Mahalanobis distance satisfies $
d(x, \hat{\mathcal{R}}_B) = \|x - z_B\|_{Q_B^{-1}} \geq 1$.
The containment  $\hat{\mathcal{R}}_A \subseteq \hat{\mathcal{R}}_B$ implies $
\{x \mid \|x - z_A\|_{Q_A^{-1}} \leq 1\} \subseteq \{x \mid \|x - z_B\|_{Q_B^{-1}} \leq 1\}$.
Consequently, for $x \notin \text{Int}(\hat{\mathcal{R}}_B)$, we have $\|x-z_A\|_{Q_A^{-1}}\geq 1$. Thus, $d(x, \hat{\mathcal{R}}_A) - 1\geq0$, with equality only for $x \in \partial\hat{\mathcal{R}}_A$. 
Therefore, it gives $x \in \mathcal{F}(\hat{\mathcal{R}}_A)$, and this completes the proof.
\qedhere
\end{proof}

It can be shown that the $\mathcal{F}(\hat{\mathcal{R}}_{k|i}^{h})$ is a robust safe set of the EV with respect to the $h$-th HV.

\begin{lemma}
\label{lem:robust_forward_invariance}
Let the $h$-th HV satisfies $u^h_{k|i} \in \hat{\mathcal{U}}^h_m$.  
The EV trajectory with an initial state vector $x_{k|i} \in \mathcal{F}(\hat{\mathcal{R}}_{k|i}^{h})$ can robustly avoid collisions $x_{k+1|i} \in \mathcal{F}(\hat{\mathcal{R}}_{k+1|i}^{h})\subseteq \mathcal{F}(\mathcal{O}^{h}_{k+1|i})$, $\forall k\geq0$ for all realizable trajectories of the HV, if the barrier condition \eqref{eq:polar_reachable_safety} is satisfied.
\end{lemma}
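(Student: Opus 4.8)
The plan is to split the statement into two logically independent pieces and then chain them: a forward-invariance argument that keeps the planned trajectory inside the FRS-based safe set at every prediction step, followed by a duality argument that upgrades ``safe against the FRS'' into ``safe against every realizable HV occupancy.''

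First I would prove the one-step implication $x_{k|i}\in\mathcal{F}(\hat{\mathcal{R}}^h_{k|i})\Rightarrow x_{k+1|i}\in\mathcal{F}(\hat{\mathcal{R}}^h_{k+1|i})$ straight from the barrier inequality. Writing the barrier as $\mathscr{B}(x_{k|i},\hat{\mathcal{R}}^h_{k|i})=d^h_{k|i}-1$, the polar form \eqref{eq:barrier_cons_polar} of the constraint in \eqref{eq:polar_reachable_safety} reads $d^h_{k+1|i}-1\ge(1-\alpha)(d^h_{k|i}-1)$. Membership $x_{k|i}\in\mathcal{F}(\hat{\mathcal{R}}^h_{k|i})$ means $d^h_{k|i}-1\ge0$, and $0<\alpha\le1$ gives $1-\alpha\ge0$, so the right-hand side is nonnegative; hence $d^h_{k+1|i}-1\ge0$, i.e. $x_{k+1|i}\in\mathcal{F}(\hat{\mathcal{R}}^h_{k+1|i})$. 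This is precisely the discrete barrier function condition of Lemma \ref{thm:forward_invariance} applied step by step, so iterating from the initial membership $x_{0|i}\in\mathcal{F}(\hat{\mathcal{R}}^h_{0|i})$ yields $x_{k|i}\in\mathcal{F}(\hat{\mathcal{R}}^h_{k|i})$ for all $k\ge0$.

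Second I would convert this into robustness. Since $u^h_{k|i}\in\hat{\mathcal{U}}^h_m$ holds over the horizon, Lemma \ref{lemma:merged_reach_containment} supplies the occupancy containment $\mathcal{O}^h_{k+1|i}\subseteq\hat{\mathcal{R}}^h_{k+1|i}$. As both sets are ellipsoidal ($\mathcal{O}^h_{k+1|i}=z^h_{p,k+1|i}\oplus\mathcal{D}_{safe}$ is a translated ellipsoid and $\hat{\mathcal{R}}^h_{k+1|i}$ is by construction), the duality of Lemma \ref{lem:safe-set-duality} applies and reverses the inclusion at the level of safe sets, giving $\mathcal{F}(\hat{\mathcal{R}}^h_{k+1|i})\subseteq\mathcal{F}(\mathcal{O}^h_{k+1|i})$. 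Chaining with the first part produces $x_{k+1|i}\in\mathcal{F}(\hat{\mathcal{R}}^h_{k+1|i})\subseteq\mathcal{F}(\mathcal{O}^h_{k+1|i})$; because $\mathcal{O}^h_{k+1|i}$ is the true occupancy of an arbitrary control realization drawn from $\hat{\mathcal{U}}^h_m$, collision freedom holds for all realizable HV trajectories simultaneously.

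The step I expect to be most delicate is the first one, because the obstacle is not fixed: the ellipsoid $\hat{\mathcal{R}}^h_{k|i}$ (center $\mu^h_{p,k|i}$, shape $\hat\Sigma^h_{p,k|i}$) varies with the prediction index $k$, whereas Lemma \ref{thm:forward_invariance} as stated concerns a single obstacle $\mathcal{O}^h$. I would therefore make explicit that the two distances in \eqref{eq:polar_reachable_safety} and its polar form \eqref{eq:barrier_cons_polar} are Mahalanobis distances to the two distinct ellipsoids $\hat{\mathcal{R}}^h_{k|i}$ and $\hat{\mathcal{R}}^h_{k+1|i}$, so the inequality genuinely compares $\mathscr{B}(\cdot,\hat{\mathcal{R}}^h_{k+1|i})$ against $\mathscr{B}(\cdot,\hat{\mathcal{R}}^h_{k|i})$. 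Once this is acknowledged, the algebra of the fixed-set lemma carries over verbatim and no monotonicity of the growing FRS is needed. The remaining care is purely bookkeeping: that the constraint is enforced for every $k\in\mathcal{I}_0^{N-1}$ and that the initial membership furnishes the induction base.
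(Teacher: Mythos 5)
Your proposal is correct and follows essentially the same route as the paper's proof: the barrier inequality in its polar form gives $\mathscr{B}(x_{k+1|i},\hat{\mathcal{R}}^h_{k+1|i})\geq(1-\alpha)\mathscr{B}(x_{k|i},\hat{\mathcal{R}}^h_{k|i})\geq0$, and then Lemma~\ref{lemma:merged_reach_containment} combined with the duality of Lemma~\ref{lem:safe-set-duality} yields $\mathcal{F}(\hat{\mathcal{R}}^h_{k+1|i})\subseteq\mathcal{F}(\mathcal{O}^h_{k+1|i})$. Your added caveat that the barrier condition compares distances to two \emph{different} ellipsoids (so the fixed-obstacle statement of Lemma~\ref{thm:forward_invariance} is being applied with a time-varying set) is a worthwhile clarification that the paper glosses over, but it does not change the argument.
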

\begin{proof}
Since the initial state vector $x_{k|i} \in \mathcal{F}(\hat{\mathcal{R}}_{k|i}^{h})$, we have $\mathscr{B}(x_{k|i}, \hat{\mathcal{R}}^{h}_{k|i})\geq0$ according to \eqref{eq:safe_set}. From \eqref{eq:spatiotemporal_barrier_cons}, one can derive 
$\mathscr{B}(x_{k+1|i}, \hat{\mathcal{R}}^{h}_{k+1|i})\geq(1-\alpha)\mathscr{B}(x_{k|i}, \hat{\mathcal{R}}^{h}_{k|i})\geq0$. Thus, $x_{k+1|i} \in \mathcal{F}(\hat{\mathcal{R}}_{k+1|i}^{h})$. Since $\mathcal{O}^{h}_{k+1|i} \subseteq \hat{\mathcal{R}}^{h}_{k+1|i}$  holds for control input vectors $u^h_{k|i} \in \hat{\mathcal{U}}^h_m$ by Lemma \ref{lemma:merged_reach_containment}, we have  $\mathcal{F}(\hat{\mathcal{R}}^{h}_{k+1|i}) \subseteq \mathcal{F}(\mathcal{O}^{h}_{k+1|i})$ according to Lemma~\ref{lem:safe-set-duality}. Consequently, $x_{k+1|i} \in \mathcal{F}(\hat{\mathcal{R}}^{h}_{k+1|i}) \subseteq \mathcal{F}(\mathcal{O}^{h}_{k+1|i})$ holds. \qedhere
\end{proof}

\vspace{-2mm}
 \section{Safe Contingency Trajectory Optimization} 
 \label{sec:optimization_problem}
 
In this section, we first introduce the trajectory representation based on B\'ezier curves for smoothness and computational efficiency. Next, we formulate a contingency optimization problem with synergistically coupled FRS-based safety barrier constraints to enforce safety. Finally, leveraging the biconvex structure of the optimization problem, we derive an equivalent reformulation that enables efficient alternating optimization using ADMM.

\vspace{-2mm}
\subsection{Trajectory Parameterization}
We employ B\'ezier curves to parameterize EV trajectories over a finite duration of $T$. This representation exploits the hodograph property to directly constrain higher-order derivatives, ensuring dynamical feasibility \cite{Farin2002}. A degree-$n$ B\'ezier curve is constructed using $(n+1)$ control points over a normalized interval $\nu \in [0, 1]$:
\begin{equation}
    b(\nu) = \sum_{j=0}^{n} c_{j} b_{j,n}(\nu)=c^T\mathcal{B}_v(\nu),
    \notag
\end{equation}
where $c = [c_{0}\, \dots\, c_{n}]^T \in \mathbb{R}^{n+1}$ are control points, $\mathcal{B}_v(\nu) = [b_{0,n}(\nu)\,\dots\,b_{n,n}(\nu)]^T\in \mathbb{R}^{n+1}$ consists of Bernstein polynomial basis $b_{j,n}(\nu)= \binom{n}{j} \nu^j (1 - \nu)^{n-j}$, and $\nu = (t - t_i)/T\in[0,1]$ is the normalized time. Discrete-time points are obtained as:
\[
b_{k} = c^T\mathcal{B}_{k},
\]
where \(\mathcal{B}_{k}=\mathcal{B}_v(k\delta T)\), and $\delta T=T/N$ is the fixed time step. 

The state vector of EV can then be parameterized as:
\[
x_{k|i}=\mathfrak{p}(c_x, c_y, c_\theta).
\] 
Specifically, three distinct B\'ezier curves are used to represent the longitudinal displacement, lateral displacement and heading angle, which are denoted by $p_{x,k|i}=c_x^T \mathcal{B}_{k}$, $p_{y,{k|i}}=c_y^T\mathcal{B}_{k}$, and $p_{\theta,{k|i}}=c_\theta^T\mathcal{B}_{k}$, respectively, with corresponding control points $c_x$, $c_y$, and $c_\theta$. Higher-order derivatives follow from the hodograph property:
$\dot{\theta}_{k|i}={c_\theta}^T\dot{\mathcal{B}}_{k}$, $v_{k|i}=({c_x}^T\dot{\mathcal{B}}_{k} + {c_y}^T\dot{\mathcal{B}}_{k})^{\frac{1}{2}}$,  
               $a_{x,{k|i}}={c_x}^T\ddot{\mathcal{B}}_{k}$, $a_{y,{k|i}}={c_y}^T\ddot{\mathcal{B}}_{k}$, $j_{x,{k|i}}={c_x}^T\dddot{\mathcal{B}}_{k}$, and $j_{y,{k|i}}={c_y}^T\dddot{\mathcal{B}}_{k}$.

 \vspace{-2mm}
\subsection{Safe Contingency Trajectory Optimization}
\label{subsec:original_problem}
We formulate a trajectory optimization problem through simultaneous optimization of a nominal trajectory and a contingency trajectory constrained by FRS-based safety barriers. For clarity, the following formulations are presented for an arbitrary HV indexed by $h \in \mathcal{I}_0^{M-1}$, with all $h$-indexed quantities defined per-HV and naturally extendable to all $M$ HVs.

While more accurate prediction methods exist \cite{huang2022survey}, we intentionally employ a constant-velocity model to explicitly demonstrate the robustness to prediction errors. 
Notably, our framework works with any HV prediction method as long as the predicted states meet this basic containment condition:

\begin{assumption}
\label{assum:predict_and_reachable}
The predicted HV position occupancy $\hat{\mathcal{O}}^{h}_{k|i}$ is contained within the predicted FRS position occupancy:
\[
\hat{\mathcal{O}}^{h}_{k|i} \in \hat{\mathcal{R}}^{h}_{k|i},\quad \forall k \in \mathcal{I}_0^N.
\]
\end{assumption}
This assumption is enforceable because the HV prediction method can be systematically designed to ensure containment through proper control constraint characterization. It is trivially satisfied under constant-velocity prediction due to the conservative nature of FRS propagation.

The optimization problem at $t_i$ is formulated as: 
\begin{subequations}
    \label{problem}\begin{align}
    &\displaystyle\operatorname*{min}_{\substack{
 \{c^r_x,c^r_y,c^r_\theta, c^s_x,c^s_y,c^s_\theta\} 
      }}~~
     \mathcal{J}\label{eq:problem}\\
    \quad\text{s.t.}\quad
    &x^{r}_{0|i}=x(t_i), \quad x^{s}_{0|i}=x(t_i), \label{eq:init_states}\\
    & x^{r}_{k|i} \,\in  \mathcal{X}, \quad \,\, \,\,\,\, \, x^{s}_{k|i} \,\in  \mathcal{X}\label{eq:problem_6}\\ 
    & x^{r}_{N|i} \in  \mathcal{X}_f, \quad \,\, \, \, x^{s}_{N|i} \in  \mathcal{X}_f\label{eq:problem_final}\\ 
    & x^{r}_{k+1|i}  = f(x^{r}_{k|i}),  \label{eq:kinematics_r}\\ 
    & x^{s}_{k+1|i}  = f(x^{s}_{k|i}),  \label{eq:kinematics_s}\\ 
    & x^{r}_{k|i}=\mathfrak{p}(c^r_x, c^r_y, c^r_{\theta}), \label{eq:bezier_r}\\
    & x^{s}_{k|i}=\mathfrak{p}(c^s_x, c^s_y, c^s_{\theta}), \label{eq:bezier_s}\\
    & x^{r}_{k|i} \text{ satisfies \eqref{eq:polar_prediction_safety} }, \label{eq:nominal_safety}\\
    & x^{s}_{k|i} \text{ satisfies \eqref{eq:polar_reachable_safety} }, \label{eq:reachable_safety}\\
    & \varphi( x^{r}_{l|i}) = \varphi( x^{s}_{l|i}), \quad  l\in\mathcal{I}_0^{N_s-1}, \label{eq:tie_5}\\ 
    & \forall k\in\mathcal{I}_0^{N-1}. \nonumber \vspace{-0mm}
\end{align}
\end{subequations} 
Here, $x^r_k$ and $x^s_k$ represent the state vectors for the nominal and safe contingency trajectories, respectively, with corresponding control points $c^r_x$, $c^r_y$, $c^r_\theta$, and $c^s_x$, $c^s_y$, $c^s_\theta$. \(x(t_i)\) denotes current state vector of the EV at $t_i$. The function \(f(\cdot)\) implements the kinematic constraint \eqref{eq:nonholonomic_cons}. 

The objective function $\mathcal{J}$ combines the desired lateral position $p_{y,d}$ and longitudinal speed $v_{x,d}$ with trajectory smoothness, expressed as: 
\begin{equation}
\begin{aligned}
    \small
    \mathcal{J}  = &(1-p_s) \sum_{k=0}^{N-1} w^r_x({c^r_x}^T{c^r_x})^2 
    + w^r_y({c^r_y}^T{c^r_y})^2 + w^r_\theta({c^r_\theta}^T{c^r_\theta})^2 \\ 
    & + w^r_{v,d}({c^r_x}^T\dot{\mathcal{B}}_k-v_{x,d})^2+ w^r_{y,d}({c^r_y}^T\mathcal{B}_k-p_{y,d})^2\\
    &+ p_s \sum_{k=0}^{N-1} w^s_x({c^s_x}^T{c^s_x})^2 
    + w^s_y({c^s_y}^T{c^s_y})^2 + w^s_\theta({c^s_\theta}^T{c^s_\theta})^2 \\ 
    & + w^s_{v,d}({c^s_x}^T\dot{\mathcal{B}}_k-v_{x,d})^2+ w^s_{y,d}({c^s_y}^T\mathcal{B}_k-p_{y,d})^2,
    \label{eq:obj_func}
\end{aligned} 
\end{equation}
where the weights $p_s$, $w^r_x$, $w^r_y$, $w^r_\theta$, $w^r_{v,d}$, $w^r_{y,d}$, $w^s_x$, $w^s_y$, $w^s_\theta$, $w^s_{v,d}$, and $w^s_{y,d}$ are all positive constants. 

The two parallel trajectories share the same dynamics inherent in the hodograph property of the  B\'ezier curve \eqref{eq:bezier_r}-\eqref{eq:bezier_s}, initial conditions \eqref{eq:init_states}, terminal state constraints \eqref{eq:problem_final} with terminal state set $\mathcal{X}_f$, and kinematic constraints as stated in \eqref{eq:kinematics_r}-\eqref{eq:kinematics_s}. We tie them together by requiring the initial segments of length $N_s$, $N_s\leq N$, to be consistent \eqref{eq:tie_5}.  For safety assurance, the state vectors \( x^r_k \) over the nominal horizon are required to remain within the safe state set with respect to the HV predictions \( \hat{\mathcal{O}}^{h}_k\), as stated in \eqref{eq:nominal_safety}. To handle behavioral uncertainty, we impose the FRS-based safety barrier \eqref{eq:reachable_safety} as safeguards against possible HV behaviors in the contingency horizon.

The weighting parameter $p_s$ balances nominal planning and contingency handling via FRS-based barrier constraints and can be adjusted according to the desired level of conservatism. The contingency planning formulation in \eqref{problem} minimizes the expected cost over both trajectories. As prediction confidence increases, smaller values of $p_s$ can be used. This approach generates a nominal trajectory based on prediction \( \hat{\mathcal{O}}_k(\cdot) \) while consistently maintaining a safe contingency plan.

\vspace{-4mm}
\subsection{Planning-Horizon Configurations with Formal Guarantees}\label{subsec:horizon_configs}
We analyze two standard planning-horizon configurations used by the proposed planning framework: a \emph{shrinking horizon} and a \emph{receding horizon}. The former captures short, goal-directed maneuvers that must complete within a known mission duration (e.g., mandated lane change to reach an exit, controlled stop into a refuge bay). It also serves as a safety mode should perception temporarily fail. The latter reflects the normal navigation regime in which fresh predictions and FRS updates arrive as time advances.

\paragraph*{\textbf{Shrinking-Horizon Configuration}}
Let $N_{\mathrm{SH}}$ denote the total number of mission steps remaining at $t_0$. At generic time step index $i$ with current time $t_i$, the optimization horizon in~\eqref{eq:problem} is chosen to span the \emph{remaining} mission duration:
\begin{equation}
\label{eq:shrinking_horizon_n}
    N = N_{\mathrm{SH}} - i.
\end{equation}
This configuration is appropriate when the EV must guarantee completion of a finite-time maneuver under bounded uncertainty without relying on future FRS updates. 

We next establish that, under mild conditions, feasibility and safety propagate forward as the horizon shrinks.

\begin{theorem}[Shrinking-Horizon Recursive Feasibility and Safety]\label{thm:shrink_rec_feas}
Suppose: (i) \emph{no HV FRS update events are triggered} over the finite mission interval $[t_0, t_{N_{\mathrm{SH}}}]$; and (ii) Problem~\eqref{problem} with horizon~\eqref{eq:shrinking_horizon_n} is feasible at $t_0$ for B\'ezier curve order $n \geq N_{\mathrm{SH}}-1$ and barrier coefficient $\alpha = 1$.  Then the following hold for every $t_i \le t_{N_{\mathrm{SH}}}$:
\begin{enumerate}
    \item \textbf{Recursive Feasibility:} Problem~\eqref{problem} with horizon~\eqref{eq:shrinking_horizon_n} remains feasible.
    \item \textbf{Open-Loop Safety:} The contingency trajectory satisfies
    $x^s_{k|i} \in \mathcal{F}(\mathcal{O}^h_{k|i})$ for all $k \in \mathcal{I}_0^{N_{\mathrm{SH}}-i}$.
    \item \textbf{Closed-Loop Safety:} The executed trajectory $\{x_i\}_{i=0}^{N_{\mathrm{SH}}}$ satisfies $x_i \in \mathcal{F}(\mathcal{O}^h_i)$ at each time step.
\end{enumerate}
\end{theorem}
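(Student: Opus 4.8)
The plan is to prove all three claims by induction on the time index $i$, with recursive feasibility driving the induction and the two safety statements following from feasibility together with the containment lemmas. The base case $i=0$ is hypothesis (ii). For the inductive step, assuming a feasible plan $\{x^{r}_{k|i},x^{s}_{k|i}\}_{k=0}^{N_{\mathrm{SH}}-i}$ at $t_i$, I would exhibit an explicit feasible candidate at $t_{i+1}$ via the standard shift-and-copy device: set $x^{r}_{k|i+1}=x^{s}_{k|i+1}:=x^{s}_{k+1|i}$ for all $k\in\mathcal{I}_0^{N_{\mathrm{SH}}-(i+1)}$, i.e., discard the executed first stage of the contingency trajectory and adopt its tail as \emph{both} the new nominal and the new contingency plan. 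Collapsing the two trajectories onto one tail discharges the consensus constraint \eqref{eq:tie_5} automatically and lets a single set of B\'ezier control points serve both. The initial condition holds because the committed step satisfies $x(t_{i+1})=x^{s}_{1|i}=x^{s}_{0|i+1}$; the kinematic and B\'ezier constraints are inherited since a sub-arc of a degree-$n$ B\'ezier curve is again a degree-$n$ curve by de Casteljau subdivision, where $n\geq N_{\mathrm{SH}}-1$ guarantees enough control points at every shrinking horizon; the terminal constraint survives because $x^{s}_{N_{\mathrm{SH}}-(i+1)|i+1}=x^{s}_{N_{\mathrm{SH}}-i|i}\in\mathcal{X}_f$; and the pointwise constraints $x\in\mathcal{X}$ hold trivially.

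The crux is the safety constraints, where $\alpha=1$ and the no-update hypothesis (i) do the work. With $\alpha=1$ the barrier inequality \eqref{eq:barrier_cons_polar} degenerates to $d^{h}_{k|i}\geq1$, so \eqref{eq:polar_reachable_safety} and \eqref{eq:polar_prediction_safety} reduce to the \emph{decoupled} pointwise memberships $x^{s}_{k|i}\in\mathcal{F}(\hat{\mathcal{R}}^{h}_{k|i})$ and $x^{r}_{k|i}\in\mathcal{F}(\hat{\mathcal{O}}^{h}_{k|i})$, leaving no inter-step increment to re-certify after the shift. The shifted point obeys $x^{s}_{k|i+1}=x^{s}_{k+1|i}\in\mathcal{F}(\hat{\mathcal{R}}^{h}_{k+1|i})$ by the inductive hypothesis. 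Since (i) freezes the control-intent set on $[t_0,t_{N_{\mathrm{SH}}}]$, Lemma~\ref{lemma:merged_reach_containment} supplies the monotonicity $\hat{\mathcal{R}}^{h}_{k|i+1}\subseteq\hat{\mathcal{R}}^{h}_{k+1|i}$, whence Lemma~\ref{lem:safe-set-duality} inflates the safe set to $\mathcal{F}(\hat{\mathcal{R}}^{h}_{k+1|i})\subseteq\mathcal{F}(\hat{\mathcal{R}}^{h}_{k|i+1})$ and the shifted point remains feasible for the new contingency barrier. The nominal barrier follows from the same point through Assumption~\ref{assum:predict_and_reachable} ($\hat{\mathcal{O}}^{h}\subseteq\hat{\mathcal{R}}^{h}$) and a further application of Lemma~\ref{lem:safe-set-duality}. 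This closes the induction and establishes claim (1).

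Claims (2) and (3) then fall out. For open-loop safety, feasibility at $t_i$ gives $x^{s}_{k|i}\in\mathcal{F}(\hat{\mathcal{R}}^{h}_{k|i})$; chaining $\mathcal{O}^{h}_{k|i}\subseteq\hat{\mathcal{R}}^{h}_{k|i}$ (Lemma~\ref{lemma:merged_reach_containment}) with the duality of Lemma~\ref{lem:safe-set-duality} yields $\mathcal{F}(\hat{\mathcal{R}}^{h}_{k|i})\subseteq\mathcal{F}(\mathcal{O}^{h}_{k|i})$, so $x^{s}_{k|i}\in\mathcal{F}(\mathcal{O}^{h}_{k|i})$ for all $k\in\mathcal{I}_0^{N_{\mathrm{SH}}-i}$ — exactly Lemma~\ref{lem:robust_forward_invariance} specialized to the frozen set. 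Closed-loop safety is the $k=0$ slice: the executed state equals $x^{s}_{0|i}\in\mathcal{F}(\mathcal{O}^{h}_{0|i})=\mathcal{F}(\mathcal{O}^{h}_i)$ at each step where feasibility was shown in claim (1).

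The main obstacle is not the safety propagation — that is a direct chaining of the two containment lemmas — but justifying that the shifted tail is an \emph{admissible} decision at $t_{i+1}$: I must confirm that B\'ezier subdivision keeps the candidate inside the degree-$n$ decision space (this is where $n\geq N_{\mathrm{SH}}-1$ is essential) and that collapsing nominal and contingency onto one tail never conflicts with \eqref{eq:tie_5}, \eqref{eq:nominal_safety}, or \eqref{eq:reachable_safety}. The subtle point that makes the construction go through is that $\alpha=1$ genuinely decouples the barrier across stages, so removing one executed step by the shift leaves no residual increment constraint; with $\alpha<1$ the shift would produce an inter-step coupling term that the FRS monotonicity alone need not absorb.
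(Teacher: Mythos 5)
Your proposal is correct and follows essentially the same route as the paper: the shifted contingency tail reused for both branches (trivially discharging the consensus constraint), the observation that $\alpha=1$ reduces the barrier to pointwise set membership, and the chaining of Lemma~\ref{lemma:merged_reach_containment} with Lemma~\ref{lem:safe-set-duality} (plus Assumption~\ref{assum:predict_and_reachable} for the nominal branch) to certify the shifted candidate and both safety claims. The only divergence is a sub-step: you justify B\'ezier representability of the shifted tail via de~Casteljau subdivision of the original degree-$n$ curve, whereas the paper interpolates the shifted discrete points through the pseudoinverse of the full-column-rank Bernstein evaluation matrix; both arguments are valid (and yours does not actually need $n \geq N_{\mathrm{SH}}-1$, which is only essential for the interpolation route).
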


\begin{proof}
Let $\{x^{r*}_{0|i},\dots,x^{r*}_{N_{\mathrm{SH}}-i|i}\}$ and $\{x^{s*}_{0|i},\dots,x^{s*}_{N_{\mathrm{SH}}-i|i}\}$ denote the optimal nominal and contingency state sequences obtained from~\eqref{problem} at time $t_i$. Form a \emph{shifted candidate} sequence for the problem at the next time $t_{i+1}$:
\[
    x_{k|i+1} = x^{s*}_{k+1|i}, \quad k=\mathcal{I}_0^{N_{\mathrm{SH}}-i-1}. 
\]
That is, we drop the first state vector of the contingency trajectory and reuse its suffix for both branches in the new problem. Thus, the candidate sequence satisfies the initial condition~\eqref{eq:init_states}, state constraint~\eqref{eq:problem_6}, terminal state constraint~\eqref{eq:problem_final}, and kinematics constraints~\eqref{eq:kinematics_r}--\eqref{eq:kinematics_s}. Also, the identity of two branches implies the consensus constraint~\eqref{eq:tie_5} holds trivially. 

From Lemma~\ref{lemma:merged_reach_containment} we have one-step containment relation $\hat{\mathcal{R}}_{k|i+1} \subseteq \hat{\mathcal{R}}_{k+1|i}$ for $k\in\mathcal{I}_0^{N_{\mathrm{SH}}-i-1}$. By Lemma~\ref{lem:safe-set-duality}, the associated safe sets are reversely contained: $\mathcal{F}(\hat{\mathcal{R}}_{k|i+1}) \supseteq \mathcal{F}(\hat{\mathcal{R}}_{k+1|i})$.
Since $x^{s*}_{k+1|i} \in \mathcal{F}(\hat{\mathcal{R}}_{k+1|i})$ is satisfied at time $t_i$, it follows that $x_{k|i+1} \in \mathcal{F}(\hat{\mathcal{R}}_{k|i+1})$. In other words, $\mathscr{B}(x_{k|i+1}, \hat{\mathcal{R}}^{h}_{k|i})\geq0$, satisfying FRS-based barrier constraint \eqref{eq:reachable_safety} again with $\alpha=1$.


Since we have $\hat{\mathcal{O}}^{h}_{k|i+1} \subseteq \hat{\mathcal{R}}^{h}_{k|i+1}$ under Assumption~\ref{assum:predict_and_reachable}, we have $x_{k|i+1}\in \mathcal{F}(\hat{\mathcal{R}}^{h}_{k|i+1}) \subseteq \mathcal{F}(\hat{\mathcal{O}}^{h}_{k|i+1}), \forall k\in \mathcal{I}_0^{N_{\mathrm{SH}}-i}$. It means the nominal safety constraint~\eqref{eq:nominal_safety} is satisfied with $\mathscr{B}(x_{k|i+1}, \hat{\mathcal{O}}^{h}_{k|i})\geq0$ with $\alpha=1$.

According to Corollary~\ref{coroll:update_time}, the actual HV control inputs up to the next update time remain contained in the learned control-intent sets. Thus, the actual HV occupancy $\mathcal{O}^h_{k|i+1}$ is contained in $\hat{\mathcal{R}}^h_{k|i+1}$, preserving the safety of open-loop trajectory, $x_{k|i+1}\in \mathcal{F}(\hat{\mathcal{R}}^{h}_{k|i+1}) \subseteq \mathcal{F}(\mathcal{O}^{h}_{k|i+1}), \forall k\in \mathcal{I}_0^{N_{\mathrm{SH}}-i-1}$. Since the closed-loop system executes the first state vector of the planned trajectory at each iteration, thus safety of the closed-loop trajectory is also satisfied, $x_i=x_{0|i} \in \mathcal{F}(\mathcal{O}^h_{0|i})=\mathcal{F}(\mathcal{O}^h_i)$.

Let $P_x = [p^{s*}_{x,1|i},\dots,p^{s*}_{x,N_{\mathrm{SH}}-i|i}]^T$ collect the longitudinal components of the shifted contingency sequence, and analogously for $P_y$ and $P_\theta$. Let $\mathcal{B}=[\mathcal{B}_1\,\dots\,\mathcal{B}_{N_{\mathrm{SH}}-i}] \in \mathbb{R}^{(n+1)\times (N_{\mathrm{SH}}-i)}$ denote the Bernstein basis evaluation matrix; $n \geq N_{\mathrm{SH}}-1$ implies $\mathcal{B}$ has full column rank, so the minimum-norm pseudoinverse yields control points $c_x^{s*} = \mathcal{B}^{\dagger T}P_x$ that reproduce the candidate trajectory. Repeating each component gives valid Bézier parameterizations satisfying \eqref{eq:bezier_r}--\eqref{eq:bezier_s}.


This establishes recursive feasibility and both safety properties at $t_{i+1}$. Induction on $i$ completes the proof.
\end{proof}



\paragraph*{\textbf{Receding-Horizon Configuration}}
For routine navigation in dynamic environments, we employ a fixed horizon:
    \begin{equation}
    \vspace{-1mm}
    \label{eq:receding_horizon_n}
        N = N_{\mathrm{RH}}
        \vspace{-1mm}
    \end{equation}
which shifts forward one step at each planning instant. This allows continuous incorporation of updated HV intent and FRS information and is the standard operating mode of our framework.


\begin{assumption}\label{assum:equal_frs}
The terminal state set $\mathcal{X}_f$ is forward invariant for the EV under a designated fallback policy (e.g., controlled braking), ensuring $\mathcal{X}_f\subseteq \mathcal{F}(\hat{\mathcal{R}}^h_{k|i}),\, \forall k \ge N$, with the HV FRS positional occupancy beyond the prediction horizon.
\end{assumption}

\begin{remark}
Assumption~\ref{assum:equal_frs} reflects the practical impossibility of predicting HV intent arbitrarily far ahead in mixed traffic. Consequently, we assume that HVs beyond the horizon will not engage in excessively irrational or deliberately adversarial maneuvers that would preclude the forward invariance of the terminal state set. The forward invariance can be enforced by properly employing a well-designed fallback policy, such as controlled braking, with the barrier constraints as established in Lemma~\ref{thm:forward_invariance}. Similar set invariance conditions are widely adopted in model predictive control literature to guarantee recursive feasibility~\cite{kerrigan2000invariant, fang2022recursive, chen2023invariant}. 
\end{remark}
\begin{theorem}[Receding-Horizon Recursive Feasibility and Safety]\label{thm:rec_rec_feas}
Suppose: (i) the interval between successive HV FRS update events exceeds the planning horizon, i.e., $t_{m'} > t_{m+N}$; and (ii) Problem~\eqref{problem} with horizon~\eqref{eq:receding_horizon_n} is feasible at $t_m$ for Bézier curve order $n \geq N_{\mathrm{RH}}$ and $\alpha = 1$. Then for every $t_i \in [t_m, t_{m'-N}]$, the following hold:
\begin{enumerate}
    \item \textbf{Recursive Feasibility:} Problem~\eqref{problem} remains feasible.
    \item \textbf{Open-Loop Safety:} The contingency trajectory satisfies $x^s_{k|i} \in \mathcal{F}(\mathcal{O}^h_{k|i})$ for all $k \in \mathcal{I}_0^N$.
    \item \textbf{Closed-Loop Safety:} The executed trajectory satisfies $x_i \in \mathcal{F}(\mathcal{O}^h_i)$ at each time step.
\end{enumerate}
\end{theorem}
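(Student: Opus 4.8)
The plan is to mirror the recursive-feasibility argument of Theorem~\ref{thm:shrink_rec_feas}, with one essential modification forced by the fixed-length horizon: since the horizon no longer shrinks, shifting the optimal solution forward leaves an empty terminal slot that must be refilled using the fallback policy supplied by Assumption~\ref{assum:equal_frs}. Concretely, at $t_i$ I would take the optimal contingency sequence $\{x^{s*}_{0|i},\dots,x^{s*}_{N|i}\}$ and form the shifted candidate $x_{k|i+1}=x^{s*}_{k+1|i}$ for $k\in\mathcal{I}_0^{N-1}$, reusing this suffix for both the nominal and contingency branches so that the consensus constraint~\eqref{eq:tie_5} holds trivially and the initial condition~\eqref{eq:init_states}, state-box~\eqref{eq:problem_6}, and kinematics~\eqref{eq:kinematics_r}--\eqref{eq:kinematics_s} carry over verbatim from $t_i$. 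The missing terminal state $x_{N|i+1}$ is then generated by propagating $x_{N-1|i+1}=x^{s*}_{N|i}\in\mathcal{X}_f$ one step under the fallback policy; forward invariance of $\mathcal{X}_f$ yields $x_{N|i+1}\in\mathcal{X}_f$, fulfilling the terminal constraint~\eqref{eq:problem_final}.

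Next I would certify the FRS-based barrier constraints~\eqref{eq:reachable_safety}. The restriction $t_i\in[t_m,t_{m'-N}]$ keeps the entire prediction horizon inside the single no-update window $[t_m,t_{m'})$, so by Corollary~\ref{coroll:update_time} the actual HV inputs stay in $\hat{\mathcal{U}}^h_m$ and Lemma~\ref{lemma:merged_reach_containment} applies throughout. For the shifted interior states this gives the one-step containment $\hat{\mathcal{R}}^h_{k|i+1}\subseteq\hat{\mathcal{R}}^h_{k+1|i}$, which Lemma~\ref{lem:safe-set-duality} reverses to $\mathcal{F}(\hat{\mathcal{R}}^h_{k+1|i})\subseteq\mathcal{F}(\hat{\mathcal{R}}^h_{k|i+1})$; since $x^{s*}_{k+1|i}\in\mathcal{F}(\hat{\mathcal{R}}^h_{k+1|i})$ was feasible at $t_i$, the shifted state inherits $x_{k|i+1}\in\mathcal{F}(\hat{\mathcal{R}}^h_{k|i+1})$, and with $\alpha=1$ the barrier inequality collapses to $\mathscr{B}(x_{k|i+1},\hat{\mathcal{R}}^h_{k|i+1})\geq0$, which holds. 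For the newly appended slot $k=N-1$, Assumption~\ref{assum:equal_frs} provides $\mathcal{X}_f\subseteq\mathcal{F}(\hat{\mathcal{R}}^h_{N|i+1})$, so $x_{N|i+1}\in\mathcal{X}_f$ immediately certifies the last barrier condition; the nominal constraint~\eqref{eq:nominal_safety} follows identically via Assumption~\ref{assum:predict_and_reachable}.

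To complete feasibility I would recover a valid parameterization: with curve order $n\geq N_{\mathrm{RH}}$ the Bernstein evaluation matrix has full column rank over the horizon steps, so its pseudoinverse reproduces the candidate longitudinal, lateral, and heading components exactly, yielding control points satisfying~\eqref{eq:bezier_r}--\eqref{eq:bezier_s}. Open-loop safety then follows by composing $\mathcal{O}^h_{k|i}\subseteq\hat{\mathcal{R}}^h_{k|i}$ (Lemma~\ref{lemma:merged_reach_containment}) with Lemma~\ref{lem:safe-set-duality} to obtain $x^s_{k|i}\in\mathcal{F}(\hat{\mathcal{R}}^h_{k|i})\subseteq\mathcal{F}(\mathcal{O}^h_{k|i})$ for all $k\in\mathcal{I}_0^N$, while closed-loop safety is the $k=0$ instance, $x_i=x_{0|i}\in\mathcal{F}(\mathcal{O}^h_i)$. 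Induction on $i$ over $[t_m,t_{m'-N}]$ then closes the argument.

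I expect the terminal-slot construction to be the main obstacle. Unlike the shrinking-horizon case, where simply dropping a state suffices and no new terminal state is ever fabricated, here the manufactured $x_{N|i+1}$ must simultaneously be dynamically admissible under $f(\cdot)$, respect the state-box $\mathcal{X}$, and certify barrier feasibility against an FRS occupancy $\hat{\mathcal{R}}^h_{N|i+1}$ sitting at the edge of the prediction horizon. Assumption~\ref{assum:equal_frs} is engineered precisely to supply the last ingredient, so the delicate point is verifying that a single fallback step keeps the state inside $\mathcal{X}_f\subseteq\mathcal{F}(\hat{\mathcal{R}}^h_{N|i+1})$ while remaining kinematically valid; once this is in place, the remainder is a routine transcription of the shrinking-horizon proof.
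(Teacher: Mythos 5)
Your proposal is correct and follows essentially the same route as the paper's proof: shift the optimal contingency sequence forward, append a terminal state generated by the fallback policy whose admissibility and barrier feasibility are supplied by Assumption~\ref{assum:equal_frs} (via $\mathcal{X}_f\subseteq\mathcal{F}(\hat{\mathcal{R}}^h_{N|i+1})$), propagate the interior steps with Lemmas~\ref{lemma:merged_reach_containment} and~\ref{lem:safe-set-duality} exactly as in Theorem~\ref{thm:shrink_rec_feas}, recover control points from the full-column-rank Bernstein matrix, and close by induction. The terminal-slot construction you flag as the delicate point is precisely where the paper also leans on Assumption~\ref{assum:equal_frs}, so no gap remains.
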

\begin{proof}

Let $\{x^{r*}_{0|i},\dots,x^{r*}_{N|i}\}$ and $\{x^{s*}_{0|i},\dots,x^{s*}_{N|i}\}$ denote the optimal nominal and contingency state sequences obtained from~\eqref{problem} at time $t_i$.
The argument generalizes the proof of Theorem~\ref{thm:shrink_rec_feas} with the following additions.

Under Assumption~\ref{assum:equal_frs}, there exists a state vector $x^{s*}_{N+1|i} \in \mathcal{X}_f\subseteq\mathcal{F}(\hat{\mathcal{R}}^h_{N+1|i})$ satisfying the kinematic constraint~\eqref{eq:kinematics_s}. 
At time instant $t_{i+1}$, we consider 
\[
    x_{k|i+1} = x^{s*}_{k+1|i}, \; k=\mathcal{I}_0^{N}.
\]
as a trajectory candidate for both nominal and contingency branches. Since the predicted HV FRS position occupancy beyond  horizon $N$ remains constant at time instant $t_i$, $x^{s*}_{N+1|i}\in \mathcal{F}(\hat{\mathcal{R}}^h_{N+1|i}) \subseteq \mathcal{F}(\hat{\mathcal{R}}^h_{N|i+1})$, so the FRS-based barrier constraint~\eqref{eq:reachable_safety} holds one step later, i.e., $x_{k|i+1} \in \mathcal{F}(\hat{\mathcal{R}}^{h}_{k|i+1})$ for all $k \in \mathcal{I}_0^{N}$. Lemma~\ref{lemma:merged_reach_containment} and Lemma~\ref{lem:safe-set-duality} propagate the remaining steps exactly as in the shrinking-horizon case. Also, the control points of B\'ezier curves that reproduce the candidate trajectories can be computed following the same way. \qedhere
\end{proof}

\begin{remark}
\label{remark:bezier_order}
While the assumption $n \geq N_{\mathrm{RH}}$ (or $n \geq N_{\mathrm{SH}}-1$ in Theorem~\ref{thm:shrink_rec_feas}) streamlines the proof algebraically, high-order Bézier curves are generally not desirable in practice due to their tendency to exhibit oscillatory behavior, which can induce control chattering and numerical instabilities \cite{Farin2002}. Instead, trajectories are typically constructed as piecewise low-order Bézier segments with continuity constraints at junctions. Crucially, the recursive feasibility guarantee can be easily extended to remain valid with proper trajectory parameterization. This depends primarily on the horizon-shifting mechanism and the properties of the invariant safety barrier constraints in the contingency mechanism.
\end{remark}

Notably, the guarantees in Theorems~\ref{thm:shrink_rec_feas}--\ref{thm:rec_rec_feas} are conditional on the learned control-intent sets, as discussed in Remark~\ref{rem:adaptive_safety}.

\begin{remark}
\label{rem:adaptive_safety}
Lemma~\ref{lem:robust_forward_invariance} and Theorems~\ref{thm:shrink_rec_feas}--\ref{thm:rec_rec_feas} establish the theoretical internal consistency, encompassing robust safety and recursive feasibility conditioned on the learned control-intent sets.
This avoids the excessive conservatism of worst-case physical bounds or the inadequacy of static assumptions that may fail to cover dynamic maneuvers.
This theoretical guarantee is bridged to reality by the event-triggered learning mechanism, which dynamically monitors and adapts to unmodeled maneuvers.  
\end{remark}



Crucially, this adaptive foundation synergizes with the contingency architecture to achieve nonconservative planning. 
While the event-triggered learning reduces conservatism by providing tighter uncertainty bounds, the architecture structurally allows the nominal trajectory to pursue agile and efficient maneuvers, without being uniformly restricted by the FRS barrier constraints for the entire future horizon.


 \vspace{-4mm}
\subsection{Contingency Planning with Consensus ADMM}
\label{subsec:admm}

\subsubsection{Reformulation of Trajectory Optimization}
Building upon the optimization problem \eqref{problem}, we present a compact and equivalent reformulation as follows: 
\begin{subequations}    
    \label{problem3}
      \begin{align}  
      \displaystyle\operatorname*{min}_{\substack{\{c_\theta, c_x,c_y\}
     \\
    \{{\omega}, {d}\}}}~~ 
    \ &f(c_\theta)  + f_x(c_x) + f_y(c_y) \label{eq:problem3_pto1}\\
    \operatorname*{s.t.}\quad\quad
    & F_0  [c_{x}\ c_{y}]-[E_{x,0}\ E_{y,0}]=0,\label{eq:problem3_init}\\
    & F_{\theta,0}  c_{\theta}=E_{\theta,0}, \, F_{\theta,f}  c_{\theta}=E_{\theta,f},\label{eq:problem3_init_fin_theta}\\ 
    & {\omega}  \in \mathcal{C}_{\omega}, {d}  \in \mathcal{C}_d, \label{eq:problem3_pto4}\\
     \mathcal{G}_{k,x} \triangleq\,& \dot{\mathcal{B}}^T c_{x}  - \mathbf{v} \cdot \cos{  \mathcal{B}^Tc_{\theta}}  =  0, \label{eq:problem3_pto5}\\     
    \mathcal{G}_{k,y} \triangleq\,&   \dot{\mathcal{B}}^T c_{y}  - \mathbf{v} \cdot \sin{  \mathcal{B}^Tc_{\theta}}  =  0, \label{eq:problem3_pto6}\\    
   \mathcal{G}_{bo,x} \triangleq\,& \mathcal{B}^T c^r_{x} - \mathcal{O}_{x}  - {l}^r_x \cdot  {d}^r \cdot  \cos{{\omega}^r} =  0, \label{eq:problem3_pto7}\\
    \mathcal{G}_{bo,y} \triangleq\,&\mathcal{B}^T c^r_{y} - \mathcal{O}_{y}  - {l}^r_y \cdot  {d}^r \cdot  \sin{{\omega}^r} =  0, \label{eq:problem3_pto8}\\ 
    \mathcal{G}_{br,x} \triangleq\,& \mathcal{B}^T c^s_{x} - \mathcal{R}_{x}  - {l}^s_x \cdot  {d}^s \cdot  \cos{{\omega}^s} =  0, \label{eq:problem3_pto7_s}\\
    \mathcal{G}_{br,y} \triangleq\,&\mathcal{B}^T c^s_{y} - \mathcal{R}_{y}  - {l}^s_y \cdot  {d}^s \cdot  \sin{{\omega}^s} =  0, \label{eq:problem3_pto8_s}\\ 
    \mathcal{G}_{cs,x} \triangleq\,& {A}^T_{c,x} c_x - {Y}_{x}=0,  ,\label{eq:problem2_pto8}  \\ 
     \mathcal{G}_{cs,y} \triangleq\,&   {A}^T_{c,y}  c_y - {Y}_{y}   =0 ,\label{eq:problem2_pto9}  \\ 
      \mathcal{G}_{cs,\theta} \triangleq\,&  {A}^T_{c,\theta}  c_{\theta}  - {Y}_{\theta}=0 ,\label{eq:problem2_pto10}  \\
    \mathcal{G}_{c,x} \triangleq\,&{G} c_x  -{h}_x \leq  0,	\label{eq:problem3_pto9} \\
    \mathcal{G}_{c,y} \triangleq\,&{G} c_y - {h}_y  \leq  0.	\label{eq:problem3_pto10}   
\end{align}
\end{subequations}
Here, $\triangleq$ denotes the definition operator for convenience. The optimization variables are denoted as $c_x=[c^r_x \ c_x^s]\in\mathbb{R}^{(n+1)\times 2},c_\theta=[c^r_\theta \ c_\theta^s]\in\mathbb{R}^{(n+1)\times 2}$, and $c_y=[c^r_y \ c_y^s]\in\mathbb{R}^{(n+1)\times 2}$. 
The variables \({\omega} = [{\omega}^r\ {\omega}^s] \in \mathbb{R}^{(N \times M)\times 2}\) correspond to the matrix stacking relative angle \(\omega^{h}_k\), for \(M\) HVs within the planning horizon \(N\), while \({d} = [{d}^r\ {d}^s] \in \mathbb{R}^{(N \times M) \times 2}\) follows an analogous structure.
The cost function \eqref{eq:problem3_pto1} comprises three components:
\begin{subequations}
\small
\begin{align}
&f(c_\theta) = \frac{1}{2}c_\theta^T Q_\theta c_\theta \label{eq:fx_simple}, \\
&f(c_x) = \frac{1}{2}c_x^T Q_x c_x + \frac{1}{2}c_x^T \dot{\mathcal{B}} Q_{x,v} \dot{\mathcal{B}}^T c_x - v_{x,d}^T Q_{x,d} \dot{\mathcal{B}}^T c_x, \label{eq:fx} \\
&f(c_y) = \frac{1}{2}c_y^T Q_{y} c_y + \frac{1}{2}c_y^T \dot{\mathcal{B}} Q_{y,v} \dot{\mathcal{B}}^T c_y - v_{y,d}^T Q_{y,d} \dot{\mathcal{B}}^T c_y, \label{eq:fy}
\end{align}
\end{subequations}
where $Q_\theta$,  $Q_x$, $Q_y$, $Q_{x,v}$, $Q_{x,d}$,  $Q_{y,v}$, and $Q_{y,d}$ denote symmetric positive semi-definite weighting matrices.

The constraint matrices for the initial and terminal state constraints \eqref{eq:problem3_init}-\eqref{eq:problem3_init_fin_theta} are given by $F_0=[\mathcal{B}_0\, \dot{\mathcal{B}}_0\, \ddot{\mathcal{B}}_0]^T\in\mathbb{R}^{3\times(n+1)}$, $F_{\theta,0}=[\mathcal{B}_0\, \dot{\mathcal{B}}_0]^T\in\mathbb{R}^{2\times(n+1)}$, and $F_{\theta,f}=[\mathcal{B}_N\, \dot{\mathcal{B}}_N]^T\in\mathbb{R}^{2\times(n+1)}$, respectively. The vectors \( {E}_{x,0}, {E}_{y,0} \in \mathbb{R}^{3\times2} \) encode the initial position, velocity and acceleration constraints along longitudinal and lateral components for both nominal and contingency trajectories, while ${E}_{\theta,0}, \, {E}_{\theta,f} \in \mathbb{R}^{2\times2}$ encode the initial and terminal state vectors for orientation angles and angular velocities.

The kinematic constraints \eqref{eq:problem3_pto5} and \eqref{eq:problem3_pto6} derived from  \eqref{eq:kinematics_r} and \eqref{eq:kinematics_s}, respectively, are enforced through the composite speed matrix \(\mathbf{v} = [\mathbf{v}^r\  \mathbf{v}^s] \in \mathbb{R}^{N\times 2}\), where $\mathbf{v}^r$ and $\mathbf{v}^s$ represent the speed profiles for the nominal and contingency trajectories over the $N$-step horizon.

The collision avoidance constraints \eqref{eq:problem3_pto7}-\eqref{eq:problem3_pto8} and \eqref{eq:problem3_pto7_s}-\eqref{eq:problem3_pto8_s} correspond to the safety conditions \eqref{eq:nominal_safety} and \eqref{eq:reachable_safety}, respectively. The predicted longitudinal and lateral positions of HVs for the nominal trajectory are denoted by \(\mathcal{O}_{x} ,\mathcal{O}_{y} \in \mathbb{R}^{N \times M}\) while \(\mathcal{R}_{x} ,\mathcal{R}_{y} \in \mathbb{R}^{N \times M}\) represent the corresponding FRS occupancy centers for the contingency trajectory. The safety ellipsoid dimension \({l}^r_x\in \mathbb{R}^{N\times M}\) is constructed by stacking \(l^{h}_x\) along the nominal trajectory, while ${l}^r_y, {l}^s_x, {l}^s_y$ are constructed analogously.

For the state constraints \eqref{eq:problem3_pto9} and \eqref{eq:problem3_pto10} implementing \eqref{eq:problem_6}, the constraint matrix ${G} = [\mathcal{B}^T\ -\mathcal{B}^T\ \ddot{\mathcal{B}}^T\ -\ddot{\mathcal{B}}^T\ \dddot{\mathcal{B}}^T\ -\dddot{\mathcal{B}}^T]^T\in \mathbb{R}^{6N\times (n+1)}$ is constructed from $\mathcal{B}=[\mathcal{B}_1\, \cdots \,\mathcal{B}_N]\in\mathbb{R}^{(n+1)\times N}$. The matrices \({h}_x,{h}_y \in \mathbb{R}^{6N\times 2}\) contain the corresponding state bounds for longitudinal and lateral motions, respectively.

The constraints \eqref{eq:problem2_pto8}-\eqref{eq:problem2_pto10} enforce trajectory consistency for position, velocity, and acceleration at the initial segments between two trajectories. The corresponding matrices are ${A}_{c,x} ={A}_{c,y} = [\mathcal{B}_{[1:N_s]}^T\ \dot{\mathcal{B}}_{[1:N_s]}^T\ \ddot{\mathcal{B}}_{[1:N_s]}^T]^T\in \mathbb{R}^{((n+1) \times 3)  \times N_s}$ and ${A}_{c,\theta} = \mathcal{B}_{[1:N_s]} \in \mathbb{R}^{(n+1) \times N_s}$. For the consensus variables, each column of \({Y}_{x} \in \mathbb{R}^{3N_s \times N_c}\), \({Y}_{y} \in \mathbb{R}^{3N_s \times N_c}\), and \({Y}_{\theta} \in \mathbb{R}^{N_s \times N_c}\) maintains consistency across candidate trajectories during ADMM iterations.

\subsubsection{ADMM Iterations}
The barrier constraints \eqref{eq:problem3_pto7}--\eqref{eq:problem3_pto8_s} exhibit biconvexity in $[c_{x}\, c_{y}\, {d}]^T$ and $[\cos\omega\, \sin\omega]^T$. This biconvex structure allows decomposition of the joint constraints \eqref{eq:problem3_pto7}-\eqref{eq:problem3_pto8_s} into two alternating optimizations: (i) Position optimization of $[c_x\, c_y]^T$ and ${d}$ with fixed ${\omega}$; (ii) Angular optimization of ${\omega}$ with fixed $[c_x\, c_y]^T$ and ${d}$. Similarly, the kinematic constraints \eqref{eq:problem3_pto5}-\eqref{eq:problem3_pto6} admit an analogous decomposition, alternating between position variables $[c_x\, c_y]^T$ (fixed $c_\theta$) and heading angle $c_\theta$ (fixed $[c_x\, c_y]^T$). Given that the quadratic objective \eqref{eq:problem3_pto1} preserves convexity, this constraint structure enables efficient ADMM-based decomposition of \eqref{problem3} into quadratic programming (QP) subproblems.

We derive an augmented Lagrangian of \eqref{problem3} for ADMM iterations, as follows:  
\begin{align} \vspace{-0mm} 
&\mathcal{L}({ c_x, c_y, c_{\theta} }, {{Z}_x, {Z}_y},   {{\omega}, {d}},  {Y}_x,{Y}_y,{Y}_{\theta} \nonumber\\
& \quad   \quad {\lambda}_{x}, {\lambda}_{y}, {{\lambda}_{\theta}, {\lambda}_{\text{obs},x}, {\lambda}_{\text{obs},y}, {\lambda}_{c,x}, {\lambda}_{c,y}, {\lambda}_{c,\theta}} 
)  \nonumber\\
    &= f_x(c_x) + f_y(c_y) +f(c_\theta) +  \mathcal{I}_{+}({Z}_x) + \mathcal{I}_{+}({Z}_y) \nonumber\\ 
    &\quad+ {\lambda}^T_{x}c_x  + {\lambda}^T_{y}c_y + \rho_x \left\| \mathcal{G}_{c,x}  + {Z}_x \right\|_{2}^{2} + \rho_y \left\| \mathcal{G}_{c,y} + {Z}_y \right\|_{2}^{2}\nonumber \\
    &\quad+ \rho_{\theta}  \left\| \mathcal{G}_{k,x}
    + \frac{{\lambda}_{\theta} }{\rho_{\theta}}
    \right\|_{2}^{2} 
    + \rho_{\theta}  \left\| \mathcal{G}_{k,y}
    +\frac{{\lambda}_{\theta} }{\rho_{\theta}}
    \right\|_{2}^{2} \nonumber \\
    &\quad+ \rho^r_{\text{obs}}  \left\| \mathcal{G}_{bo,x} 
    + \frac{ {\lambda}^r_{\text{obs},x}}{\rho^r_{\text{obs}}} \right\|_{2}^{2} 
       + \rho^r_{\text{obs}}  \left\| \mathcal{G}_{bo,y} 
    + \frac{{\lambda}^r_{\text{obs},y}}{\rho^r_{\text{obs}}} \right\|_{2}^{2} 
       \nonumber\\
       &\quad+ \rho^s_{\text{obs}}  \left\| \mathcal{G}_{br,x} 
    + \frac{ {\lambda}^s_{\text{obs},x}}{\rho^s_{\text{obs}}} \right\|_{2}^{2} 
       + \rho^s_{\text{obs}}  \left\| \mathcal{G}_{br,y} 
    + \frac{{\lambda}^s_{\text{obs},y}}{\rho^s_{\text{obs}}} \right\|_{2}^{2} 
       \nonumber\\
    &\quad+ \rho_{c,x }  \left\| \mathcal{G}_{cs,x} 
     + \frac{ {\lambda}_{c,x }}{\rho_{c,x}}  \right\|_{2}^{2} 
   + \rho_{c, y }  \left\| \mathcal{G}_{cs,y} 
     + \frac{ {\lambda}_{c,y }}{\rho_{c,y}}  \right\|_{2}^{2} \nonumber\\
       &\quad+ \rho_{c,\theta }  \left\| \mathcal{G}_{cs,\theta}      
      + \frac{ {\lambda}_{c,\theta }}{\rho_{c,\theta}}  \right\|_{2}^{2}.
    \label{lang_dual_problem}
\vspace{-0mm} \end{align}

\begin{figure*}[tp]
    \centering \hspace{-4mm}
    \subfigure[]{
        \includegraphics[scale=0.59]{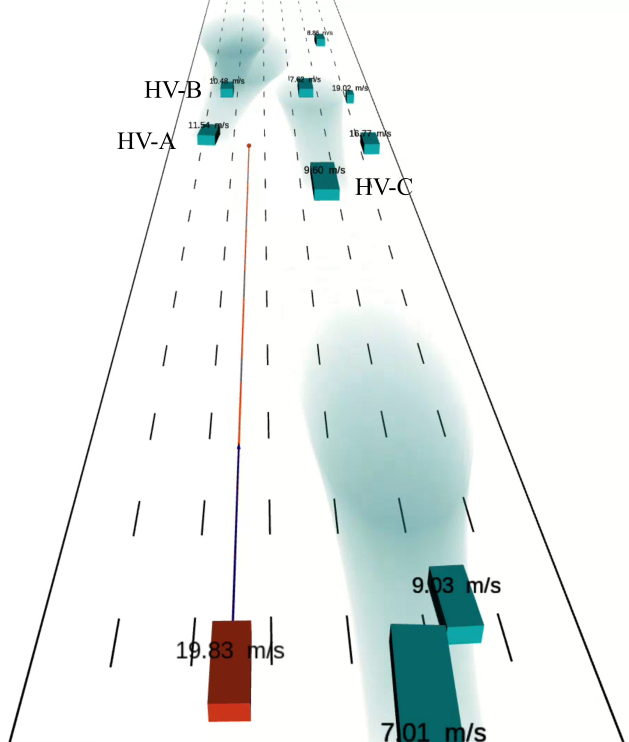}
    }
    \subfigure[]{
        \includegraphics[scale=0.59]{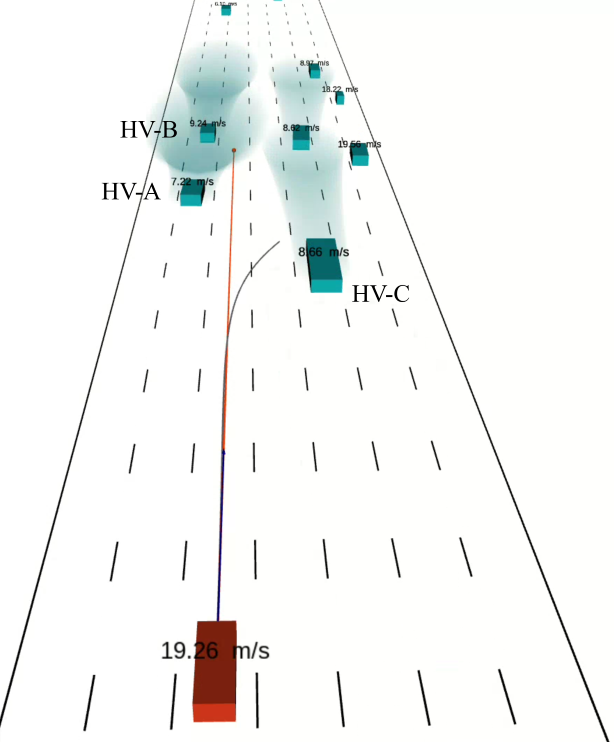}
    }
    \subfigure[]{
        \includegraphics[scale=0.59]{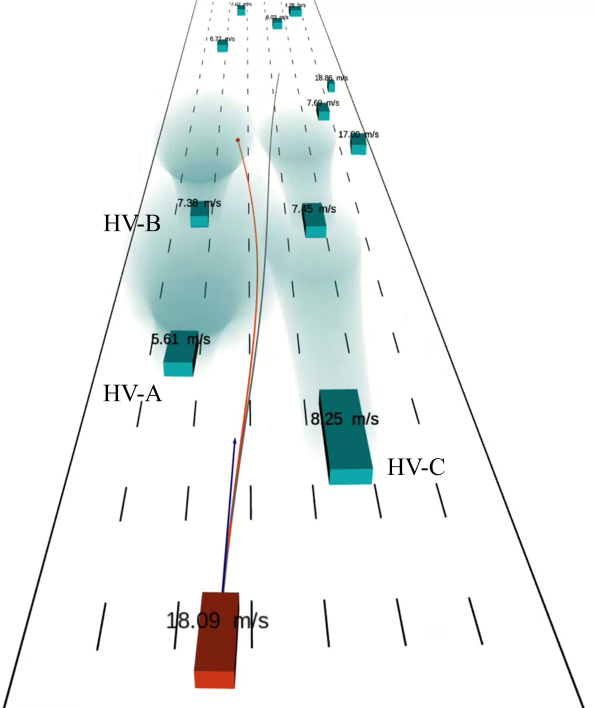}
    }
    \subfigure[]{
        \includegraphics[scale=0.59]{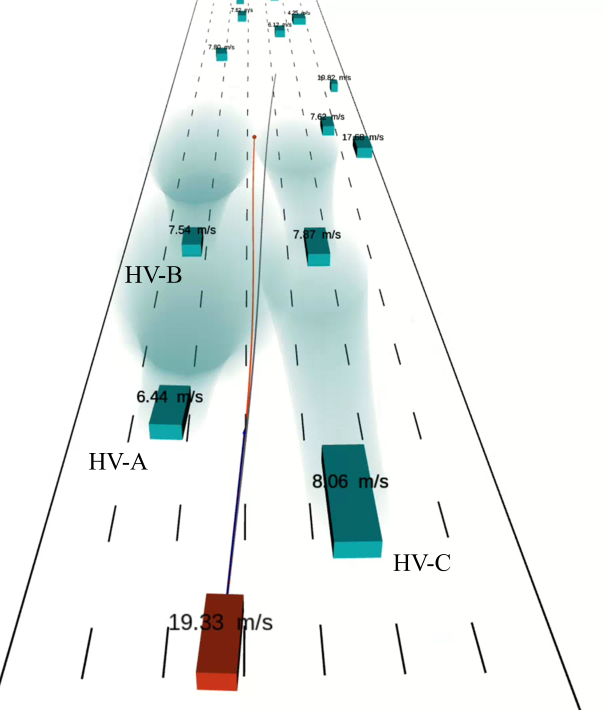}
    }
    \subfigure[]{
        \includegraphics[scale=0.59]{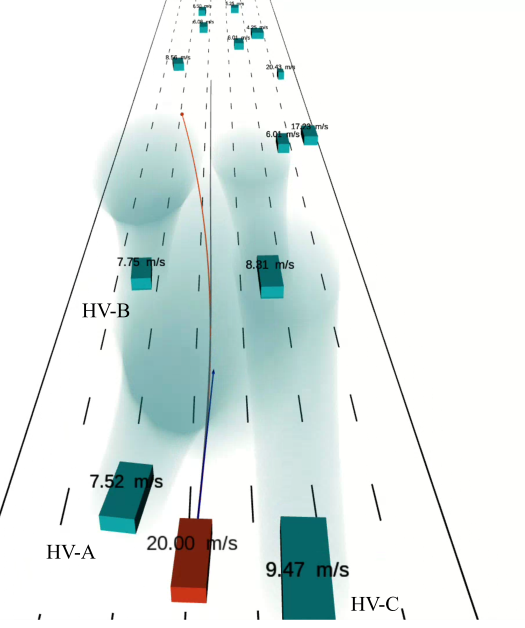}
    }
    \subfigure[]{
        \includegraphics[scale=0.59]{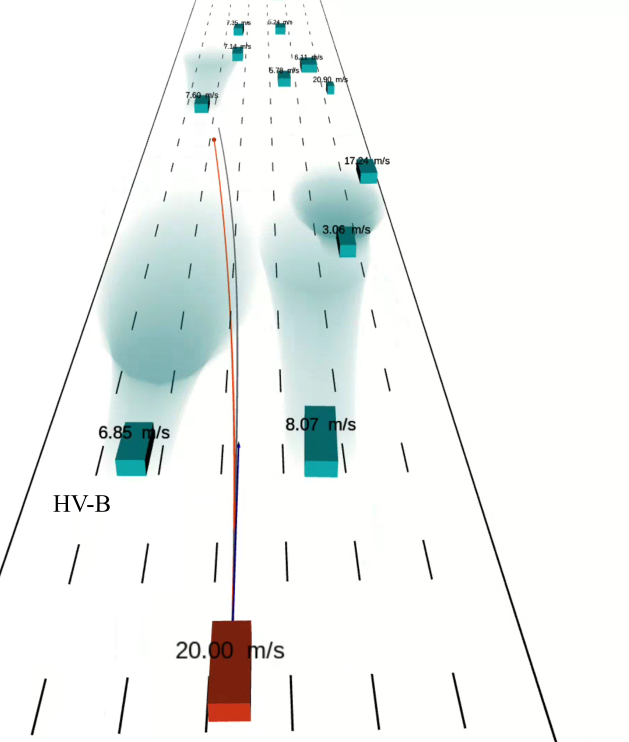}
    }
    \vspace{-4mm}
    \caption{Snapshots of the EV navigating dense traffic in the NGSIM dataset. The EV (red) jointly optimizes a nominal trajectory (red) and a contingency trajectory (black), with blue arrows showing velocities.     
    (a)-(b) The EV dynamically captures the uncertain intentions of HVs with online updating FRSs (shaded regions) as it approaches them; (c)-(d) The EV proactively responds to the potential cut-in of HV-A with anticipatory deceleration and controlled rightward avoidance maneuvers; (e)-(f) The EV safely navigates past the cut-in HV-A.}
    \label{fig:snapshots_cruise_static}
    \vspace{-4mm}
\end{figure*}

In this formulation, the dual variables \( {\lambda}_{x}, {\lambda}_{y} \in \rr^{(n+1)\times 2} \) correspond to inequality constraints \eqref{eq:problem3_pto9}-\eqref{eq:problem3_pto10} for iteration stability, following~\cite{taylor2016training}; $\rho_{x}$, $\rho_{y}$ $\rho_{\theta}$, $\rho^r_{\text{obs}}$, and $\rho^s_{\text{obs}}$ denote the associated $l_2$ penalty parameters.  
The consensus dual variables ${\lambda}_{c,x} = [{\lambda}^r_{c,x}\, {\lambda}^s_{c,x}]\in \rr^{3N_s
\times 2}$, ${\lambda}_{c,y} = [{\lambda}^r_{c,y}\, {\lambda}^s_{c,y}]\in \rr^{3N_s
\times 2}$ and ${\lambda}_{c,\theta} = [{\lambda}^r_{c,\theta}\, {\lambda}^s_{c,\theta}] \in \rr^{N_s\times 2}$ enforce the constraints~\eqref{eq:problem2_pto8}-\eqref{eq:problem2_pto10};  $\rho_{c,x}$, $ \rho_{c,y}$, $\rho_{c,\theta}$ are the corresponding $l_2$ penalty parameters, driving local variables toward their consensus global values ${Y}_x$, ${Y}_y$, and $ {Y}_{\theta}$.  
The remaining dual variables
$ {\lambda}_{\theta}\in \rr^{N\times 2}$, ${\lambda}_{\text{obs},x} = [{\lambda}^r_{\text{obs},x}\, {\lambda}^s_{\text{obs},x}] \in \rr^{(N\times M)\times 2}$, and ${\lambda}_{\text{obs},y} = [{\lambda}^r_{\text{obs},y}\,  {\lambda}^s_{\text{obs},y}] \in \rr^{(N\times M)\times 2}$ are assigned to the constraints \eqref{eq:problem3_pto7}-\eqref{eq:problem3_pto8_s}.  We introduce slack variables $Z_x, Z_y$ to handle potential constraint violations in \eqref{eq:problem3_pto9} and \eqref{eq:problem3_pto10}. The indicator function $\mathcal{I}_{+}(\cdot)$ enforces exact constraint satisfaction: $\mathcal{I}_{+}(Z) = 0 \text{ if } Z \geq 0, \text{and}\ \infty \text{ otherwise}$.

Following the ADMM framework~\cite{zheng2024barrier}, the augmented Lagrangian function~\eqref{lang_dual_problem} naturally partitions the primal variables into five distinct groups for alternate optimization: 
heading angle $c_\theta$, longitudinal variables $\{{c}_x, {Z}_x\}$, lateral variables $\{{c}_y, {Z}_y\}$, angular variables ${\omega}$, and distance variables ${d}$.

\textbf{\textit{Primal Update.}}\quad 
The primal variables are updated sequentially by solving the following QP subproblems:
\begin{alignat}{2} 
    c^{\iota+1}_{\theta} &= \displaystyle\operatorname*{\text{argmin}}_{c_{\theta}}~~
    \mathcal{L} \Big(c_\theta, \{\cdot\}^\iota \Big) 
    \ \text{s.t.} \
    [F_{\theta,0}\,F_{\theta,f}] c_{\theta}  = [E_{\theta,0}\,E_{\theta,f}],  \nonumber
\end{alignat} 
\vspace{-3mm}
\begin{alignat}{2} 
    c^{\iota+1}_{x}  &= \displaystyle\operatorname*{ \text{argmin}}_{c_{x}}~~
    \mathcal{L}   \Big(c_x, c^{\iota+1}_{\theta}, \{\cdot\}^\iota \Big) 
      \ \text{s.t.}\
   F_0 c_{x}  = E_{x,0},    \nonumber 
\end{alignat} 
\vspace{-3mm}
\begin{alignat}{2} 
    c^{\iota+1}_{y}  &= \displaystyle\operatorname*{ \text{argmin}}_{c_{y}}~~
    \mathcal{L}   \Big( c_y, c^{\iota+1}_{x}, c^{\iota+1}_{\theta}, \{\cdot\}^{\iota} \Big)
       \ \text{s.t.}\
   F_0 c_{y}  = E_{y,0},    \nonumber 
\end{alignat} 
where $\{\cdot\}^\iota$ compactly represents all other variables of $\mathcal{L}$ fixed at iteration $\iota$.


 To handle the slack variables $Z_x, Z_y$ for the inequality constraints \eqref{eq:problem3_pto9}-\eqref{eq:problem3_pto10}, the iteration follows the form
 \begin{equation}
Z^{\iota+1}_\bullet = \max(0, F_\bullet - G c^{\iota+1}_\bullet), \quad \bullet \in \{x,y\}
       \label{eq:z_x_relaxedadmm_update} \nonumber
\end{equation} 
 to strictly enforce the inequality constraints~\cite{ghadimi2015optimal}.

By leveraging the conditions in \eqref{eq:obtain_omega} and \eqref{eq:barrier_cons_polar}, we derive analytical solutions {for \(\omega\)} and \(d\):
\begin{subequations}
\small
\begin{align}
\omega^{r,\iota+1} &= \arctan\left(\textcolor{black}{\frac{l^r_x \cdot ({\mathcal{B}^T} c^{\iota+1}_y - \mathcal{O}_y)}{l^r_y \cdot ({\mathcal{B}^T} c^{\iota+1}_x - \mathcal{O}_x)}} \right), \nonumber \\
\omega^{s,\iota+1} &= \arctan\left(\textcolor{black}{\frac{l^s_x \cdot (\mathcal{B}^T c^{\iota+1}_y - \mathcal{R}_y)}{l^s_y \cdot (\mathcal{B}^T c^{\iota+1}_x - \mathcal{R}_x)}}\right),\nonumber \\
d^{\iota+1} &= \max\Big(1, 1 + (1-\alpha)(d^\iota -1)\Big). \nonumber
\end{align}
\end{subequations}

\textbf{\textit{Consensus Variables Update.}} \quad
As established in~\cite{boyd2011distributed}, the dual variables for consensus constraints maintain the zero-mean property:
\begin{equation}
\lambda^{r,\iota}_{c,\bullet} + \lambda^{s,\iota}_{c,\bullet} = 0, \quad \bullet \in \{x,y,\theta\}.\nonumber
\end{equation}
This structural property persists through ADMM iterations. The global variables update according to:
\begin{equation}
\small
\begin{aligned}
Y^{\iota+1}_\bullet[:,0] = Y^{\iota+1}_\bullet[:,1] = \frac{1}{2}A^T_{c,\bullet}\Big(c^{\iota+1}_\bullet[:,0] + c^{\iota+1}_\bullet[:,1]\Big), \\
\quad \bullet \in \{x,y,\theta\}.\nonumber
\end{aligned}
\label{eq:consensus_x_admm_update}
\end{equation}

\textbf{\textit{Dual Variables Update.}}\quad 
The dual variables are updated through:
\begin{subequations}
\small
\begin{align}
&\lambda^{\iota+1}_\theta = \lambda^\iota_\theta + \rho_\theta \Big(\mathcal{B}^Tc^{\iota+1}_\theta - \arctan\Big(\frac{\dot{\mathcal{B}}^T c^{\iota+1}_y}{\dot{\mathcal{B}}^T c^{\iota+1}_x}\Big)\Big), \notag\\
&\lambda^{\iota+1}_\bullet = \lambda^\iota_\bullet + \rho_\bullet \Big((1-\alpha_\bullet)(Z^{\iota+1}_\bullet - Z^\iota_\bullet)    \notag\\& \quad\quad \quad \quad \quad + \alpha_\bullet (G c^{\iota+1}_\bullet - F_\bullet + Z^{\iota+1}_\bullet)\Big),  \notag\\
&\lambda^{r,\iota+1}_{\text{obs},\bullet} = \lambda^{r,\iota}_{\text{obs},\bullet} + \rho^r_{\text{obs}} \Big(\mathbf{v}^r c^{r,\iota+1}_\bullet - \mathcal{O}_\bullet  - l^r_\bullet d^{r,\iota+1} \cos(\omega^{r,\iota+1})\Big),  \notag\\
&\lambda^{s,\iota+1}_{\text{obs},\bullet} = \lambda^{s,\iota}_{\text{obs},\bullet} + \rho^s_{\text{obs}} \Big(\mathbf{v}^s c^{s,\iota+1}_\bullet - \mathcal{R}_\bullet  - l^s_\bullet d^{s,\iota+1} \cos(\omega^{s,\iota+1})\Big),  \notag\\
&\lambda^{\iota+1}_{c,\theta} = \lambda^\iota_{c,\theta} + \rho_{\theta} (A^T_{c,\theta} c^{\iota+1}_\theta - Y^{\iota+1}_\theta),  \notag\\
&\lambda^{\iota+1}_{c,\bullet} = \lambda^\iota_{c,\bullet} + \rho_{c,\bullet} (A^T_{c,\bullet} c^{\iota+1}_\bullet - Y^{\iota+1}_\bullet),  \quad\quad \quad \quad \quad \bullet \in \{x,y\}. \notag
\end{align}
\label{eq:dual_updates}
\vspace{-2mm}
\end{subequations}


The iteration terminates when the primal residual $\|r^\iota\|_2 \le \epsilon_{\text{pri}}=0.5$ and dual residual $\|s^\iota\|_2 \le \epsilon_{\text{dual}}=0.01$, or $\iota_{\max}=200$. The primal residual $r^\iota$ concatenates the violations of all equality constraints encoded in the augmented Lagrangian \eqref{lang_dual_problem}, including kinematics, barrier safety, consensus, and slack consistency. The primal tolerance $\epsilon_{\text{pri}}$ applies to the aggregate $L_2$-norm, ensuring a small element-wise deviation. The dual residual $s^\iota$ quantifies the stationarity of the Lagrangian, defined by the change in slack variables $Z_x, Z_y$ and consensus $Y_x, Y_y, Y_\theta$ across iterations.

\begin{remark}
Our decomposition strategy isolates non-convexities, ensuring that each ADMM subproblem remains a strictly convex QP with a positive-definite Hessian. This guarantees unique closed-form updates and prevents numerical singularities, thereby promoting convergence to a stationary point. Numerical stability is further enhanced by warm-start initialization. In rare cases of divergence, a hierarchical safety mechanism ensures system safety by utilizing feasible suboptimal solutions or the collision-free tail of the preceding trajectory, with an emergency stop mechanism reserved as the last-resort safeguard.
\end{remark}

\vspace{-7mm}
\section{Results}\label{sec:Results}
This section presents a comprehensive evaluation of the proposed approach through both high-fidelity simulations and real-world experiments. The simulation study constructs an urban intersection scenario and utilizes safety-critical highway scenarios from the Next Generation Simulation (NGSIM) dataset\footnote{\url{https://data.transportation.gov/Automobiles/Next-Generation-Simulation-NGSIM-Vehicle-Trajector/8ect-6jqj}}, while physical validation is conducted using a 1:10-scale Ackermann-steering platform operating with HVs exhibiting diverse driving behaviors.

\vspace{-4mm}
\subsection{Simulation}\label{subsec:Sim_results}
The simulations are implemented in C++ running on an Ubuntu 24.04 LTS system with an Intel Ultra 9 285H CPU with 16 cores and 32 threads. The visualization uses RViz in ROS Noetic with a 100 Hz communication rate, while the planning loop operates with a time step $\delta T = 0.08\,\text{s}$ and a planning horizon $N = 50$ for both nominal and safe contingency trajectories. Statistically, the solver executes an average of 155 iterations (range: 24--200) per cycle with a mean computation time of 34 ms, satisfying real-time requirements. Biconvex structure exploitation with ADMM enables reliable real-time optimization.

The diagonal elements of the weighting matrices in \eqref{eq:problem3_pto1} are set to $50$ for ${Q}_x,{Q}_y,{Q}_\theta,{Q}_{xv},{Q}_{yv}$, and $100$ for ${Q}_{xd},{Q}_{yd}$. For the ADMM solver, penalty parameters are empirically tuned for robust convergence: $\rho_\theta=5$, $\rho^r_{\text{obs}}=\rho^s_{\text{obs}}=10$, and $\rho_{c,\bullet}=5$ ($\bullet \in \{x,y,\theta\}$). The barrier coefficient parameter $\alpha$ is set to $0.8$. The HV geometric ellipsoid axes $l_x^h$ and $l_y^h$ are set to $6\,\text{m}$ and $4\,\text{m}$ in the intersection scenario, and set according to their actual shapes in the highway dataset. The B\'ezier curve order $n$ is set to $10$. Acceleration bounds of $\pm5\, \mathrm{m/s^2}$ are enforced in both longitudinal and lateral directions. The terminal angular velocity of the orientation is set to zero. The considered HV number is set as $M=4$. The consensus step $N_s$ is set to $5$. The control dataset is initialized with $\pm0.2\, \mathrm{m/s^2}$ for longitudinal acceleration bounds and $\pm0.1\, \mathrm{m/s^2}$ for lateral acceleration bounds, representing a deliberate under-approximation.
To handle perception uncertainties, a linear Kalman Filter estimates the HV state from position and velocity observations with noise covariance $\text{diag}(0.1, 0.1, 0.5, 0.5)$. The estimated state covariance is dynamically integrated into the FRS initialization \eqref{eq:init_reachable_set} at each cycle as a $3\sigma$ confidence bound.


We evaluate five approaches:
\begin{itemize}
    \item[(a)] \textbf{Proposed Safe Contingency Planner}: Our complete framework integrating online-updated FRS-based barrier constraints with contingency planning;
    
    \item[(b)] \textbf{Deterministic Barrier Planner}: A variant of our planner replacing the FRS-based barrier constraint \eqref{eq:reachable_safety} with nominal barrier constraint \eqref{eq:nominal_safety} in both nominal and contingency branches;
    
    \item[(c)] \textbf{Worst-Case Barrier Planner}: A conservative version using worst-case FRS predictions (derived from a prior control dataset with $\pm3\, \mathrm{m/s^2}$ acceleration bounds) instead of online-updated FRS in \eqref{eq:reachable_safety};

    \item[(d)] \textbf{ST-RHC}~\cite{zheng2024}: A baseline NMPC method using constant-velocity prediction for collision-avoidance constraints, solved efficiently through ACADO toolkit~\cite{Houska2011a} with multiple shooting and sequential quadratic programming;
    
    \item[(e)] \textbf{Uncertain-Aware Planner}~\cite{zhou2025robustb}: A baseline uncertain-aware planner embeds FRSs directly in NMPC solved with ACADO in C++, maintaining identical FRS update parameters for fair comparison.
\end{itemize}

The comprehensive evaluation incorporates four key performance metrics:
\begin{itemize}
    \item \textbf{Safety Performance}: Collision rate $P_\text{c}$ and the minimum distance to the nearest HV $d_{\text{min}}$;
    \item \textbf{Comfort}: Peak longitudinal/lateral jerk $J_{\text{x,max}}$, $J_{\text{y,max}}$;
    \item \textbf{Driving Efficiency}: Mean speed $v_{\text{mean}}$ and travel distance $s_{\text{mean}}$;
    \item \textbf{Computation}: Mean computation time $t_{\text{mean}}$.
\end{itemize}

\begin{figure}[tb]
\begin{center}
\includegraphics[width=1.\columnwidth]{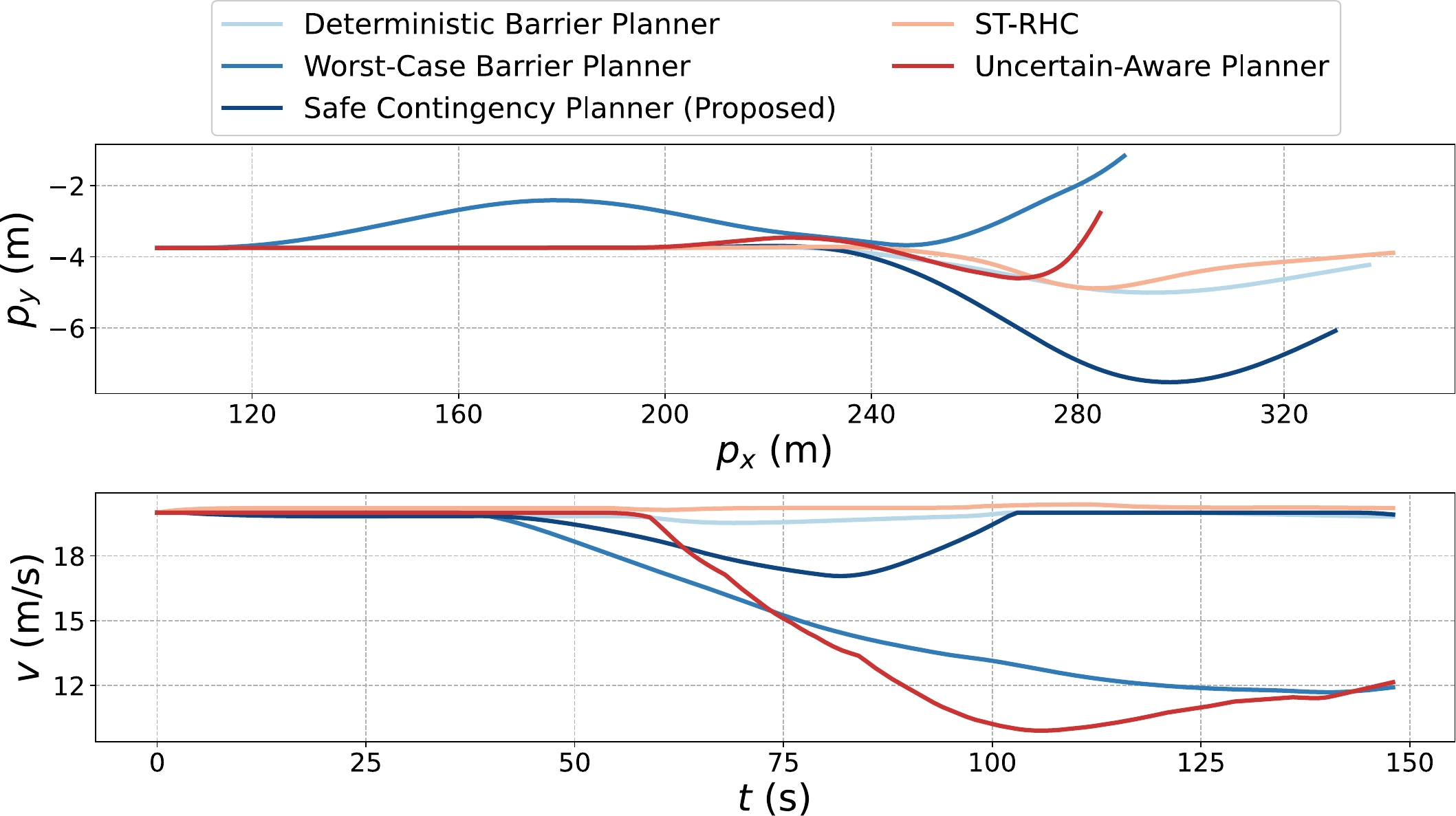}    \vspace{-5mm}
\caption{Trajectory and speed profiles of the EV under different planning methods in the highway scenario. Our method proactively adjusts speed and starts lateral evasion early to defend HV-A's uncertain maneuvers without excessive conservatism.
} \vspace{-4mm}
\label{fig:scene_1_trajectory}
\end{center}
\end{figure} 

\begin{figure}[tb]
\begin{center}
\includegraphics[width=1.\columnwidth]{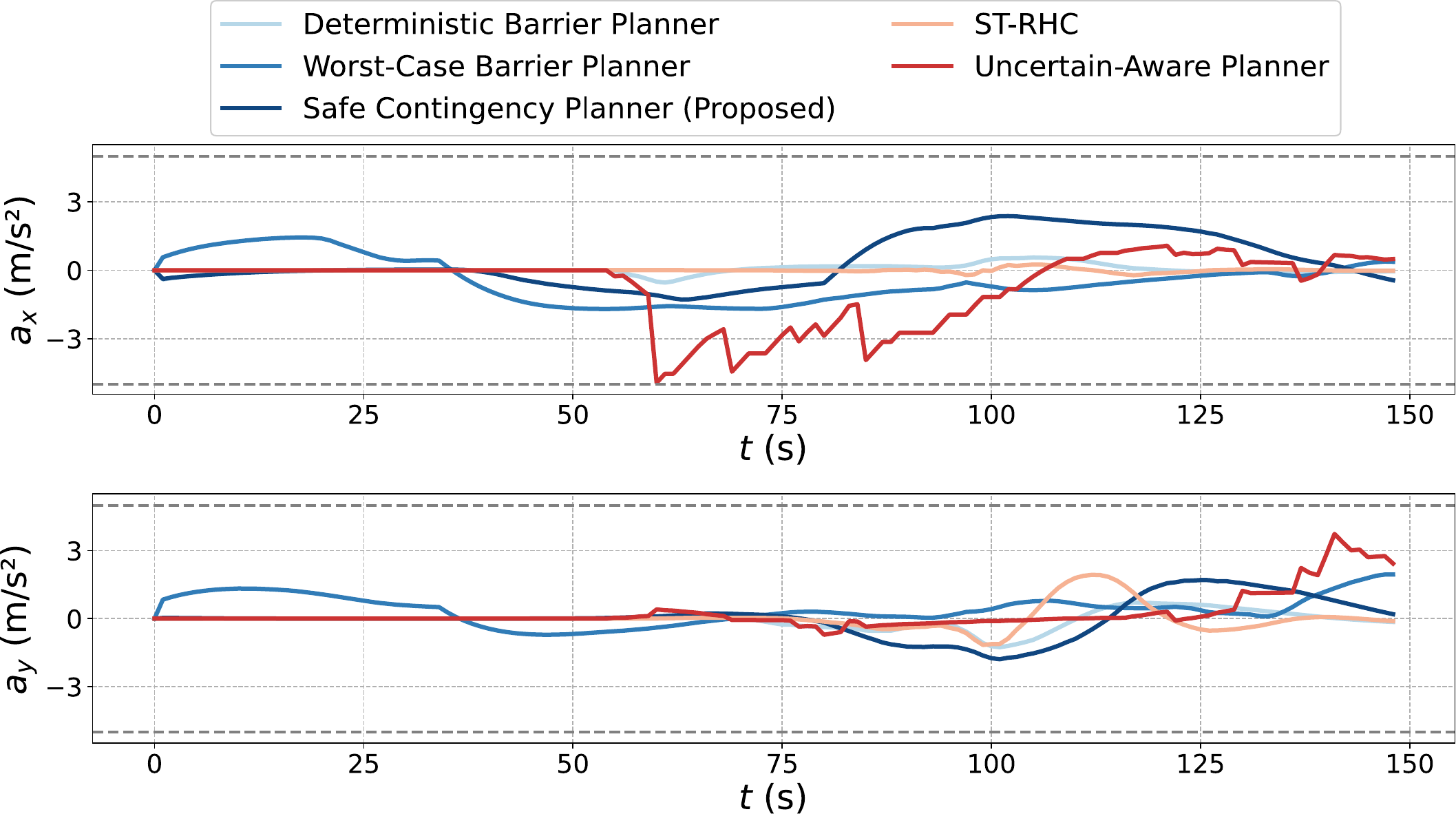}    \vspace{-5mm}
\caption{  Evolution of longitudinal and lateral acceleration profiles of EV with different planning methods (bounds shown as dashed lines). Our method shows proactive deceleration followed by smooth lateral avoidance during HV-A's abrupt cut-in maneuver.
} 
\label{fig:scene_1_acc}
\end{center}
\vspace{-4mm}
\end{figure} 

\begin{table}[t]
\caption{Performance Evaluation Across Key Metrics}
\label{tab:comparison}
\centering
\footnotesize
\setlength{\tabcolsep}{3pt}
\begin{tabular}{@{}lccccc@{\hspace{3pt}}cc@{}}
\toprule
\textbf{Method} & 
\multicolumn{2}{c}{\textbf{Safety}} & 
\multicolumn{2}{c}{\textbf{Comfort}} & 
\multicolumn{2}{c}{\textbf{Efficiency}} & 
\textbf{Time} \\
\cmidrule(lr){2-3}\cmidrule(lr){4-5}\cmidrule(lr){6-7}
& $P_\text{c}$ & $d_{\min}$ & $J_{x,\max}$ & $J_{y,\max}$ & $v_{\text{mean}}$ & $s_{\text{mean}}$ & $t_{\text{mean}}$ \\
& (\%) & (m) & (m/s\textsuperscript{3}) & (m/s\textsuperscript{3}) & (m/s) & (m) & (s) \\
\midrule
Deter. Barr. Pl. & 27.27 & 0.31 & 5.00 & 3.21 & 19.86 & 234.05 & 0.035 \\
Wors. Barr. Pl. & 0.00 & 6.53 & 6.17 & 7.08 & 15.75 & 186.07 & 0.033 \\
\textbf{Proposed} & 0.00 & 1.48 & 4.41 & 3.13 & 19.31 & 228.38 & 0.034 \\
ST-RHC \cite{zheng2024} & 18.18 & 0.35 & 6.39 & 3.83 & 20.21 & 239.49 & 0.029 \\
Uncert. Pl. \cite{zhou2025robustb} & 0.00 & 5.61 & 9.67 & 10.23 & 14.28 & 168.16 & 0.038 \\
\bottomrule
\end{tabular}
\vspace{-4mm}
\end{table}


\begin{table}[t]
\caption{Quantitative analysis of robustness to noise distribution and covariance mismatch.}
\label{tab:robust-ablations}
\centering
\footnotesize
\setlength{\tabcolsep}{3pt}
\begin{tabular}{@{}lccccc@{\hspace{3pt}}cc@{}}
\toprule
\textbf{Noise Condition} & 
\multicolumn{2}{c}{\textbf{Safety}} & 
\multicolumn{2}{c}{\textbf{Comfort}} & 
\multicolumn{2}{c}{\textbf{Efficiency}} & 
\textbf{Time} \\
\cmidrule(lr){2-3}\cmidrule(lr){4-5}\cmidrule(lr){6-7}
& $P_\text{c}$ & $d_{\min}$ & $J_{x,\max}$ & $J_{y,\max}$ & $v_{\text{mean}}$ & $s_{\text{mean}}$ & $t_{\text{mean}}$ \\
& (\%) & (m) & (m/s\textsuperscript{3}) & (m/s\textsuperscript{3}) & (m/s) & (m) & (s) \\
\midrule
\textit{Covariance Mismatch} & & & & & & & \\ 
Gaussian ($0.1\times\Sigma_0$) & 0.00 & 1.46 & 4.40 & 2.85 & 19.3 & 228.50 & 0.033 \\
Gaussian ($0.5\times\Sigma_0$) & 0.00 & 1.47 & 4.37 & 2.98 & 19.3 & 228.66 & 0.033 \\
Gaussian ($\Sigma_0$) & 0.00 & 1.48 & 4.41 & 3.13 & 19.3 & 228.38 & 0.034 \\
Gaussian ($5.0\times\Sigma_0$) & 0.00 & 2.00 & 5.38 & 4.45 & 16.4 & 194.11 & 0.034 \\
Gaussian ($10.0\times\Sigma_0$) & 0.00 & 3.84 & 6.14 & 4.80 & 15.9 & 188.17 & 0.033 \\ 
\midrule
\textit{Distribution Type} & & & & & & & \\
Laplace & 0.00 & 1.48 & 4.39 & 3.13 & 19.3 & 228.50 & 0.033 \\
Uniform & 0.00 & 1.44 & 4.38 & 2.93 & 19.3 & 228.59 & 0.033 \\
Cauchy & 0.00 & 2.38 & 7.28 & 4.79 & 14.6 & 172.52 & 0.034 \\ 
\bottomrule
\end{tabular}
\vspace{8pt}
\parbox{\linewidth}{\footnotesize \textit{Note:} The parameters of non-Gaussian distributions are configured to align the noise level with the nominal Gaussian baseline $\Sigma_0$ for fair comparison.}
\vspace{-17pt}
\end{table}

\textbf{1) Highway Cut-in Scenario on NGSIM.}

\begin{figure*}[tbp]
    \centering \hspace{-4mm}
    \subfigure[]{
        \includegraphics[scale=0.2435]{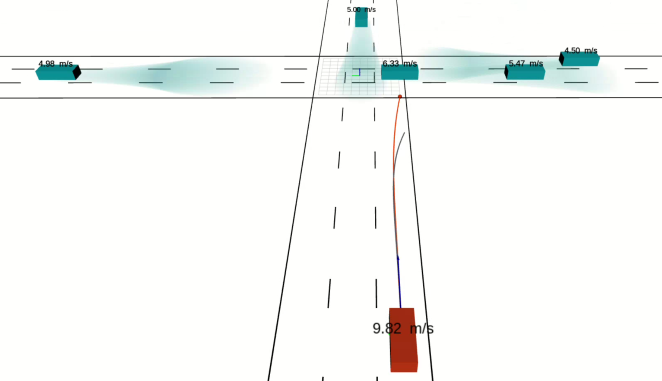}
    }
    \subfigure[]{
        \includegraphics[scale=0.2435]{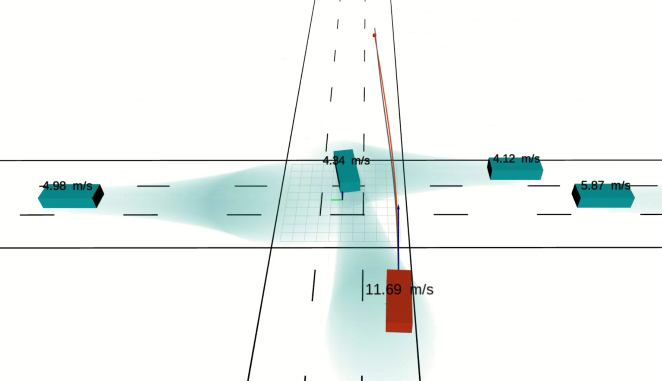}
    }
    \subfigure[]{
        \includegraphics[scale=0.2435]{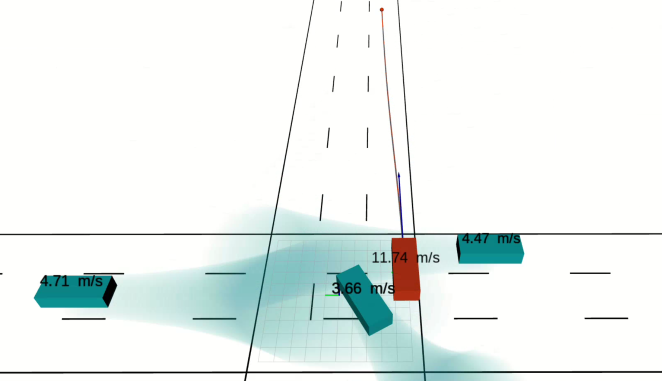}
    }
    \subfigure[]{
        \includegraphics[scale=0.2435]{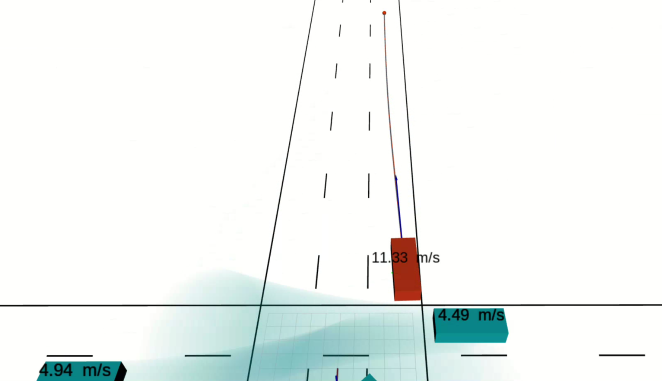}
    }
    \vspace{-4mm}
    \caption{Snapshots of the EV safely navigating through an unsignalized intersection. (a)-(d) demonstrate the EV's (red) response to a southbound HV's abrupt right turn while handling uncertainties from crossing traffic flows. Online updating HV FRSs are visualized as shaded regions. }
    \label{fig:intersection_snapshots}
    \vspace{-4mm}
\end{figure*}

\begin{figure}[tb]
\begin{center}
\includegraphics[width=1.0\columnwidth]{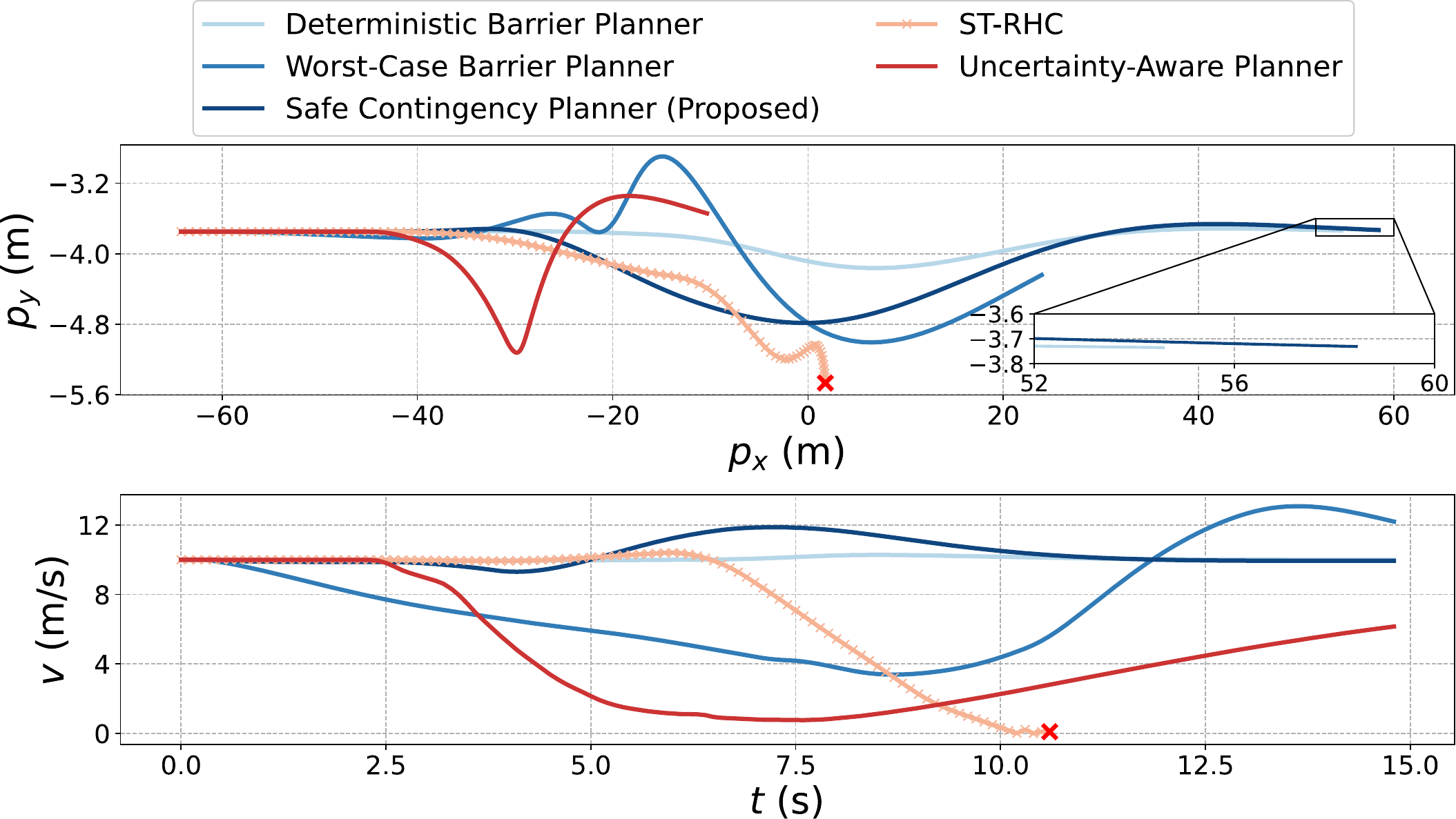}    \vspace{-4mm}
\caption{  
Trajectory comparison in the intersection scenario demonstrates coordinated defensive deceleration and proactive lateral adjustments using the proposed method, achieving the longest navigation distance. While both the Deterministic Barrier Planner and ST-RHC result in collision (with ST-RHC additionally suffering from planner infeasibility, indicated by red crosses), the Worst-Case Barrier Planner and Uncertainty-Aware Planner exhibit excessive caution with unnecessary deceleration.
} \vspace{-8mm}
\label{fig:intersection_p}
\end{center}
\end{figure} 
\begin{figure}[tb]
\begin{center}
\includegraphics[width=1.0\columnwidth]{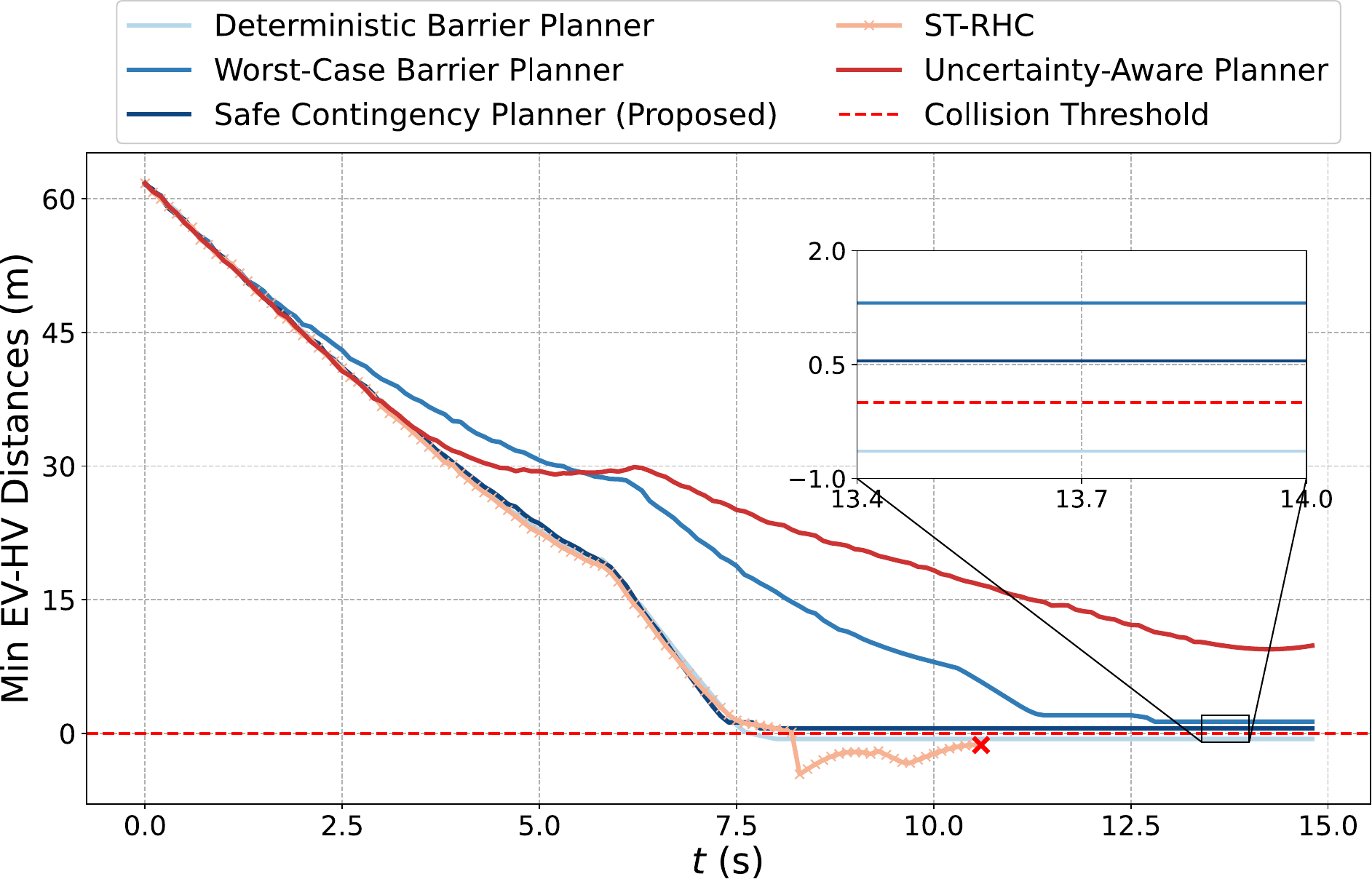}    \vspace{-4mm}
\caption{  
The minimum EV-HV distances ($d_{\text{min}}$) show that our method maintains appropriate safety margins. The Deterministic Barrier Planner and ST-RHC fail to avoid collision, while the Worst-case Barrier Planner and the Uncertainty-Aware planner exhibit unnecessarily conservative distances. 
} \vspace{-7mm}
\label{fig:intersection_min_dist}
\end{center}
\end{figure} 
The validation leverages challenging dense traffic driving scenarios from the NGSIM dataset to rigorously assess the performance. In the tested scenario, the EV navigates a multi-lane highway environment. As shown in Fig. \ref{fig:snapshots_cruise_static}, the HV-A in the leftmost lane initiates a rightward staged lane change, briefly pauses in the second leftmost lane, and then abruptly resumes its rightward maneuver just as the EV approaches. Meanwhile, the HV-B and HV-C maintain steady driving. This scenario challenges the EV's ability to dynamically identify and adapt to uncertain HV intentions while proactively responding to potential cut-ins.

The simulation configuration establishes an initial EV speed of $20\, \text{m}/\text{s}$, with time headways to HV-A systematically varied from $4.5\,\text{s}$ to $5.5\,\text{s}$ in $0.1\,\text{s}$ increments to ensure statistical reliability, with each parameter tested three times. Perception noise is modeled as a zero-mean Gaussian distribution \cite{zheng2024safe}. The standard deviations are modeled as decreasing functions of the EV-HV distance. For the noise in longitudinal position, the standard deviation is $\sigma_{p_x}^{h} = \overline{\sigma}_{p_x}/\max\left(\frac{10}{s^h + 0.1}, 1\right)$, where $s^h$ denotes the EV-HV distance and $\overline{\sigma}_{p_x}=0.2\, \text{m}$ is a noise parameter. The same principle applies to the lateral position, longitudinal and lateral velocity components, with the corresponding noise parameters set as \(\overline{\sigma}_{p_y}=0.2\, \text{m}\), \(\overline{\sigma}_{v_x}=0.1\, \text{m/s}\) and \(\overline{\sigma}_{v_y}=0.1\, \text{m/s}\), repsectively. These parameters define the baseline noise profile, denoted as the nominal environmental noise condition $\Sigma_0$. In the robustness analysis, we scale this baseline $\Sigma_0$ to simulate varying degrees of sensing quality, while the algorithm's internal parameters remain fixed.

The performance evaluation presented in Table \ref{tab:comparison} demonstrates that the proposed method achieves collision-free with a safe minimum EV-HV distance of $1.48 \,\text{m}$, while maintaining comfort requirements and competitive efficiency metrics. It represents significant improvements over both the Deterministic Barrier Planner with $27.27\%$ collision rate and ST-RHC with $18.18\%$ collision rate. This safety assurance stems from our rigorous FRS-based barrier constraints.

Trajectory analysis in Fig. \ref{fig:scene_1_trajectory} and  Fig. \ref{fig:scene_1_acc} reveals distinct behavioral patterns among the compared methods. The proposed approach exhibits defensive characteristics through gradual deceleration initiated at approximately $40\,\text{s}$ to anticipate potential cut-in maneuvers by HV-A. This proactive strategy enables smooth speed adjustment combined with moderate lateral avoidance when HV-A executes its sudden lane change, ensuring safe passage. In contrast, the Worst-Case Barrier method exhibits overly conservative behaviors, implementing premature, exaggerated deceleration and lateral displacement, substantially compromising driving efficiency. Meanwhile, the ST-RHC, relying exclusively on deterministic predictions, resorts to last-minute aggressive avoidance maneuvers when confronted with actual cut-in situations. Notably, our method demonstrates significant comfort improvements, reducing peak longitudinal and lateral jerk by 31.0\% and 18.3\% respectively compared to ST-RHC. 

Moreover, comparative analysis with the Uncertain-Aware Planner \cite{zhou2025robustb} shows superior performance of our method in both driving efficiency and comfort. While this baseline approach directly employs FRS for obstacle avoidance, its lack of contingency planning capability results in abrupt lateral maneuvers and degraded driving efficiency.  The discontinuous nature of FRS updates leads to sudden trajectory adjustments, manifesting as high lateral jerk. Our method addresses these limitations through integrating FRS-based barrier constraint in contingency planning, achieving a 64.4\% reduction in lateral jerk while maintaining smoother acceleration profiles within constraints. 

Critically, the comparison serves as a sensitivity study on the initialization of control-intent set. Even with underestimated initialization, our method maintains zero collisions by rapidly adapting to aggressive behaviors, whereas the Worst-Case Barrier method with an overestimated control-intent set achieves safety at the cost of high conservatism. This confirms that the framework remains valid under overestimation while avoiding conservatism through adaptation when underestimated.


To assess the practical robustness against non-Gaussian or covariance-mismatched Gaussian noise, we conducted extensive sensitivity analyses. As summarized in Table II, the framework maintains a $0\%$ collision rate across all scenarios. Notably, the proposed method inherently accommodates perception uncertainty: under heavy-tailed noise (e.g., Cauchy), the planner adaptively expanded the safety margins (increasing $d_{\min}$ from $1.48$ m to $2.38$ m) and reduced speed, effectively trading off efficiency for safety without parameter retuning.


\begin{table}[t]
\caption{Sensitivity Analysis of Barrier Parameter $\alpha$}
\label{tab:alpha_sensitivity}
\centering
\footnotesize
\setlength{\tabcolsep}{3.5pt}
\begin{tabular}{@{}lccccccc@{}}
\toprule
\textbf{Param.} & 
\multicolumn{2}{c}{\textbf{Safety}} & 
\multicolumn{2}{c}{\textbf{Comfort}} & 
\multicolumn{2}{c}{\textbf{Efficiency}} & 
\textbf{Time} \\
\cmidrule(lr){2-3}\cmidrule(lr){4-5}\cmidrule(lr){6-7}
$\alpha$ & $P_\text{c}$ & $d_{\min}$ & $J_{x,\max}$ & $J_{y,\max}$ & $v_{\text{mean}}$ & $s_{\text{mean}}$ & $t_{\text{mean}}$ \\
& (\%) & (m) & (m/s\textsuperscript{3}) & (m/s\textsuperscript{3}) & (m/s) & (m) & (s) \\
\midrule
0.2 & 0.00 & 1.511 & 4.48 & 3.40 & 19.2 & 227.79 & 0.034 \\
0.4 & 0.00 & 1.497 & 4.45 & 3.34 & 19.3 & 228.25 & 0.034 \\
0.6 & 0.00 & 1.489 & 4.42 & 3.35 & 19.3 & 228.24 & 0.034 \\
0.8 & 0.00 & 1.484 & 4.41 & 3.13 & 19.3 & 228.38 & 0.034 \\
1.0 & 0.00 & 1.474 & 4.39 & 3.13 & 19.3 & 228.48 & 0.035 \\
\bottomrule
\end{tabular}
\vspace{-6mm}
\end{table}

\textbf{2) Urban Intersection Scenario.}
To further rigorously evaluate the capability of handling emergent traffic conflicts under uncertainty, we conduct a high-risk unsignalized intersection scenario test. The EV navigates northbound at $10 \,\text{m/s}$ through dense east-west traffic ($6–10 \,\text{m/s}$) along $y$ axis. The critical challenge arises when an oncoming southbound HV along the $x$ axis initially exhibits normal driving behavior before abruptly executing a right turn across the intended path of the EV. This scenario specifically tests two key capabilities: (1) proactive defensive maneuvers to account for trajectory prediction inaccuracies, and (2)  simultaneous management of multi-vehicle motion uncertainties while ensuring safety.

The trajectory analysis in Figs.~\ref{fig:intersection_snapshots}--\ref{fig:intersection_p} shows safe navigation of our method through anticipatory deceleration and controlled rightward avoidance maneuvers, while maintaining minimum $0.55 \,\text{m}$ clearance from adjacent vehicles, as shown in Fig. \ref{fig:intersection_min_dist}. This defensive strategy achieves the longest travel distance ($122.68 \,\text{m}$) among all methods, optimally balancing safety and efficiency.

Comparative analysis reveals distinct limitations of baseline methods. The ST-RHC approach, relying on deterministic constant-velocity predictions, fails to anticipate the right-turning HV’s maneuver, resulting in a collision due to rapidly shrinking feasible space between adjacent vehicles, ultimately leading to planner infeasibility. Parallel deficiencies emerge in the Deterministic Barrier Planner, which fails to adequately handle uncertain HV behaviors without FRS-based barrier constraints. 

While the Worst-Case Barrier Planner ensures safety through excessive early deceleration and complete yielding, this approach severely compromises efficiency by requiring the EV to wait for complete intersection clearance. This result highlights the necessity of our online update mechanism for FRS predictions, adapting to observed behaviors rather than assuming worst-case behaviors. 
Similarly, the Uncertainty-Aware Planner with direct FRS constraint incorporation results in overly cautious behaviors with substantial speed reduction and the shortest travel distance. This limitation originates from directly embedding FRS constraints in the ST-RHC framework while lacking contingency optimization.

These findings collectively underscore the capability of our method to preserve safe contingency trajectories with feasibility in uncertain traffic conflicts.



\textbf{3) Discussion.}
The effectiveness of the proposed method is affected by two critical parameters: the consensus steps parameter $N_s$ and the weight $p_s$. These parameters collectively determine how the vehicle balances immediate responsiveness with long-term performance in uncertain traffic scenarios.

$N_s$ controls the consensus horizon shared by both trajectories. Fig.~\ref{fig:consensus_step} shows smaller values ($N_s \leq 10$) enable more aggressive maneuvers by permitting earlier commitment to specific trajectory, with fixed $p_s=0.5$. This can improve efficiency in most cases, but potentially reduce safety margins when HVs exhibit unexpected behaviors. 
Conversely, larger values ($N_s \geq 20$) enforce conservative behaviors by maintaining both deterministic and FRS-based constraints longer, often causing preemptive deceleration until intentions clarify. As demonstrated in Fig.~\ref{fig:consensus_step}, when $N_s$ exceeds a certain threshold, the planner conservatively decelerates until the HV's intention becomes clear.

$p_s$ regulates contingency branch weighting in optimization. Here we fix $N_s=5$. As shown in Fig.~\ref{fig:prob}, larger $p_s$ values prioritize FRS-based safety barrier constraints in consensus segments, producing earlier and more pronounced deceleration. In contrast, smaller $p_s$ values approximate deterministic planning that assumes perfect HV trajectory prediction, yielding reduced yet adequate safe margins.

For practical deployment, we recommend tailoring parameter configurations to specific operational requirements and desired conservatism levels. Although the method does not incorporate explicit risk quantification, it ensures safety when HV behaviors stay within the updated uncertainty bounds. Our analysis confirms that $N_s$ and $p_s$   effectively regulate the critical tradeoff between conservatism and flexibility in uncertain traffic scenarios, allowing methodical adjustment of planning behavior across the safety-efficiency spectrum.

Finally, a sensitivity analysis of the barrier coefficient $\alpha$ is summarized in Table \ref{tab:alpha_sensitivity}. It controls the adjustment rate of barrier function $\mathscr{B}$ during planning. While smaller $\alpha$ values impose stricter safety margins (resulting in a marginal increase in $d_{\min}$), and larger values relax the constraint stiffness (slightly reducing peak jerk), the overall efficiency and comfort metrics remain stable across the spectrum.

\begin{figure}[!t]
\centering
\includegraphics[width=0.97\linewidth]{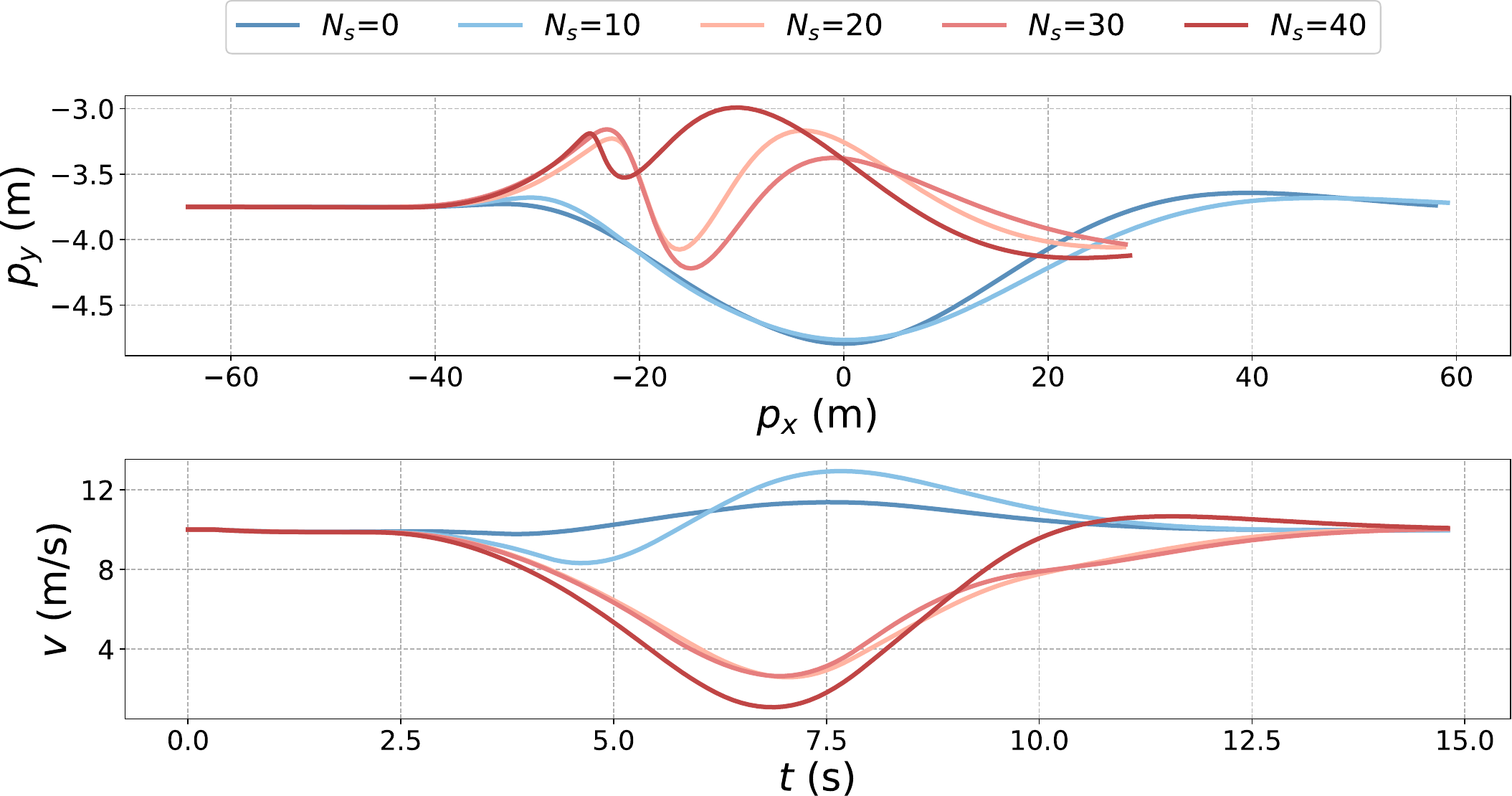} \vspace{-4mm}
\caption{Impact of the consensus step parameter $N_s$ on intersection navigation behaviors. Smaller values enable aggressive maneuvers while larger values enforce conservative strategies with preemptive deceleration.}
\label{fig:consensus_step}
\vspace{-3mm}
\end{figure}

\begin{figure}[!t]
\centering
\includegraphics[width=0.97\linewidth]{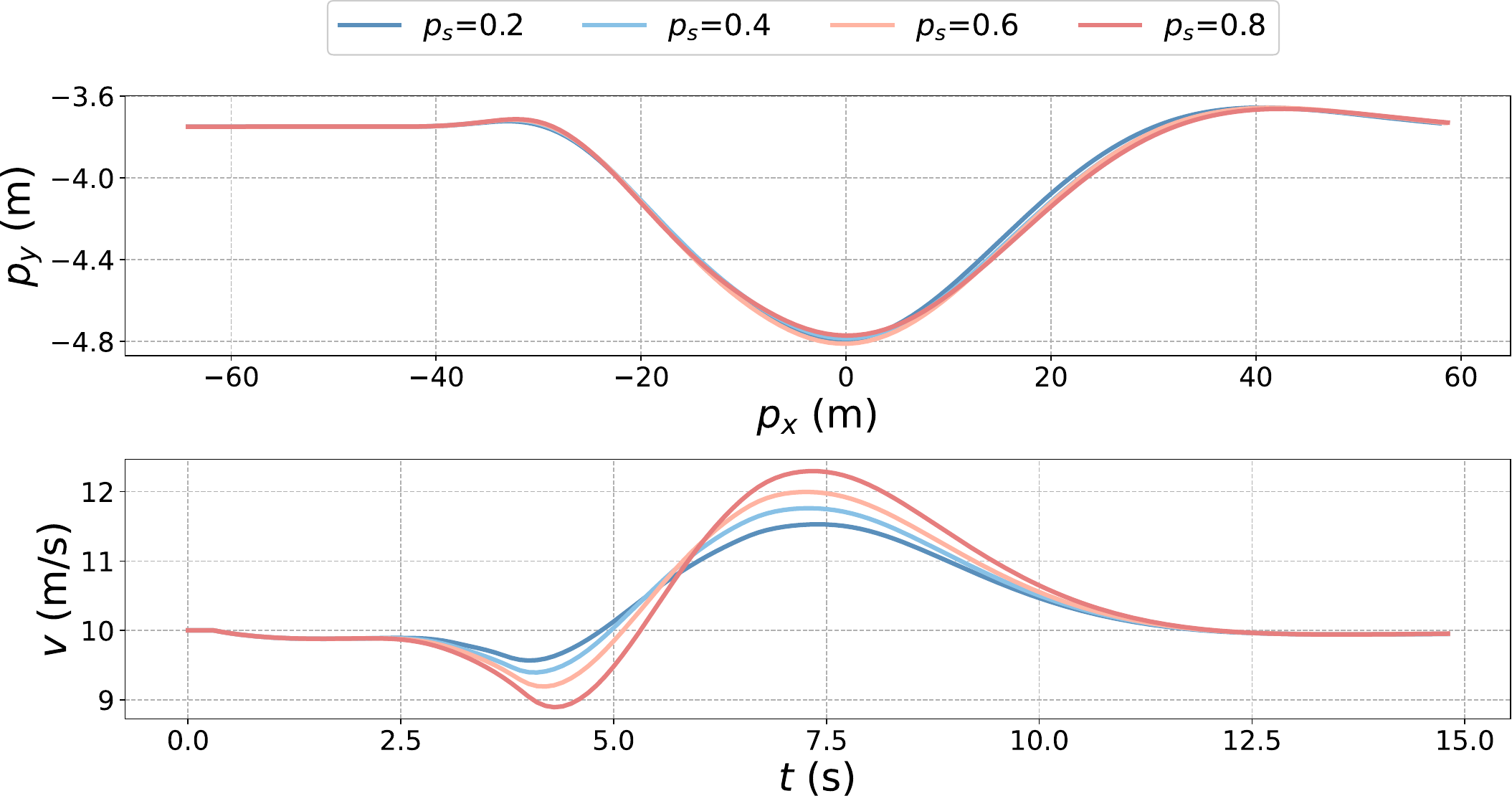} \vspace{-4mm}
\caption{Effect of the weighting parameter $p_s$ on intersection navigation behaviors. Higher values emphasize FRS-based safety constraints, resulting in more defensive behaviors.}
\label{fig:prob}
\vspace{-5.5mm}
\end{figure}

\vspace{-4mm}
\subsection{Experiment Results}\label{subsec:Exp}
To validate the practical efficacy of the proposed method, we conduct real-world experiments using 1:10-scale Ackermann-steering robotic platforms. The experimental setup consists of two robotic vehicles operating within an Optitrack motion capture system, with one serving as the EV and the other emulating the HV. The planning system, implemented on a laptop with computational specifications matching our simulation platform, receives state feedback through ROS Noetic via wireless communication and generates appropriate control commands. Trajectory tracking is achieved through a linear feedback controller that demonstrates sufficient tracking accuracy for our experimental purposes.

We design safety-critical overtaking scenarios to rigorously evaluate the adaptive responsiveness to evolving environmental uncertainties. These scenarios specifically assess its ability to generate safe yet nonconservative trajectories while maintaining robustness against unpredictable lane intrusions. In the experimental configuration, the EV navigates along the reference path at $p_y = 0\,\text{m}$ with a desired speed of $3\,\text{m}/\text{s}$ while attempting to overtake an HV moving from $p_y = -0.3$ m with varying degrees of aggressiveness. The experimental parameters are aligned with those used in the simulation.

\begin{figure}[tbp]
    \centering
    \subfigure[]{
        \includegraphics[width=0.97\columnwidth]{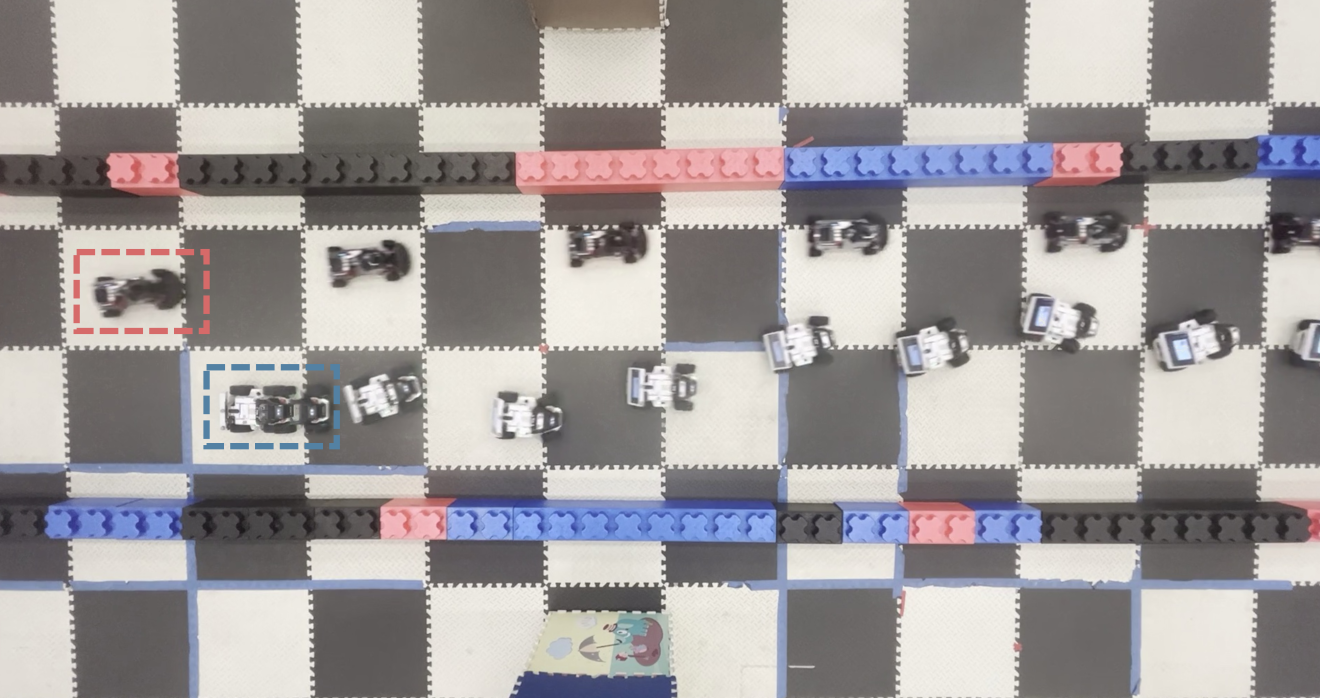} 
    }
    \subfigure[]{
        \includegraphics[width=0.97\columnwidth]{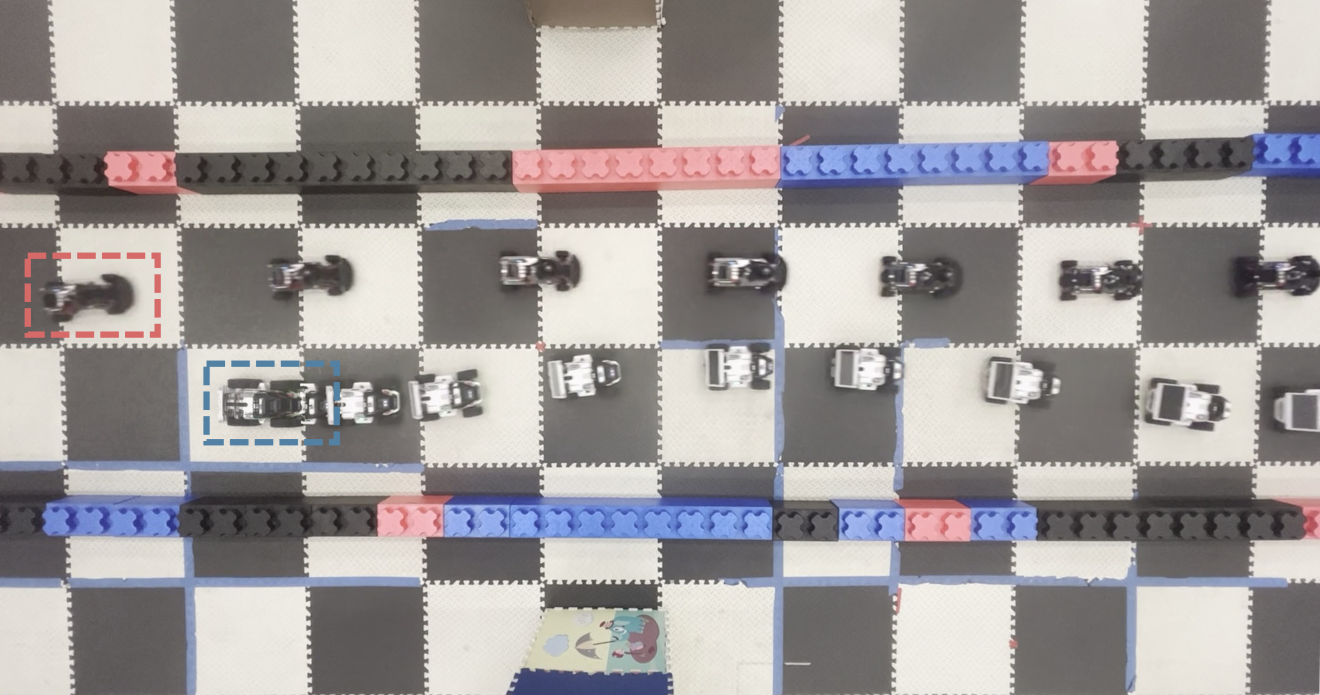}
    }
    \vspace{-3mm}
    \caption{Overlaying multiple frames of experiments showing (a) Scenario I: aggressive lane intrusion of HV, and (b) Scenario II: steady lane intrusion scenario of HV. The experiments highlight the adaptive safety assurance capabilities of the proposed method under varying HV uncertainties. The EV and HV are outlined with red and blue dashed boxes, respectively.}
    \label{fig:exp_snapshots}
    \vspace{-5mm}
\end{figure}

Fig. \ref{fig:exp_snapshots} shows the snapshots in two experimental scenarios that highlight the algorithm's performance under different HV uncertainties. In Scenario I, the HV exhibits aggressive, erratic behavior analogous to impaired driving, abruptly cutting into the EV's path at approximately $p_x = -2.8\, \text{m}$. Scenario II presents the contrasting case where the HV maintains relatively steady lane intrusion behavior. These scenarios encapsulate the fundamental safety challenges addressed by our method, particularly the EV's need to continuously estimate potential hazardous maneuvers without prior knowledge, dynamically update the HV's reachable sets for real-time safety assurance, and maintain viable contingency plans against possible lane intrusions.

The planned and executed trajectories are depicted in Fig. \ref{fig:exp_p}. In Scenario I, the HV's erratic motions, caused by intentional hardware modifications to induce unpredictable behavior, create significant uncertainty that prevents accurate trajectory prediction. Through online learning of the HV's control-intent set and subsequent FRS occupancy prediction, the EV successfully maintains both progress and safety. The resulting overtaking trajectory shows properly adaptive safety margin while effectively avoiding the HV's sudden lane intrusion towards $p_y = 0\, \text{m}$. Scenario II confirms the inherent adaptability of the proposed approach, where the relatively steady HV behavior results in smaller predicted reachable sets, enabling the planner to naturally generate more efficient overtaking trajectories with appropriately reduced conservatism.

Fig. \ref{fig:triggering} provides insight into the online adaptation mechanism governing the response to varying HV behaviors. The event $\|P_{u,m}^h u + q_{u,m}^h\|_2 \geq 1.0$ triggers the online learning of the control-intent sets given a newly collected control input vector $u$, leading to progressive expansion of set volume, computed by $\text{Vol}(\hat{\mathcal{U}}_m^h)=\pi\sqrt{\det(\Sigma_{u,m}^h)}$ \cite{boyd2004convex}. Scenario I demonstrates rapid growth of set volume corresponding to the HV's aggressive maneuvers, while Scenario II maintains smaller sets with relatively steady behaviors. This adaptive learning mechanism inherent in the FRS-based safety barrier enables the planner to automatically adjust safety margins according to the perceived driving uncertainty, providing a principled approach to balancing safety and efficiency.

\begin{figure}[tb]
\begin{center}
\includegraphics[width=1.\columnwidth]{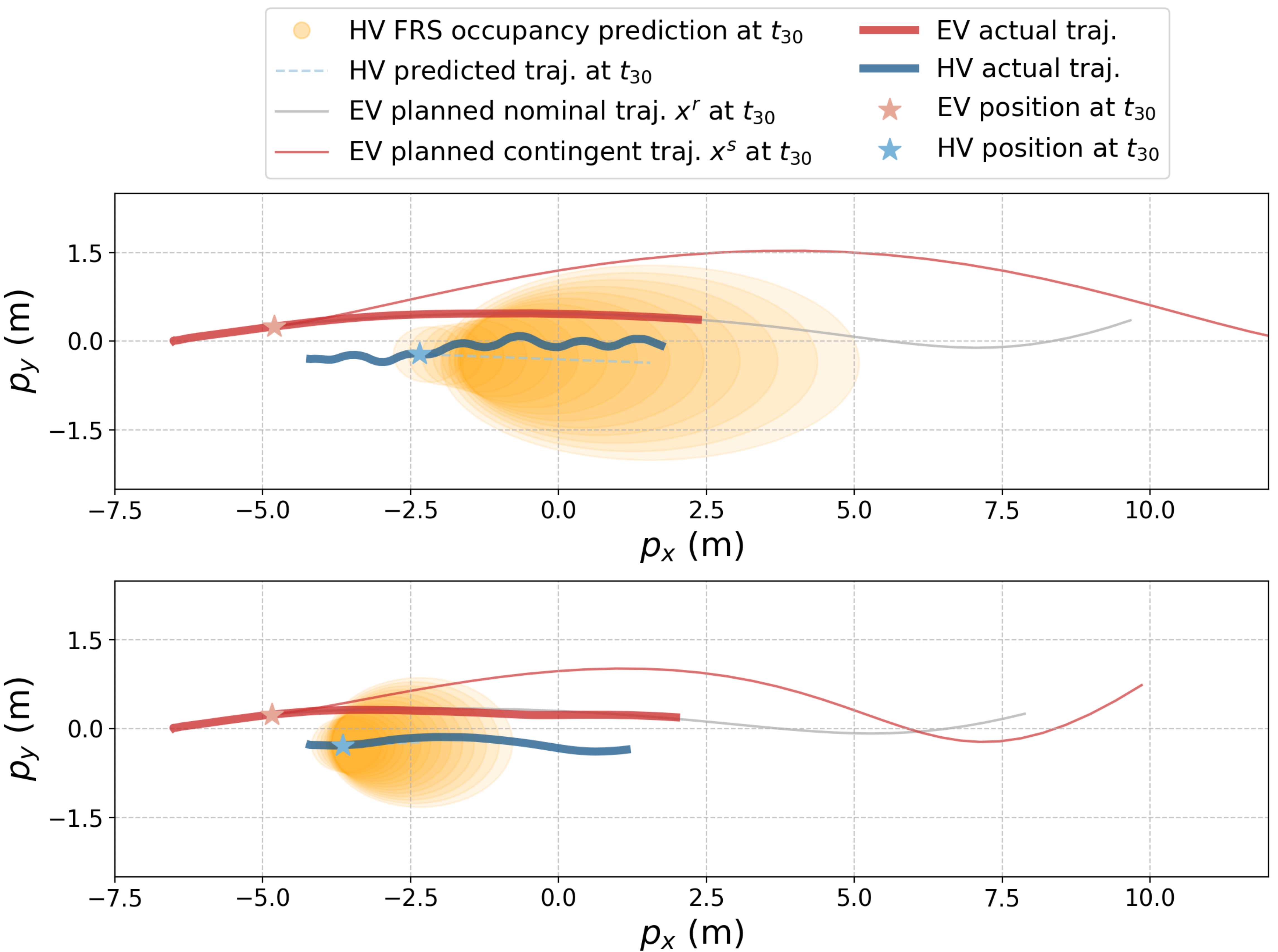}    \vspace{-5mm}
\caption{  
Overtaking trajectory comparison under different uncertainty levels: (Top) Scenario I with erratic HV shows larger safety margins against intrusion risks; (Bottom) Scenario II with steady HV demonstrates relieved overtaking maneuvers with reduced conservatism. Shaded regions indicate FRS predictions, showing adaptability to varying uncertainty conditions.
} 
\label{fig:exp_p}
\end{center}
\vspace{-8mm}
\end{figure} 

The physical experimental results demonstrate reliable and timely online update of varying FRS prediction through the event-triggered mechanism, which facilitates flexible adjustment of safety margins to match the perceived uncertain environment. Most importantly, the experimental outcomes verify that the planner generates nonconservative yet safe trajectories while maintaining defensive behaviors against sudden dangerous maneuvers, validating the practical applicability of our approach in safety-critical scenarios.

\vspace{-3mm}
\section{Conclusions}\label{sec:Conclusions}
\vspace{-1mm}
In this study, we present a real-time trajectory optimization framework that generates safe and nonconservative trajectories through FRS-based barrier constraints with event-triggered online adaptive refinement.
The method maintains feasible, safe trajectories under evolving uncertainties and avoids excessive conservatism or dangerous underestimation through incremental FRS updates. By decomposing the contingency optimization via consensus ADMM, the framework achieves computational efficiency for real-time implementation. Comprehensive validation demonstrates that the approach successfully achieves collision-free navigation with reduced jerks and maintains an average speed 14-20\% higher than other baselines, preserving safety with balanced driving efficiency in dense traffic scenarios. Future research will extend this framework by incorporating game-theoretic interaction models, which would lead to more natural cooperative behaviors in dense traffic.
\vspace{-5mm}

\begin{figure}[tb]
\begin{center}
\includegraphics[width=1.0\columnwidth]{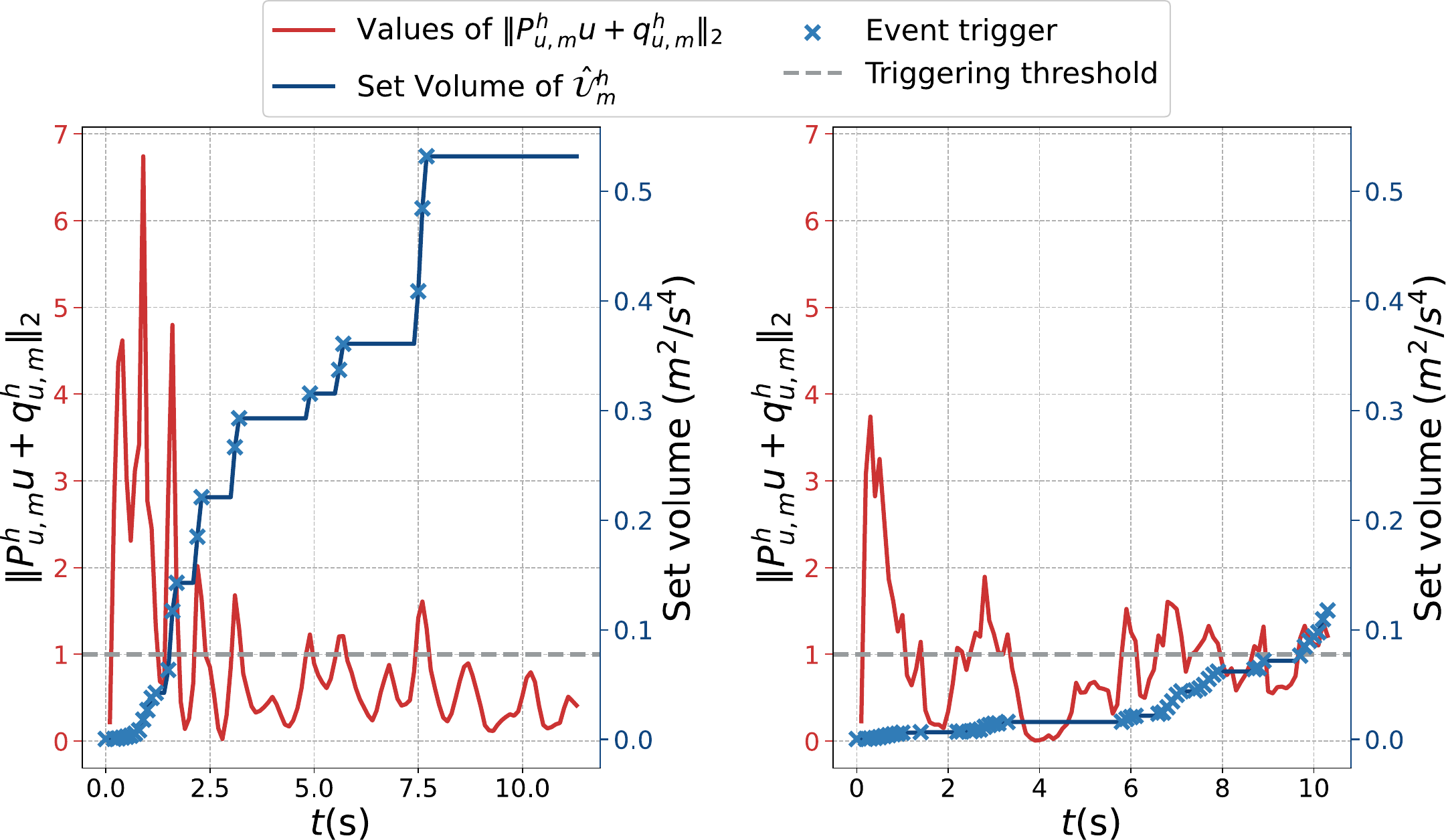}    \vspace{-5mm}
\caption{  
Evolution of control-intent set volume during online learning. The triggering condition prompts updates of control-intent set $\hat{\mathcal{U}}_m^h$, with Scenario I (left) showing a more rapid increase of set volume corresponding to aggressive behaviors than that in Scenario II (right).
} \vspace{-8mm}
\label{fig:triggering}
\end{center}
\end{figure} 
 

\bibliographystyle{IEEEtran}
\bibliography{MyBibliography}

\vfill

\end{document}